\theoremstyle{plain}
\newtheorem{theorem}{Theorem}[section]
\newtheorem{lemma}[theorem]{Lemma}
\newtheorem{corrolary}[theorem]{Corollary}
\theoremstyle{definition}
\theoremstyle{remark}
\newtheorem{remark}{Remark}
\newcommand{\rset}{\mathbf{R}}
\newcommand{\dist}{\text{dist}}
\newcommand{\dom}{\text{dom}}
\newcommand{\prox}{\text{prox}}
\newcommand{\T}{\mathcal{T}}
\newcommand{\K}{\mathcal{K}}
\newcommand{\N}{\mathcal{N}}
\newcommand{\BO}{\mathcal{O}}
\providecommand{\norm}[1]{\lVert#1\rVert}
\newcommand{\Max}[1]{\max\left\{ #1 \right\}}
\newcommand{\Min}[1]{\min \left\{ #1 \right\} }
\newcommand{\BigO}[1]{\BO\left( #1 \right)}
\newcommand{\be}{\begin{equation}}
	\newcommand{\ee}{\end{equation}}
\begin{document}
	
\articletype{ARTICLE}

\title{Complexity of Inexact Proximal Point Algorithm for minimizing convex functions with Holderian Growth}

\author{
	\name{Andrei P\u atra\c scu 	and 
		Paul Irofti  \textsuperscript{a} \thanks{Email: andrei.patrascu@fmi.unibuc.ro, paul@irofti.net} }
	\affil{\textsuperscript{a} Research Center for Logic, Optimization and Security (LOS), \\
		Department of Computer Science, Faculty of Mathematics and Computer Science, \\
		University of Bucharest, Academiei 14, Bucharest, Romania.}
}

\maketitle

		



%
%

\date{Received: date / Accepted: date}

\maketitle

\begin{abstract}
	
\noindent Several decades ago the Proximal Point	Algorithm (PPA) started to gain a long-lasting attraction for both abstract operator theory and numerical optimization communities. Even in modern applications, researchers still use proximal minimization theory to design scalable algorithms that overcome nonsmoothness. Remarkable works as \cite{Fer:91,Ber:82constrained,Ber:89parallel,Tom:11} established tight relations between the convergence behaviour of PPA and the regularity of the objective function. In this manuscript we derive nonasymptotic iteration complexity of exact and inexact PPA to minimize convex functions under $\gamma-$Holderian growth: $\BigO{\log(1/\epsilon)}$ (for $\gamma \in [1,2]$) and $\BigO{1/\epsilon^{\gamma - 2}}$ (for $\gamma > 2$). In particular, we recover well-known results on PPA:  finite convergence for sharp minima and linear convergence for quadratic growth, even under presence of deterministic noise. Moreover, when a simple Proximal Subgradient Method  is recurrently called as an inner routine for computing each IPPA iterate, novel computational complexity bounds are obtained for Restarting Inexact PPA. Our numerical tests show improvements over existing restarting versions of the Subgradient Method.  

\end{abstract}

\begin{keywords}
	Inexact proximal point, weak sharp minima, Holderian growth, finite termination.
\end{keywords}

\section{Introduction}\label{sec:intro}

The problem of interest of our paper formulates as the following convex nonsmooth minimization:
\begin{align}\label{problem_of_interest}
	F^* = \min_{x \in \rset^n} \;\{  F(x) := f(x) + \psi(x)  \}. 
\end{align}
Here we assume that $f: \rset^n \mapsto \rset^{}$ is convex and $\psi: \rset^n \mapsto (-\infty,\infty]$ is convex, lower semicontinuous and proximable. By proximable function we refer to those functions whose proximal mapping is computable in closed form or linear time. The above model finds plenty of applications from which we shortly mention compressed sensing \cite{BecTeb:09fista},  sparse risk minimization \cite{Lin:18,XiaLin:14}  and graph-regularized models \cite{YanEla:16}.
Dating back to '60s, the classical Subgradient Methods (SGM) \cite{Sho:62,Sho:64,Pol:67SM1,Pol:69SM2,Pol:87book} established $\BigO{1/\epsilon^2}$ iteration complexity for minimizing convex functions up to finding $\hat{x}$ such that $f(\hat{x}) - f^* \le \epsilon$. Despite the fact that this complexity order is unimprovable for the class of convex functions, particular growth or error bounds properties can be exploited to obtain better lower orders. Error bounds and regularity conditions have a long history in optimization, systems of inequalities or projection methods: \cite{Ant:94,BurFer:93,Fer:91,Pol:67SM1,Pol:69SM2,Pol:87book,BolNgu:17,Hu:16,Luo:93}. 
Particularly, in the seminal works \cite{Pol:78,Pol:87book} SGM is proved to converge linearly towards weakly sharp minima of $F$. The optimal solutions $X^*$ is a set of weak sharp minima (WSM) if there exists $\sigma_F > 0$ such that
\begin{align*}
WSM: \qquad	F(x) - F^* \ge \sigma_F \dist_{X^*}(x), \quad \forall x \in \dom {F}.
\end{align*}
Acceleration of other first-order algorithms has been proved under WSM in subsequent works as \cite{Ant:94,BurFer:93,Dav:18,Rou:20}. Besides acceleration, \cite[ Section 5.2.3]{Pol:87book} introduces the ``superstability" of sharp optimal solutions $X^*$: under small perturbations of the objective function $F$ a subset of the weak sharp minima $X^*$ remains optimal for the perturbed model. The superstability of WSM was used in \cite{Pol:78,Ned:10} to show the robustness of inexact SGM. In short, using low persistently perturbed subgradients at each iteration, the resulted perturbed SGM still converges linearly to $X^*$. In the line of these results, we also show in our manuscript that similar robustness holds for the proximal point methods under WSM.

Other recent works as \cite{Yan:18,Joh:20,Nec:19,Lu:20,Kor:76,Tom:11,Li:12,Hu:16,Fre:18,Luo:93,Gil:12,Ren:14,BolNgu:17,JudNes:14uniform} look at a suite of different growth regimes besides WSM and use them to improve the complexity of first-order algorithms. Particularly, in our paper we are interested in the $\gamma-$Holderian growth : let $\gamma \ge 1$
\begin{align*}
\gamma-HG: \qquad	F(x) - F^* \ge \sigma_F \dist_{X^*}^{\gamma}(x), \quad \forall x \in \dom {F}.
\end{align*}
Note that $\gamma-HG$ is equivalent to the Kurdyka–Łojaziewicz (KL)
inequality for convex, closed, and proper functions, as shown in \cite{BolNgu:17}.
It includes the class of uniformly convex functions analyzed in \cite{JudNes:14uniform} and,
obviously, it covers the sharp minima WSM, for $\gamma = 1$. 
The Quadratic Growth (QG), covered by $\gamma = 2$, was analyzed in a large suite of previous works \cite{Lu:20,Yan:18,Luo:93,Nec:19} and, although is weaker than strong convexity, it could be essentially exploited (besides Lipschitz gradient continuity) to show $\BigO{\log(1/\epsilon)}$ complexity of proximal gradient methods11. Our analysis recover similar complexity orders under the same particular assumptions.

\noindent Some recent works \cite{Yan:18,Joh:20,Gil:12,Fre:18,Ren:14} developed restarted SGM schemes, for minimizing convex functions under $\gamma$-HG or WSM, and analyzed their theoretical convergence and their natural dependence on the growth moduli $\gamma$ and $\sigma_F$. Restarted SubGradient (RSG) of \cite{Yan:18} and Decaying Stepsize - SubGradient (DS-SG) of \cite{Joh:20} present iteration complexity estimates of $\BigO{\log(1/\epsilon)}$ under WSM and $\BigO{\frac{1}{\epsilon^{2(\gamma-1)}}}$ bound under $\gamma-$(HG) in order to attain $\dist_{X^*}(x) \le \epsilon$. These bounds are optimal for bounded gradients functions, as observed by \cite{NemNes:85}. Most SGM schemes are dependent up to various degrees on the knowledge of problem information. For instance, RSG and DS-SG rely on lower bounds of optimal value $F^*$ and knowledge of parameters $\sigma_F, \gamma$ or other Lipschitz constants. The restartation is introduced in order to avoid the exact estimation of modulus $\sigma_F$. Also, our schemes allows estimations of problem moduli such as $\gamma, \sigma_F$, covering cases when these are not known. In the best case, when estimations are close to the true parameters, similar complexity estimates $\BigO{\frac{1}{\epsilon^{2(\gamma-1)}}}$ are provided in terms of subgradient evaluations. Moreover, by exploiting additional smooth structure we further obtain lower estimates.

The work of \cite{JudNes:14uniform} approach the constrained model, i.e. $\psi$ is the indicator function of a closed convex set, and assume $\gamma-$uniform convexity:
\begin{align*}
	f(\alpha x + (1-\alpha) y) \le \alpha f(x) +& (1-\alpha)f(y) \\
	 & - \frac{1}{2}\sigma_f \alpha(1-\alpha)[\alpha^{\gamma - 1} + (1-\alpha)^{\gamma - 1}]\norm{x-y}^{\gamma},
\end{align*}
for all feasible $x,y$ and $\gamma \ge 2$. The authors obtain optimal complexity bounds 
$\BigO{\sigma_f^{-2/\gamma} \epsilon^{-2(\gamma - 1)}}$ when the subgradients of $f$ are bounded. Moreover, their restartation technique are adaptive to growth modulus $\gamma$ and parameter $\sigma_F$, up to a fixed number of iterations.

Inherent for all SGMs, the complexity results of these works essentially requires the boundedness of the subgradients, which is often natural for nondifferentiable functions. However, plenty of convex objective functions coming from risk minimization, sparse regression or machine learning presents, besides their particular growth, a certain smoothness degree which is not compatible with the subgradient boundedness assumption. Enclosing the feasible domain in a ball is an artificial remedy used to further keep the subgradients bounded, which however might load the implementation with additional uncertain tuning heuristics. Our analysis shows how to exploit smoothness in order to improve the complexity estimates.

The analysis of \cite{Rou:20} investigates the effect of restarting over the optimal first-order schemes under $\gamma$-HG and $\nu$-Holder smoothness, starting from results of \cite{NemNes:85}. For $\psi =0$, $\epsilon-$suboptimality is reached after $\BigO{\log(1/\epsilon)}$ accelerated gradient iterations if $\nabla F$ is Lipschitz continuous and $2-$Holder growth holds, or after $\BigO{ 1/\epsilon^{\frac{\gamma-2}{2}}   }$ iterations when the growth modulus is larger than $2$. In general, if $\nabla F$ is $\nu-$Holder continuous, they restart the Universal Gradient Method and obtain an overall complexity of $\BigO{\log(1/\epsilon)}$ if $\gamma = \nu$, or $\BigO{ 1/\epsilon^{\frac{2(\gamma-\nu-1)}{2\nu-1}}   }$ if $\gamma > \nu$. Although these estimates are unimprovable and better than ours, in general the  implementation of the optimal schemes requires complete knowledge of growth and smoothness parameters. 

\vspace{5pt}

\noindent  Several decades ago the \textit{Proximal Point	Algorithm (PPA)} started to gain much attraction for both abstract operator theory and the numerical optimization communities. Even in modern applications, where large-scale nonsmooth optimization arises recurrently, practitioners still inspire from proximal minimization theory to design scalable algorithmic techniques that overcomes nonsmoothness. The powerful PPA  iteration consists mainly in the recursive evaluation of the proximal operator associated to the objective function.  The proximal mapping is based on the infimal convolution with a metric function, often chosen to be the squared Euclidean norm:
$ \prox_{\mu}^F(x) := \arg\min_z F(z) + \frac{1}{2\mu}\norm{z - x}^2.$
The Proximal Point recursion: 
\begin{align*}
x^{k+1} = \prox_{\mu}^F(x^k).
\end{align*}
became famous in optimization community when \cite{Roc:76,Roc:76augmented} and \cite{Ber:82constrained,Ber:89parallel} revealed its connection to various multipliers methods for constrained minimization, see also \cite{Nes:21,Nes:21b,Nem:04,Gul:91,Gul:92}. There are remarkable works that shown how the growth regularity is a key factor in the iteration complexity of PPA. 

Finite convergence of the exact PPA under WSM is proved by \cite{BurFer:93,Fer:91,Ant:94}.
Furthermore, in \cite{Ber:89parallel,Kor:76} can be found an extensive convergence analysis of the exact PPA and the Augmented Lagrangian algorithm under $\gamma-$(HG). Although the results and analysis are of a remarkable generality, they are of asymptotic nature (see \cite{Tom:11}). A nonasymptotic analysis is found in \cite{Tom:11}, where the equivalence between a Dual Augmented Lagrangian algorithm and a variable stepsize PPA is established. The authors analyze sparse learning models of the form: $\min_{x \in \rset^n} \; f(Ax) + \psi(x),$
where $f$ is twice differentiable with Lipschitz continuous gradient, $A$ a linear operator and $\psi$ a convex nonsmooth regularizer. Under $\gamma-$Holderian growth, ranging with $\gamma \in [1,2]$, they show nonasymptotic superlinear convergence rate of the exact PPA with exponentially increasing stepsize. 
For the inexact variant they kept further a slightly weaker superlinear convergence. 
The progress, from the asymptotic analysis of \cite{Roc:76,Kor:76} to a nonasymptotic one, is remarkable due to the simplicity of the arguments. However, a convergence rate of  inexact PPA (IPPA) could become irrelevant without quantifying the local computational effort spent to compute each iteration, since one inexact iteration of PPA requires the approximate solution the regularized  optimization problem. Among the remarkable references on inexact versions of various proximal algorithms are \cite{Mon:13,Sol:00,Sol:01,Sol:01b,Sol:99,Sol:99b,Lin:15,Lin:18,Mai:19,Shulgin:21,Nes:21,Nes:21b}.

We mention that, a small portion of the results on the WSM case, contained into this manuscript, has been recently published by the authors in \cite{PatIroLett:22}. However, we included it in the present manuscript for the sake of completeness.

\vspace{10pt}

\noindent \textbf{Contributions}. We list further our main contributions:

\noindent \textit{Inexact PPA under $\gamma-$(HG)}. We provide nonasymptotic iteration complexity bounds for IPPA to solve \eqref{problem_of_interest} under $\gamma-$HG, when $\gamma \ge 1$. In particular, we obtain $\BigO{\log(1/\epsilon)}$ for $\gamma \in [1,2]$ and, in the case of best parameter choice, $\BigO{1/\epsilon^{\gamma - 2}}$ for $\gamma > 2$, to attain $\epsilon$ distance to the optimal set. All these bounds require only convexity of the objective function $F$ and they are independent on any bounded gradients or smoothness. We could not find these nonasymptotic estimates in the literature for general $\gamma \ge 1$.

\vspace{5pt}

\noindent \textit{Restartation}. We further analyze the complexity bounds of restarting IPPA, that facilitates the derivation of better computational complexity estimates than the non-restarted IPPA. The complexity estimates have similar orders in both restarted and non-restarted algorithms for all $\gamma \ge 1$.

\vspace{5pt}

\noindent \textit{Total computational complexity}. 
We derive total complexity, including the inner computational effort spent at each IPPA iteration, in terms of number of inner (proximal) gradient evaluations.
If $f$ has $\nu-$Holder continuous gradients we obtain that, in the case of best parameter choice, there are necessary:
\begin{align*}
[\gamma = 1+\nu] \qquad	& \BigO{\log(1/\epsilon)}  \\
[\nu = 1] \qquad	&   \BigO{1/\epsilon^{\gamma - 2}}\\
[\nu = 0] \qquad	&   \BigO{1/\epsilon^{2(\gamma - 1)}}	
\end{align*}	
proximal (sub)gradient evaluations to approach to $\epsilon$ distance to the optimal set. As we discuss in the section \ref{sec:total_complexity}, the total complexity is dependent on various restartation variables. 

\vspace{5pt}

\noindent \textit{Experiments}. Our numerical experiments confirm a better behaviour of the restarted IPPA, that uses an inner subgradient method routine, in comparison with other two restarting strategies of classical Subgradient Method. We performed our tests on several polyhedral learning models that includes Graph SVM and Matrix Completion, using synthetic and real data.

\subsection{Notations and preliminaries}\label{sec:prelim}

\noindent Now we introduce the main notations of our manuscript. For $x,y \in \rset^n$ denote the scalar product  $\langle x,y \rangle = x^T y$ and Euclidean norm by $\|x\|=\sqrt{x^T x}$. The projection operator onto set $X$ is denoted by $\pi_X$ and the distance from $x$ to the set $X$ is denoted $\text{dist}_X(x) = \min_{z \in X} \norm{x-z}$. The indicator function of $Q$ is denoted by $\iota_{Q}$. 
Given function $h$, then by $h^{(k)}$ we denote the composition $h^{(k)}(x):= \underbrace{\left(h \circ h \circ \cdots h\right) }_{k \; \text{times}} (x)$.
We use $\partial h(x)$ for the subdifferential set and $h'(x)$ for a subgradient  of $h$ at $x$. In differentiable case, when $\partial h$ is a singleton, $\nabla h$ will be eventually used instead of $h'$. By $X^*$ we denote the optimal set associated to \eqref{problem_of_interest} and by \textit{$\epsilon-$ suboptimal point} we understand a point $x$ that satisfies $\dist_{X^*}(x) \le \epsilon$.

\noindent A function $f$ is called $\sigma-$strongly convex if the following relation holds:
$$ f(x) \ge f(y) + \langle f'(y), x - y\rangle + \frac{\sigma}{2}\norm{x-y}^2 \qquad \forall x,y \in \rset^n. $$
Let $\nu \in [0,1]$, then we say that a differentiable function $f$ has $\nu-$Holder continuous gradient with constant $L>0$ if :
\begin{align*}
\norm{f'(x) - f'(y)} \le L\norm{x-y}^{\nu} \quad
\forall x,y \in \rset^n. 
\end{align*}

\noindent Notice that when $\nu = 0$, the Holder continuity describes nonsmooth functions with bounded gradients, i.e. $\norm{f'(x)} \le L $ for all $x \in \dom(f)$. The $1-$Holder continuity reduces to $L-$Lipschitz gradient continuity. 

\noindent Given a convex function $f$, we denote its Moreau envelope \cite{Roc:76,Ber:89parallel,Roc:76augmented} with $f_{\mu}$ and its proximal operator with $\prox_{\mu}^f(x)$, defined by:
\begin{align*}
f_{\mu}(x) & = \min\limits_{z} \; f(z) + \frac{1}{2\mu}\norm{z -  x}^2 \\ 	
\prox_{\mu}^f(x) &= \arg\min_z \; f(z) + \frac{1}{2\mu}\norm{z -  x}^2. \end{align*} 
We recall the nonexpansiveness property of the proximal mapping \cite{Roc:76}:
\begin{align}\label{rel:prox_nonexpansiveness}
	\norm{ \prox_{\mu}^f(x) - \prox_{\mu}^f(y) } \le \norm{x-y} \quad \forall x,y \in \dom(f). 
\end{align} 
Basic arguments from \cite{Roc:76,Ber:89parallel,Roc:76augmented} show that the gradient $\nabla f_{\mu}$ is Lipschitz continuous with constant $\frac{1}{\mu}$ and satisfies:
\begin{align}\label{rel:smooth_grad_to_nonsmooth_grad}
	\nabla f_{\mu}(x) = \frac{1}{\mu}\left( x - \prox_{\mu}^f(x)\right) \in \partial f(\prox_{\mu}^f(x)). 
\end{align}
In the differentiable case, obviously $\nabla f_{\mu}(x) = \nabla f(\prox_{\mu}^f(x))$. 

\vspace{5pt}

\noindent \textit{Paper structure}. 
In section \ref{sec:Holder_growth} we analyze how the growth properties of $F$ are also inherited by $F_{\mu}$. The key relations on $F_{\mu}$ will become the basis for the complexity analysis.
In section \ref{sec:IPPA} we define the iteration of inexact Proximal Point algorithm and discuss its stopping criterion. The iteration complexity is presented in section \ref{sec:IPPA_complexity} for both the exact and inexact case. Subsequently, the restarted IPPA is defined and its complexity is presented. Finally, in section \ref{sec:total_complexity} we quantify the complexity of IPPA in terms of proximal (sub)gradient iterations and compare with other results. In the last section we compare our scheme with the state-of-the-art restarting subgradient algorithms.

\section{Holderian growth and Moreau envelopes}\label{sec:Holder_growth}

\noindent As discussed in the introduction, $\gamma$-HG relates tightly with  widely known regularity properties such as WSM \cite{Yan:18,Pol:78,Pol:87book,BurFer:93,Fer:91,Ant:94,Dav:18}, Quadratic Growth (QG) \cite{Yan:18,Lu:20,Nec:19}, Error Bound \cite{Luo:93}  and Kurdika-Lojasiewicz inequality \cite{BolNgu:17,Yan:18}. 

\noindent Next we show how the Moreau envelope of a given convex function inherits its growth properties over its entire domain excepting a certain neighborhood of the optimal set. Recall that $\min_x F(x) = \min_x F_{\mu}(x)$.


\begin{lemma}\label{lemma:deterministic_moreau_growth}
Let $F$ be a convex function and let $\gamma-$(HG) hold. Then the Moreau envelope $F_{\mu}$ satisfies the relations presented below.

\noindent $(i)$ Let $\gamma = 1$ WSM:
\begin{align*}
F_{\mu}(x) - F^* \ge H_{\sigma^2_F\mu}(\sigma_F \dist_{X^*}(x)),	
\end{align*}
where $H_{\tau}(s) = \begin{cases}s  - \frac{\tau}{2}, & s > \tau \\ 
	\frac{1}{2\tau}s^2, & s \le \tau
\end{cases}$ is the Huber function.

\noindent $(ii)$ Let $\gamma = 2$:
\begin{align*}
	F_{\mu}(x) - F^* \ge 		\frac{\sigma_F}{1 + 2\sigma_F \mu}\dist_{X^*}^2(x).
\end{align*}

\noindent $(iii)$ For all $\gamma \ge 1$:
\begin{align*}
	F_{\mu}(x) - F^* \ge\varphi(\gamma) \min \left\{   \sigma_F\dist^{\gamma}_{X^*}(x), \frac{1}{2\mu}\dist^{2}_{X^*}(x) \right\},
\end{align*}
where $\varphi(\gamma) =  \min_{\lambda \in [0,1]} \lambda^{\gamma} + (1-\lambda)^2$.

\end{lemma}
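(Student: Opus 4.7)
The plan is to reduce all three claims to a single constrained optimization on $\mathbf{R}_+^2$, and then evaluate or bound the minimum in the three regimes of $\gamma$.

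Fix $x\in\mathbf{R}^n$, let $z^*=\prox_{\mu}^F(x)$, and set $D=\dist_{X^*}(x)$, $d=\dist_{X^*}(z^*)$, $t=\|z^*-x\|$. The starting identity is
\begin{equation*}
F_{\mu}(x)-F^* \;=\; \bigl(F(z^*)-F^*\bigr) + \tfrac{1}{2\mu}\|z^*-x\|^2 \;\ge\; \sigma_F d^{\gamma} + \tfrac{1}{2\mu}t^2,
\end{equation*}
where the inequality is $\gamma$-HG applied at $z^*$. The triangle inequality for $\dist_{X^*}(\cdot)$ gives the coupling constraint $d+t\ge D$, and $d,t\ge 0$. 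Thus
\begin{equation*}
F_{\mu}(x)-F^* \;\ge\; \min_{d,t\ge 0,\; d+t\ge D}\; \sigma_F d^{\gamma} + \tfrac{1}{2\mu}t^2,
\end{equation*}
and the three claims follow from evaluating this minimum. Since the integrand is increasing in each variable, the minimum is attained on $d+t=D$, so I would substitute $d=\lambda D$, $t=(1-\lambda)D$ with $\lambda\in[0,1]$ and reduce to a one-dimensional minimization.

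For (iii) I would use the crude but clean bound
\begin{equation*}
\sigma_F\lambda^{\gamma}D^{\gamma} + \tfrac{1}{2\mu}(1-\lambda)^2 D^2 \;\ge\; \min\!\left\{\sigma_F D^{\gamma},\;\tfrac{D^2}{2\mu}\right\}\bigl(\lambda^{\gamma}+(1-\lambda)^2\bigr),
\end{equation*}
and then take $\min_{\lambda\in[0,1]}$ of the second factor, which is exactly $\varphi(\gamma)$. For (ii), the problem with $\gamma=2$ is an unconstrained quadratic in $\lambda$; the first-order condition $2\sigma_F\mu\lambda=1-\lambda$ gives $\lambda^*=1/(1+2\sigma_F\mu)$, and substituting back yields $\sigma_F D^2/(1+2\sigma_F\mu)$ after a short simplification. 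For (i), the objective $\sigma_F d+\tfrac{1}{2\mu}t^2$ is linear in $d$, so I would minimize in $t$ on $[0,D]$ after substituting $d=D-t$: the unconstrained minimizer is $t=\sigma_F\mu$. If $D>\sigma_F\mu$ this lies in the feasible interval and the value is $\sigma_F D-\sigma_F^2\mu/2$; if $D\le\sigma_F\mu$ the minimum is at the boundary $t=D$, $d=0$, with value $D^2/(2\mu)$. Rewriting with $s=\sigma_F D$ and $\tau=\sigma_F^2\mu$ recovers exactly $H_{\tau}(s)$.

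The only nontrivial point is the reduction itself: one must see that the coupling $d+t\ge D$ from the triangle inequality is the only link between the growth-based term $\sigma_F d^{\gamma}$ and the quadratic penalty $t^2/(2\mu)$. After that, each case is a short one-variable computation, and (iii) requires the additional idea of factoring out $\min\{\sigma_F D^{\gamma}, D^2/(2\mu)\}$ so that $\varphi(\gamma)$ appears as a dimensionless constant independent of $\sigma_F$, $\mu$, and $D$.
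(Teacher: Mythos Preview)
Your proof is correct and follows essentially the same route as the paper's: both reduce to the one-parameter minimization $\min_{\lambda\in[0,1]}\;\sigma_F\lambda^{\gamma}D^{\gamma}+\tfrac{(1-\lambda)^2}{2\mu}D^{2}$ and then treat the three cases identically (explicit minimization for $\gamma=1,2$, and the factoring bound for general $\gamma$). The only difference is the reduction step: the paper minimizes $\sigma_F\|z-y\|^{\gamma}+\tfrac{1}{2\mu}\|z-x\|^{2}$ over $z\in\mathbf{R}^n$, $y\in X^*$ and invokes the first-order optimality condition to conclude that the minimizer lies on the segment $[x,y]$, whereas your triangle-inequality reduction to the scalar pair $(d,t)$ with $d+t\ge D$ bypasses this geometric argument entirely---a slightly more elementary path to the same one-dimensional problem.
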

\begin{proof}
The proof can be found in the Appendix.
\end{proof}

\noindent  It is interesting to remark that the Moreau envelope $F_{\mu}$ inherits a similar growth landscape as $F$ outside a given neighborhood of the optimal set. For instance, under WSM, outside the tube $\N(\sigma_F\mu) = \{x \in \rset^n: \; \dist_{X^*}(x) \le \sigma_F\mu \}\}$, the Moreau envelope $F_{\mu}$ grows sharply.  
Inside of $\mathcal{N}(\sigma_F \mu)$ it grows quadratically which, unlike the objective function $F$, allows the gradient to get small near to the optimal set. This separation of growth regimes suggests that first-order algorithms that minimize $F_{\mu}$ would reach very fast the region $\mathcal{N}(\sigma_F\mu)$, allowing large steps in a first phase and subsequently, they would slow down in the vicinity of the optimal set.

This discussion extends to general growths when $\gamma > 1$, where a similar separation of behaviours holds for appropriate neighborhoods. Note that when $F$ has quadratic growth with constant $\sigma_F$, also the envelope $F_{\mu}$ satisfies a quadratic growth with a smaller modulus $\frac{\sigma_F}{1 + 2\sigma_F \mu}$. 

  	

\begin{remark} 
It will be useful in the subsequent sections to recall the connection between Holderian growth and Holderian error bound under convexity.
\noindent Observe that by a simple use of convexity into $\gamma$-HG, we obtain for all $x \in \dom F$ 
\begin{align}\label{Holder_error_bound_steps}
\sigma_F \dist_{X^*}^{\gamma}(x) \le F(x) - F^* \le \langle F'(x), x - \pi_{X^*}(x^*) \rangle  \le \norm{F'(x)}\dist_{X^*}(x),
\end{align}
which immediately turns into the following error bound:
\begin{align}\label{Holder_error_bound}
\sigma_F \dist_{X^*}^{\gamma-1}(x) \le  \norm{F'(x)} 	\qquad x \in \dom{F}.
\end{align}
Under WSM, by replacing $x$ with $\prox_{\mu}^F(x)$ into \eqref{Holder_error_bound} and by using property \eqref{rel:smooth_grad_to_nonsmooth_grad} pointing that $ \nabla F_{\mu}(x) \in \partial F(\prox_{\mu}^F(x))$, we obtain as well a similar bound on $\norm{\nabla F_{\mu}(\cdot)}$ at non-optimal points:
	\begin{align}\label{gamma1_Env_Holder_error_bound}
		\sigma_F \le \norm{\nabla F_{\mu}(x)}  \quad \forall x \notin X^*.
	\end{align}
This is the traditional key relation for the finite convergence of PPA. Under $\gamma-$HG, starting from Lemma \ref{lemma:deterministic_moreau_growth}$(iii)$, by using convexity of $F_{\mu}$ and, further, by following similar inequalities as in \eqref{Holder_error_bound_steps}, another error bound is obtained:   
\begin{align}\label{gamma_all_Env_Holder_error_bound}
\dist_{X^*}(x)	 \le \Max{ \left[ \frac{1}{\sigma_F \varphi(\gamma)}\norm{\nabla F_{\mu}(x)} \right]^{\frac{1}{\gamma - 1}} , \frac{2\mu}{\varphi(\gamma)}\norm{\nabla F_{\mu}(x)}   } \quad \forall x.
\end{align}
\end{remark}

\noindent In the following section we start with the analysis of exact and inexact PPA. Aligning to old results on the subgradient algorithms back to \cite{Pol:78}, we illustrate the robustness induced by the weak sharp minima regularity.


\section{Inexact Proximal Point algorithm}\label{sec:IPPA}



\noindent The basic exact PPA iteration is shortly described as 
\begin{align*}
x^{k+1} = \prox_{\mu}^F(x^k).
\end{align*} 
Recall that by \eqref{rel:smooth_grad_to_nonsmooth_grad} one can express $\nabla F_{\mu}(x^k) = \frac{1}{\mu}(x^k - \prox_{\mu}^F(x^k))$, which makes PPA equivalent with the constant stepsize Gradient Method (GM) iteration: 
\begin{align}\label{Gradient_Method_PPA}
x^{k+1} = x^k - \mu \nabla F_{\mu}(x^k).
\end{align}
Since our reasoning from below borrow simple arguments from classical GM analysis, we will use further \eqref{Gradient_Method_PPA} to express PPA.
It is realistic not to rely on explicit $\prox_{\mu}^F(x^k)$, but an approximated one to a fixed accuracy. By using such an approximation, one can form an approximate gradient $\nabla_{\delta} F_{\mu}(x^k)$ and interpret IPPA as an inexact Gradient Method. Let $x \in \dom{F}$, then a basic $\delta-$ approximation of $\nabla F_{\mu}(x)$ is  
\begin{align}\label{inexactness_criterion}
\nabla_{\delta} F_{\mu}(x):=\frac{1}{\mu}\left(x  - \tilde{z} \right), \quad \text{where} \quad \norm{\tilde{z} - \prox_{\mu}^F(x)} \le \delta.
\end{align}

\noindent Other works as \cite{Roc:76,Sal:12,Sol:00,Sol:01} promotes similar approximation measures for inexact first order methods. 
\noindent Now we present the basic IPPA scheme with constant stepsize.


\begin{algorithm}
	\caption{Inexact Proximal Point Algorithm($x^0,\mu,\{\delta_k\}_{k \ge 0}, \epsilon$)}
	Initialize $\; k: = 0$\\
	\While{  $\norm{\nabla_{\delta_k} F(x^k)} > \epsilon$ or $\delta_k > \frac{\epsilon}{\mu}$}{
		$\text{Given $x^k$ compute} \;\; x^{k+1} \; \text{such that}: \; \norm{x^{k+1} - \prox_{\mu}^F(x^k)} \le \delta_k$ \\
		$k := k+1 $	\\
	}
	Return $x^k$ 
\end{algorithm}
\noindent There already exist a variety of relative or absolute stopping rules in the literature for the class of first-order methods \cite{Tom:11,Sal:12,Pol:87book,Hum:05}. 
However, since bounding the norms of gradients is commonly regarded as one of the simplest optimality measures, we use the following: 
\begin{align}\label{stopping_criterion}
\norm{\nabla_{\delta_k} F(x^k)} < \epsilon \qquad \delta_k \le \frac{\epsilon}{\mu}
\end{align}
which is computable by the nature of the iteration. The next result shows the relation between \eqref{stopping_criterion} and the distance to optimal set.

\begin{lemma}\label{lemma:morenvgrad_bound_relating_with_distopt}
Let $\mu,\delta > 0$ and assume $x \notin X^*$ then:

\noindent $(i)$ Let $\gamma = 1$, if $\norm{\nabla_{\delta} F_{\mu}(x^k)}  + \frac{\delta}{\mu}< \sigma_F$, then 
\begin{align}\label{direct_distance_bound}
\dist_{X^*}(x^k) \le 
\mu \norm{\nabla_{\delta} F_{\mu}(x^k)}  + \delta \quad \text{and} \quad \prox_{\mu}^F(x^k) = \pi_{X^*}(x^k).
\end{align}

\noindent $(ii)$ Let $\gamma > 1$, then:
$\;\;	\dist_{X^*}(x^k) \le 
		\Max{\left[ \frac{\mu \norm{\nabla_{\delta} F_{\mu}(x^k)}  + \delta}{\mu\sigma_F \varphi(\gamma)} \right]^{\frac{1}{\gamma  - 1}}, \frac{2\mu \norm{\nabla_{\delta} F_{\mu}(x^k)}  + \delta}{\varphi(\gamma)}}.$
\end{lemma}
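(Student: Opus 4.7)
The overall plan is to exploit the identity $\nabla F_{\mu}(x^k) = \frac{1}{\mu}(x^k - \prox_{\mu}^F(x^k))$ from \eqref{rel:smooth_grad_to_nonsmooth_grad} together with the inexactness criterion \eqref{inexactness_criterion}, which together immediately give the quantitative approximation
\begin{align*}
\norm{\nabla F_{\mu}(x^k) - \nabla_{\delta} F_{\mu}(x^k)} = \tfrac{1}{\mu}\norm{\prox_{\mu}^F(x^k) - \tilde{z}} \le \tfrac{\delta}{\mu}.
\end{align*}
Once this is available, the two claims follow by feeding the triangle-inequality bound $\norm{\nabla F_{\mu}(x^k)} \le \norm{\nabla_{\delta} F_{\mu}(x^k)} + \delta/\mu$ into the two error bounds \eqref{gamma1_Env_Holder_error_bound} and \eqref{gamma_all_Env_Holder_error_bound} already derived in the preceding remark.

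For part $(i)$ I would argue by contradiction. Assume $\prox_{\mu}^F(x^k) \notin X^*$; then \eqref{gamma1_Env_Holder_error_bound} applied at $x^k$ yields $\sigma_F \le \norm{\nabla F_{\mu}(x^k)} \le \norm{\nabla_{\delta} F_{\mu}(x^k)} + \delta/\mu$, which contradicts the hypothesis $\norm{\nabla_{\delta} F_{\mu}(x^k)} + \delta/\mu < \sigma_F$. Hence $\prox_{\mu}^F(x^k) \in X^*$. Since $F$ is constant on $X^*$, the proximal subproblem reduces on $X^*$ to minimizing $\norm{z - x^k}^2$, and so $\prox_{\mu}^F(x^k) = \pi_{X^*}(x^k)$. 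The distance bound then comes from a single application of the triangle inequality:
\begin{align*}
\dist_{X^*}(x^k) = \norm{x^k - \prox_{\mu}^F(x^k)} \le \norm{x^k - \tilde{z}} + \norm{\tilde{z} - \prox_{\mu}^F(x^k)} \le \mu \norm{\nabla_{\delta} F_{\mu}(x^k)} + \delta,
\end{align*}
using $\norm{x^k - \tilde{z}} = \mu \norm{\nabla_{\delta} F_{\mu}(x^k)}$ from the definition of the inexact gradient.

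For part $(ii)$ the proof is essentially a direct substitution. I would apply \eqref{gamma_all_Env_Holder_error_bound} at $x^k$ and replace $\norm{\nabla F_{\mu}(x^k)}$ on the right-hand side by its upper bound $(\mu \norm{\nabla_{\delta} F_{\mu}(x^k)} + \delta)/\mu$, distributing the $1/\mu$ inside both terms of the max. The first term becomes $\bigl[(\mu \norm{\nabla_{\delta} F_{\mu}(x^k)} + \delta)/(\mu \sigma_F \varphi(\gamma))\bigr]^{1/(\gamma-1)}$ and the second becomes $2(\mu \norm{\nabla_{\delta} F_{\mu}(x^k)} + \delta)/\varphi(\gamma)$, matching the stated bound.

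There is no truly hard step here; the main conceptual point, and the only place requiring a little care, is the identification $\prox_{\mu}^F(x^k) = \pi_{X^*}(x^k)$ in part $(i)$, which is what lets us convert the error bound inequality into an actual \emph{finite termination} claim on the proximal step (this is the nonsmooth analog of $\norm{x^k - \pi_{X^*}(x^k)}$ being controlled by a single surrogate gradient norm). Everything else is triangle inequality and plug-in.
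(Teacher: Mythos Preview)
Your proposal is correct and follows essentially the same route as the paper: triangle inequality to pass from $\nabla_{\delta}F_{\mu}$ to $\nabla F_{\mu}$, then invoke \eqref{gamma1_Env_Holder_error_bound} (via contradiction) for part $(i)$ and \eqref{gamma_all_Env_Holder_error_bound} for part $(ii)$. The only cosmetic difference is in justifying $\prox_{\mu}^F(x^k)=\pi_{X^*}(x^k)$: the paper first proves the auxiliary bound $\mu\norm{\nabla F_{\mu}(x)}\le \dist_{X^*}(x)$ and uses it to conclude the prox point is nearest, whereas your ``$F$ is constant on $X^*$'' argument gets there more directly; also note your second term in $(ii)$ comes out as $(2\mu\norm{\nabla_{\delta}F_{\mu}(x^k)}+2\delta)/\varphi(\gamma)$, which agrees with the paper's own proof (the statement's $+\delta$ appears to be a typo).
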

\begin{proof}
The proof can be found in the Appendix.
\end{proof}

\noindent If $X^*$ are weakly sharp minimizers, the above lemma states that, for sufficiently small $\delta$ and $\norm{\nabla_{\delta } F_{\mu}(x^k)}$ one can directly guarantee that $x^k$ is close to $X^*$. From another viewpoint, Lemma \ref{lemma:morenvgrad_bound_relating_with_distopt} also suggests for WSM that a sufficiently large $\mu > \frac{\dist_{X^*}(x^0)}{\sigma_F}$ provides $\prox_{X^*}(x^0) = \pi_{X^*}(x^0)$ and a $\delta-$optimal solution is obtained as the output of the first IPPA iteration. A similar result stated in \cite{Fer:91}, guarantees the existence of a sufficiently large smoothing value $\mu$ which makes PPA to converge in a single (exact) iteration.


\section{Iteration complexity of IPPA}\label{sec:IPPA_complexity}

For reasons that will be clear later, note that some of the results from below are given for constant inexactness noise $\delta_k = \delta$. The Holderian growth leads naturally to recurrences on the residual distance to the optimal set, that allow us to give direct complexity bounds.

%
%

\begin{theorem}\label{th:IPPconvergence}
Let $F$ be convex and $\gamma-$HG hold.  

\noindent $(i)$ Under sharp minima $\gamma = 1$, let $\{\delta_k\}_{k \ge 0}$ nonincreasing and assume $\dist_{X^*}(x^{0}) \ge \mu \sigma_F$, then:
\begin{align*}
\;\; \dist_{X^*}(x^{k}) 
\le  \max\left\{\dist_{X^*}(x^{0}) - \sum\limits_{i=0}^{k-1}(\mu\sigma_F - \delta_i), \delta_{k-1}  \right\}. 
\end{align*}
	
\noindent $(ii)$ Under quadratic growth $\gamma = 2$, let $\sum\limits_{i \ge  0} \delta_i < \Gamma < \infty$  then:
\begin{align*}
\dist_{X^*}(x^{k}) 
\le   \left[\frac{1}{1 + 2\mu \sigma_F}\right]^{\frac{k-4}{4}} ( \dist_{X^*}(x^{0}) +  \Gamma )  + \left(1 + \frac{1}{\sqrt{1+2\mu\sigma_F}-1}\right) \delta_{\lceil \frac{k}{2} \rceil + 1} .
\end{align*}
	
\noindent $(iii)$ Let $\gamma-$HG hold. 
Then the following convergence rate holds:
\begin{align*}
\dist_{X^*}(x^k) 
	&\le 
		\max\left\{h^{(k)}(\dist_{X^*}(x^0)), \bar{\delta}_k \right\}
\end{align*}	
where $ h(r) = 
\begin{cases}
	\Max{r - \frac{\mu\varphi(\gamma)\sigma_F}{2} r^{\gamma-1}, \frac{1+\sqrt{1-\varphi(\gamma)}}{2}r }, &  \text{if} \; \gamma \in (1,2) \\ 
	\Max{r - \frac{\mu\varphi(\gamma)\sigma_F}{2} r^{\gamma-1}, \frac{1+\sqrt{1-\varphi(\gamma)}}{2}r ,\left( 1- \frac{1}{\gamma - 1}\right)r}, &  \text{if} \; \gamma > 2.
\end{cases}$
 and
$\bar{\delta}_k = \Max{  h(\bar{\delta}_{k-1}), \left(\frac{2\delta_k}{\mu\varphi(\gamma)\sigma_F}\right)^{\frac{1}{\gamma -1}},\frac{2\delta_k}{1-\sqrt{1-\varphi(\gamma)}}   }.$	
\end{theorem}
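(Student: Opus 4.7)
The common template for all three parts is to first analyse the \emph{exact} PPA step $\tilde x^{k+1}=\prox_\mu^F(x^k)$, obtaining a recursion of the form $\dist_{X^*}(\tilde x^{k+1})\le g_\gamma(\dist_{X^*}(x^k))$, and then to lift it to the inexact iterate through the triangle inequality $\dist_{X^*}(x^{k+1})\le \dist_{X^*}(\tilde x^{k+1})+\delta_k$, which follows directly from the stopping criterion \eqref{inexactness_criterion}. Throughout, I will write $d_k:=\dist_{X^*}(x^k)$ and use the gradient-method interpretation \eqref{Gradient_Method_PPA} together with the subgradient inclusion $\nabla F_\mu(x^k)\in \partial F(\tilde x^{k+1})$ from \eqref{rel:smooth_grad_to_nonsmooth_grad}.

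\paragraph{Part (i).} Let $g=\nabla F_\mu(x^k)$ and $\pi=\pi_{X^*}(x^k)$. Expanding $\|\tilde x^{k+1}-\pi\|^2=\|x^k-\mu g-\pi\|^2$ and using (a) convexity of $F$ to get $\langle g,\tilde x^{k+1}-\pi\rangle\ge F(\tilde x^{k+1})-F^*\ge \sigma_F\dist_{X^*}(\tilde x^{k+1})$ (via WSM), and (b) $\|g\|\ge \sigma_F$ from \eqref{gamma1_Env_Holder_error_bound}, the cross term produces $\|\tilde x^{k+1}-\pi\|^2+2\mu\sigma_F\dist_{X^*}(\tilde x^{k+1})+\mu^2\sigma_F^2\le d_k^2$, i.e. $(\dist_{X^*}(\tilde x^{k+1})+\mu\sigma_F)^2\le d_k^2$, yielding the clean decrement $\dist_{X^*}(\tilde x^{k+1})\le\max\{d_k-\mu\sigma_F,0\}$. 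Triangle inequality then gives $d_{k+1}\le \max\{d_k-\mu\sigma_F+\delta_k,\delta_k\}$. A short induction using monotonicity of $\{\delta_k\}$ and the assumption $d_0\ge\mu\sigma_F$ unrolls this to the telescoping sum claimed in (i).

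\paragraph{Part (ii).} The same derivation, but now replacing WSM by $F(\tilde x^{k+1})-F^*\ge \sigma_F \dist^2_{X^*}(\tilde x^{k+1})$, leads to $(1+2\mu\sigma_F)\dist^2_{X^*}(\tilde x^{k+1})\le d_k^2$, i.e.\ a linear contraction with rate $\rho=1/\sqrt{1+2\mu\sigma_F}$. Adding the inexactness term gives $d_{k+1}\le \rho d_k+\delta_k$, hence $d_k\le \rho^k d_0+\sum_{i=0}^{k-1}\rho^{k-1-i}\delta_i$. To obtain the stated form I split the noise sum at $\lceil k/2\rceil$: the early half $\sum_{i\le \lceil k/2\rceil}\rho^{k-1-i}\delta_i$ is bounded by $\rho^{\lceil k/2\rceil -1}\Gamma$ via summability, while the late half is bounded by $\delta_{\lceil k/2\rceil+1}\sum_i \rho^{k-1-i}\le \delta_{\lceil k/2\rceil+1}/(1-\rho)$ using nonincreasingness. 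The constant $1+1/(\sqrt{1+2\mu\sigma_F}-1)$ is exactly $1/(1-\rho)$ after simplification.

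\paragraph{Part (iii) (main obstacle).} Here I switch to the Moreau-envelope viewpoint and combine the smooth-descent inequality for $F_\mu$ with convexity to get, for $y\in X^*$,
\begin{align*}
\|x^{k+1}-y\|^2\le \|x^k-y\|^2-2\mu(F_\mu(x^{k+1})-F^*),
\end{align*}
then plug in Lemma \ref{lemma:deterministic_moreau_growth}(iii) to obtain the implicit inequality
\begin{align*}
d_{k+1}^2\le d_k^2-2\mu\varphi(\gamma)\min\left\{\sigma_F d_{k+1}^\gamma,\ \tfrac{1}{2\mu}d_{k+1}^2\right\}.
\end{align*}
The hard part is converting this implicit bound into an explicit $d_{k+1}\le h(d_k)$. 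Treat the two arms of the $\min$ as separate cases. The Holderian arm gives $d_{k+1}^2+2\mu\varphi(\gamma)\sigma_F d_{k+1}^\gamma\le d_k^2$; factoring $d_k^2-d_{k+1}^2=(d_k-d_{k+1})(d_k+d_{k+1})$ and using $d_{k+1}\le d_k$ recovers a sub-sharp decrease of order $\tfrac{\mu\varphi(\gamma)\sigma_F}{2}d_k^{\gamma-1}$. The quadratic arm gives, after completing the square in $d_{k+1}^2-d_k d_{k+1}+\tfrac{\varphi(\gamma)}{4}d_k^2\le 0$, the linear contraction rate $\tfrac{1+\sqrt{1-\varphi(\gamma)}}{2}$. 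Since which arm is active depends on the current regime, $h(r)$ is defined as the \emph{maximum} of these two candidate expressions so that $d_{k+1}\le h(d_k)$ is valid globally; for $\gamma>2$ the first candidate becomes negative when $r$ is large, so I add the third safeguard term $(1-\tfrac{1}{\gamma-1})r$ obtained from a more conservative rearrangement. Finally, lift to inexactness: $d_{k+1}\le h(d_k)+\delta_k$ unrolls (with monotonicity of $h$) to $d_k\le\max\{h^{(k)}(d_0),\bar\delta_k\}$, where the noise sequence $\bar\delta_k$ is defined by the stated recursion so that it majorises the residual of the inexactness terms passed through $h$. The delicate algebraic step is the second case (quadratic arm) and the $\gamma>2$ safeguard; everything else is a careful but routine induction.
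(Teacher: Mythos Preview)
Parts (i) and (ii) are fine and close to the paper's argument. Your route---applying the growth of $F$ at the exact prox point $\tilde x^{k+1}$ rather than the growth of $F_\mu$ at $x^k$---is a legitimate variant that lands on the same one-step recurrences $d_{k+1}\le\max\{d_k-\mu\sigma_F,0\}+\delta_k$ and $d_{k+1}\le(1+2\mu\sigma_F)^{-1/2}d_k+\delta_k$. The paper instead bounds $F_\mu(x^k)-F^*$ via Lemma~\ref{lemma:deterministic_moreau_growth} inside the inequality $\dist_{X^*}(x^{k+1})\le\sqrt{d_k^2-2\mu(F_\mu(x^k)-F^*)}+\delta_k$, but the outcome is identical. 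Your noise-splitting in (ii) is exactly Lemma~\ref{lemma:first_sequence}, and the unrolling in (i) is the content of Lemma~\ref{max_lemma}; calling these ``short induction'' is acceptable at the level of a sketch.

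Part (iii) has a real gap, and it stems from evaluating the Moreau-envelope growth at $x^{k+1}$ instead of $x^k$. The paper plugs Lemma~\ref{lemma:deterministic_moreau_growth}(iii) at the \emph{current} iterate to obtain the \emph{explicit} recurrence
\[
d_{k+1}\le \max\bigl\{d_k-\mu\varphi(\gamma)\sigma_F d_k^{\gamma-1},\ \sqrt{1-\varphi(\gamma)}\,d_k\bigr\}+\delta_k,
\]
and then applies the absorption trick of Lemma~\ref{lemma:prelim_conv_rate}: either $\delta_k\le\min\{\tfrac{\alpha}{2}d_k^{\gamma-1},\tfrac{1-\beta}{2}d_k\}$, in which case $\delta_k$ is swallowed at the cost of halving $\alpha$ and replacing $\beta$ by $\tfrac{1+\beta}{2}$, or else $d_k\le\hat\delta_k$. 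This is precisely where the factor $\tfrac{\mu\varphi(\gamma)\sigma_F}{2}$, the rate $\tfrac{1+\sqrt{1-\varphi(\gamma)}}{2}$, and the specific form of $\bar\delta_k$ in the statement come from. Your plan skips this mechanism entirely: from $d_{k+1}\le h(d_k)+\delta_k$ you cannot reach $d_{k+1}\le\max\{h(d_k),\hat\delta_{k+1}\}$ by ``monotonicity of $h$'' alone.

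Moreover, your implicit-to-explicit conversion does not deliver the constants you claim. In the H\"olderian arm, from $2\mu\varphi(\gamma)\sigma_F d_{k+1}^\gamma\le(d_k+d_{k+1})(d_k-d_{k+1})\le 2d_k(d_k-d_{k+1})$ you obtain $d_k-d_{k+1}\ge \mu\varphi(\gamma)\sigma_F d_{k+1}^\gamma/d_k$, which involves $d_{k+1}^\gamma$, not $d_k^{\gamma-1}$; there is no clean way to flip this without an additional a priori contraction. In the quadratic arm, your own inequality is $(1+\varphi(\gamma))d_{k+1}^2\le d_k^2$, giving rate $(1+\varphi(\gamma))^{-1/2}$; the quadratic $d_{k+1}^2-d_kd_{k+1}+\tfrac{\varphi(\gamma)}{4}d_k^2\le0$ that would produce $\tfrac{1+\sqrt{1-\varphi(\gamma)}}{2}$ does not follow from it. The fix is simple: evaluate the growth at $x^k$ to get the explicit recurrence above, and then run the absorption argument of Lemma~\ref{lemma:prelim_conv_rate}; the $\gamma>2$ safeguard $(1-\tfrac{1}{\gamma-1})r$ then enters only to keep $h$ nondecreasing, as in Theorem~\ref{th:central_recurrence}.
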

\begin{proof}
The proof can be found in the appendix.
\end{proof}	

\noindent Note that in general, all the abstract convergence rates of Theorem \ref{th:IPPconvergence} depend on two terms: the first one illustrates the reduction of the initial distance to optimum and the second term reflects the accuracy of the approximated gradient. Therefore, after a finite number (for $\gamma = 1$) or at most $\mathcal{O}(\log(1/\epsilon))$ (for $\gamma > 1$) IPPA iterations, the evolution of inner accuracy $\delta_k$ becomes the main bottleneck of the convergence process. 
 The above theorem provides an abstract insight about how fast should the accuracy $\{\delta_k\}_{k \ge 0}$ decay in order that  $\{x^k\}_{k \ge 0}$ attains the best rate towards the optimal set in terms of $\dist_{X^*}(x^k)$. In short, $(i)$ shows that for WSM a recurrent constant decrease on $\dist_{X^*}(x^k)$ is established only if $\delta_k < \mu\sigma_F$, while the noise $\delta_k$ is not necessary to vanish. This aspect will be discussed in more details below.  The last parts $(ii)$ and $(iii)$ suggest that a linear decay  of $\delta_k$ (for $\gamma = 2$) and, respectively, $\delta_{k+1} = h(\delta_k)$ for general $\gamma > 1$, ensure the fastest convergence of the residual. 

\noindent Several previous works as \cite{Ned:10,Pol:78,Pol:87book} analyzed perturbed and incremental SGM algorithms, under WSM, that use noisy estimates $G_k$ of subgradient $F'(x^k)$. A surprising common conclusion of these works is that under a sufficiently low persistent noise:
$0< \norm{F'(x^k) - G_k} < \sigma_F,$ for all $k \ge 0$,	
SGM still converges linearly to the optimal set. Although IPPA is based on a similar approximate first order information, notice that the smoothed objective $F_{\mu}$ do not satisfy the pure WSM property. However, our next result states, in a similar context of small but persistent noise of magnitude at most $\frac{\delta}{\mu}$, that IPPA attains $\delta-$accuracy after a finite number of iterations.


\begin{corrolary}\label{cor:complexity_sharp_minima}
Let $\delta_k = \delta < \mu\sigma_F$ and $\gamma = 1$, then after
\begin{align*}
K = \left\lceil \frac{\dist_{X^*}(x^0)}{\mu\sigma_F - \delta}\right\rceil  
\end{align*}
IPPA iterations, $x^K$ satisfies $\prox_{\mu}^F(x^K) = \pi_{X^*}(x^K)$ and $\dist_{X^*}(x^K) \le \delta$.
\end{corrolary}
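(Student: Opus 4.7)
The plan is to instantiate Theorem~\ref{th:IPPconvergence}$(i)$ with the constant noise sequence $\delta_k = \delta$ in order to obtain an affine contraction of $\dist_{X^*}(x^k)$ until it drops below $\delta$, and then to recycle the key implication used inside the proof of Lemma~\ref{lemma:morenvgrad_bound_relating_with_distopt}$(i)$ to identify $\prox_{\mu}^F(x^K)$ with $\pi_{X^*}(x^K)$.

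Concretely, substituting $\delta_i = \delta$ into the sharp-minima recurrence would give
\begin{align*}
\dist_{X^*}(x^k) \le \max\bigl\{\dist_{X^*}(x^0) - k(\mu\sigma_F - \delta),\ \delta\bigr\}.
\end{align*}
The standing hypothesis $\delta < \mu\sigma_F$ makes $\mu\sigma_F - \delta$ strictly positive, so the first argument is an affine, strictly decreasing function of $k$. Picking the smallest integer $K$ with $K(\mu\sigma_F - \delta) \ge \dist_{X^*}(x^0)$, namely $K = \lceil \dist_{X^*}(x^0)/(\mu\sigma_F - \delta)\rceil$, forces that first argument to be nonpositive and hence at most $\delta$, so the max collapses to $\dist_{X^*}(x^K) \le \delta$, which is the distance half of the statement.

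For the identification $\prox_{\mu}^F(x^K) = \pi_{X^*}(x^K)$, I would rerun the three-line argument embedded in the proof of Lemma~\ref{lemma:morenvgrad_bound_relating_with_distopt}$(i)$, but applied to $x^K$ directly rather than through an approximate gradient. Bound~\eqref{upperbound_morenvgrad} gives $\norm{\nabla F_{\mu}(x^K)} \le \dist_{X^*}(x^K)/\mu \le \delta/\mu < \sigma_F$; combined with $\nabla F_{\mu}(x^K) \in \partial F(\prox_{\mu}^F(x^K))$ and the sharp-minima error bound~\eqref{gamma1_Env_Holder_error_bound}, this forbids $\prox_{\mu}^F(x^K)$ from lying outside $X^*$. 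The identity $\norm{x^K - \prox_{\mu}^F(x^K)} = \mu\norm{\nabla F_{\mu}(x^K)} \le \dist_{X^*}(x^K)$ then pins the prox to be precisely $\pi_{X^*}(x^K)$.

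The only subtle point is the hypothesis $\dist_{X^*}(x^0) \ge \mu\sigma_F$ of Theorem~\ref{th:IPPconvergence}$(i)$; if instead $\dist_{X^*}(x^0) < \mu\sigma_F$, the identification argument above already applies at $k=0$, so a single inexact step lands within distance $\delta$ of $X^*$, which is consistent with the value $K \ge 1$ returned by the ceiling whenever $x^0 \notin X^*$. Accordingly, no genuine obstacle arises: the corollary reduces, modulo this borderline case, to a direct two-line consequence of Theorem~\ref{th:IPPconvergence}$(i)$ joined with the sharp-minima prox identification.
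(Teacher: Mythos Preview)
Your proposal is correct and follows essentially the same route as the paper: instantiate Theorem~\ref{th:IPPconvergence}$(i)$ with constant $\delta_k=\delta$ to obtain the displayed $\max$ bound, read off $K$, and then invoke the identification argument from Lemma~\ref{lemma:morenvgrad_bound_relating_with_distopt}$(i)$ via \eqref{upperbound_morenvgrad} and \eqref{gamma1_Env_Holder_error_bound}. Your extra care with the borderline case $\dist_{X^*}(x^0) < \mu\sigma_F$ is a welcome addition that the paper's proof leaves implicit.
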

\begin{proof}
The proof can be found in the Appendix.
\end{proof}

\noindent To conclude, if the noise magnitude $\frac{\delta}{\mu}$ is below the threshold $\sigma_F$, or equivalently 	$0< \delta^{\nabla} : =\norm{\nabla F_{\mu}(x^k) - \nabla_{\delta} F_{\mu}(x^k)} < \sigma_F	$	
then after a finite number of iterations IPPA reaches $\delta$ distance to $X^*$.
We see that under sufficiently low persistent noise, IPPA still guarantees convergence to the optimal set assuming the existence of an inner routine that computes each iteration.  In other words, "noisy" Proximal Point algorithms share similar stability properties as the noisy Subgradient schemes from \cite{Ned:10,Pol:78,Pol:87book}.
This discussion can be extended to general decreasing $\{\delta_k\}_{k \ge 0}$. 

\noindent We show next that Theorem \ref{th:IPPconvergence} covers well-known results on exact PPA. 

\begin{corrolary}\label{cor:complexity_for_exact_case}
Let $\{x^k\}_{k \ge 0}$ be the sequence of exact PPA, i.e. $\delta_k = 0$. By denoting $r_0 = \dist_{X^*}(x^0)$, an $\epsilon-$suboptimal iterate is attained, i.e. $\dist_{X^*}(x^k) \le \epsilon$, after the number of iterations:
\begin{align*}	
	\K_e(\gamma,\epsilon) = 
	\begin{cases}
		\left\lceil \frac{r_0 - \epsilon}{\mu\sigma_F}\right\rceil &\text{if} \; \gamma = 1 \\
		\left\lceil \mathcal{O}\left( \frac{1}{\mu\sigma_F} \log\left(\frac{r_0}{\epsilon}\right) \right) \right\rceil
		&\text{if} \; \gamma = 2 \\
		\left\lceil \mathcal{O}\left(   \Min{  \T^{2-\gamma} \log\left( \frac{r_0 }{ \epsilon   }       \right)    ,   \T    }\right) \right\rceil, &\text{if} \; \gamma \in (1,2), \; \epsilon \ge (\mu\varphi(\gamma)\sigma_F )^{\frac{1}{2-\gamma} }  \\
		\left\lceil \mathcal{O}\left(   \log\left(  \Min{r_0, (2\mu\sigma_F)^{\frac{1}{2-\gamma}}}  / \epsilon    \right)     \right) \right\rceil	
		&\text{if} \; \gamma \in (1,2), \epsilon  < (\mu\varphi(\gamma)\sigma_F )^{\frac{1}{2-\gamma} }\\
		\left\lceil \BigO{    \frac{1}{\epsilon^{\gamma - 2}}    }\right\rceil 		&\text{if} \; \gamma > 2. 
	\end{cases}
\end{align*}

%
\end{corrolary}
\begin{proof}
The proof can be found in the Appendix.
\end{proof}

\noindent The finite convergence of the exact PPA, under WSM, dates back to \cite{Fer:91,BurFer:93,Ant:94,Ber:82constrained}. 
Since PPA is simply a gradient descent iteration, its iteration complexity under QG $\gamma = 2$ shares the typical dependence on the conditioning number $\frac{1}{\mu\sigma_F}$.
The Holder growth $\gamma \in (1,2)$ behaves similarly with the sharp minima case: fast convergence outside the neighborhood around the optimum which expands with $\mu$. 

\noindent A simple argument on the tightness of the bounds for the case $\gamma > 2$ can be found in \cite{BauDao:15}. Indeed, Douglas-Rachford, PPA and Alternating Projections algorithms were analyzed in \cite{BauDao:15} for particular univariate functions. The authors proved that PPA requires $\BigO{1/\epsilon^{\gamma -2}}$ iterations to minimize the particular objective $F(x) = \frac{1}{\gamma}|x|^{\gamma}$ (when $\gamma > 2$) up to $\epsilon$ tolerance.

\begin{corrolary}\label{cor:complexity_for_inexact_case}
Under the assumptions of Corrolary \ref{cor:complexity_for_exact_case}, recall the notation $\K_e(\gamma,\epsilon)$ for the exact case. The number of iterations performed by $\{x^k\}_{k \ge 0}$ generated with IPPA$(x^0,\mu,\{\delta_k\})$ in order to attain an $\epsilon-$suboptimal point is:
\begin{align*}
\begin{cases}
\left\lceil \BigO{\Max{\K_e(1,\epsilon)+\frac{\delta_0}{\mu\sigma_F}, \log\left( \frac{\delta_0}{\epsilon}\right)}   } \right\rceil  & \quad \gamma = 1, \delta_k \le \frac{\delta_0}{2^k} \\
\left\lceil  \mathcal{O}\left( \K_e(\gamma,\epsilon)\right) \right\rceil 
& \quad \gamma \in (1,2], \delta_k \le \frac{\delta_0}{2^{k}} \\
\left\lceil  \mathcal{O}\left( \K_e(\gamma,\epsilon)\right) \right\rceil 
& \quad
\gamma > 2, \delta_k = \left( 1 - \frac{1}{\gamma - 1}\right)^{k(\gamma - 1)} \delta_{0}.
\end{cases}
\end{align*}
\end{corrolary}
\begin{proof}
The proof can be found in the Appendix.
\end{proof}
	
\noindent Overall,  if the noise vanishes sufficiently fast (e.g. a linear decay with factor $\frac{1}{2}$ in the case $\gamma \le 2$) then the complexity of IPPA has the same order as the one of PPA. However, a fast vanishing noise $\delta_k$ requires an efficient inner routine that computes the iterate $x^k$. Particularly, when $F$ is nonsmooth and convex, a simple choice of the inner routine is the Subgradient Method (SM). However, the efficiency of the SM for minimizing a given $\sigma_f-$strongly convex function $f$ up to $\delta$ accuracy, i.e. $f(x) - f^* \le \delta$, has the order $\BigO{\frac{1}{\delta}}$ subgradient calls \cite{JudNes:14uniform}. By using the quadratic growth guaranteed by strong convexity, i.e. $\frac{\sigma_f}{2}\norm{x-x^*}^2 \le f(x) - f^* \le \delta$,  in order to approach a minimizer at distance $\delta$, SM requires  $\BigO{\frac{1}{\delta^2}}$ iterations.
According to this bound, the cost of computing a single iteration of IPPA results into $\BigO{\frac{1}{\delta_k^2}}$ subgradient calls and, therefore, a direct naive counting of the total number of subgradient evaluations over $T$ outer iterations is of order $\BigO{\sum\limits_{k=0}^T \frac{1}{\delta_k^2}}$. By using the previous estimates of $T$ necessary for $\dist_{X^*}(x^T) \le \epsilon$, yields a total estimate of subgradient calls which is significantly larger than the known optimal bound of $\BigO{\frac{1}{\epsilon^{2(\gamma-1)}}}$ for SM algorithm. However, we further give a restarted variant of IPPA that overcomes this issue.

\section{Restarted Inexact Proximal Point Algorithm}\label{sec:RIPPA}

\noindent The Restarted IPPA (RIPPA) illustrates a simple recursive call of the IPPA combined with a multiplicative decrease of the parameters. Observe that RIPPA is completely independent of the problem constants.

\begin{algorithm}
	\caption{Restarted IPPA ($x^0,\mu_0,\delta_0,\rho$)}
	Initialize $\delta^{\nabla}_0 := \delta_0, t:=0$\\
	\While{stopping criterion}{
		$\text{Call IPPA}  \; \text{to compute:} \;\; x^{t+1} = IPPA(x^t, \mu_t, \{\delta_k := \delta_t\}_{k \ge 0}, 5\delta_t^{\nabla} )$ \\
		$\text{Update:} \;\; \mu_{t+1} = 2\mu_t, \; \delta^{\nabla}_{t+1} = \frac{\delta^{\nabla}_{t}}{2^{\rho}} , \; \delta_{t+1} = \mu_{t+1}\delta^{\nabla}_{t+1}$	\\
		$\; t := t+1$\\
		
	}
\end{algorithm}

As in the usual context of restartation, we call any $t-$th iteration an epoch. The stopping criterion can be optionally based on a fixed number of epochs or on the reduction of gradient norm \eqref{stopping_criterion}.
Denote $K_0 = \lceil  \frac{\dist_{X^*}(x^0)}{\mu_0\delta_0} \rceil $.

\begin{theorem}\label{th:RIPP_complexity}
	Let $\delta_0,\mu_0$ be positive constants and $\rho > 1$. Then the sequence $\{x^t\}_{t\ge 0}$ generated by $RIPPA(x^0,\mu_0,\delta_0)$ attains $\dist_{X^*}(x^t) \le \epsilon$ after a number of $\T_{IPP}(\gamma,\epsilon)$ iterations. 
	
	\noindent Let $\gamma = 1$ and assume $\epsilon < \mu_0\sigma_F$ and $\dist_{X^*}(x^0) \ge \mu_0\sigma_F$, then
	\begin{align*}
		\T_{IPP}(1,\epsilon) =  \frac{1}{\rho - 1} \log\left( \frac{\mu_0 \delta_0}{\epsilon} \right) + \T_{ct},
	\end{align*}
	where $\T_{ct} = K_0\lceil \frac{1}{\rho}\log\left( \frac{12\delta_0}{\sigma_F} \right) \rceil .$	
	In particular, if $\delta_0 < \mu_0\sigma_F$ then $RIPPA(x^0,\mu_0,\delta_0,0)$ reaches the $\epsilon-$suboptimality within $\BigO{\log\left( \frac{\mu_0\sigma_F}{\epsilon}\right)}$ iterations.
	
	\vspace{5pt}
	
	\noindent Let $\gamma = 2$, then 
	\begin{align*}
		\T_{IPP}(2,\epsilon) =  \BigO{\frac{1}{\rho}\log \left( \frac{\delta_0}{\epsilon}\right) } + K_0,
	\end{align*}
	
	\vspace{5pt}

	\noindent Let $\gamma \in (1,2)$, then:
	\begin{align*}
		\T_{IPP}(\gamma,\epsilon) = \BigO{\Max{ \frac{\gamma-1}{\rho}\log \left( \frac{\delta_0}{\epsilon}\right), \frac{1}{\rho-1}\log \left( \frac{\mu_0\delta_0}{\epsilon}\right)}} + K_0.
	\end{align*}
	Otherwise, for $\gamma > 2$
	\begin{align*}
		\T_{IPP}(\gamma,\epsilon) = \BigO{\left( \frac{\delta_0}{\epsilon}\right)^{ \left[\left(1  -\frac{1}{\rho} \right)(\gamma - 1) - 1 \right]\Max{ 1 , \frac{1}{(1 - 1/\rho)(\gamma - 1)} }    } } + K_0.
	\end{align*}
\end{theorem}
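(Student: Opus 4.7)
The plan is to analyze Restarted IPPA epoch-by-epoch, translating each epoch's stopping criterion into a bound on the distance to the optimum, and then tracking the geometric decrease induced by the parameter update. From the algorithm one immediately reads off $\mu_t = 2^t \mu_0$, $\delta_t^{\nabla} = \delta_0 / 2^{t\rho}$, and hence $\delta_t = \mu_t \delta_t^{\nabla} = \mu_0 \delta_0 / 2^{t(\rho-1)}$. Each inner IPPA call returns $x^{t+1}$ satisfying $\norm{\nabla_{\delta_t} F_{\mu_t}(x^{t+1})} \le 5 \delta_t^{\nabla}$, and feeding this into Lemma \ref{lemma:morenvgrad_bound_relating_with_distopt} yields a per-epoch distance estimate. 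For $\gamma = 1$, once $6\delta_t^{\nabla} < \sigma_F$ the ``direct'' branch gives $\dist_{X^*}(x^{t+1}) \le \mu_t \cdot 5\delta_t^{\nabla} + \delta_t = 6\mu_0\delta_0/2^{t(\rho-1)}$, so the distance shrinks geometrically by factor $2^{-(\rho-1)}$ per epoch. For $\gamma > 1$ the corresponding Lemma \ref{lemma:morenvgrad_bound_relating_with_distopt}$(ii)$ estimate gives the maximum of two terms, whose relative dominance switches at $\gamma = 2$.

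Next I would partition the epochs into a \emph{warm-up} phase and a \emph{fast} phase. The warm-up phase contains those early epochs in which $\delta_t^{\nabla}$ has not yet dropped below the threshold that activates the ``sharp'' branch of Lemma \ref{lemma:morenvgrad_bound_relating_with_distopt} (for $\gamma = 1$ this threshold is of order $\sigma_F$). Since $\delta_t^{\nabla}$ decreases by a factor $2^{\rho}$ per epoch, this phase has length at most $\lceil \frac{1}{\rho} \Log{12\delta_0/\sigma_F} \rceil$. During the warm-up, $\dist_{X^*}(x^t)$ does not blow up, because by nonexpansiveness \eqref{rel:prox_nonexpansiveness} of the proximal map the inner IPPA contracts (or at worst preserves) the distance to $X^*$ up to the inner noise $\delta_t$. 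Invoking Corollary \ref{cor:complexity_for_inexact_case} bounds the per-epoch inner iteration count by a constant multiple of $K_0$ during this phase, so multiplying by the warm-up length recovers $\T_{ct}$.

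In the fast phase, the per-epoch recurrence $\dist_{X^*}(x^{t+1}) \le 6\mu_0\delta_0/2^{t(\rho-1)}$ (respectively its analogues for $\gamma > 1$) is a plain geometric sequence. For $\gamma = 1$ and $\gamma = 2$, solving $6\mu_0\delta_0/2^{t(\rho-1)} \le \epsilon$ gives $\BigO{\frac{1}{\rho-1}\Log{\mu_0\delta_0/\epsilon}}$ epochs, yielding the logarithmic terms in the theorem. For $\gamma \in (1,2)$ the two branches of Lemma \ref{lemma:morenvgrad_bound_relating_with_distopt}$(ii)$ decay at rates $2^{-t\rho/(\gamma-1)}$ and $2^{-t(\rho-1)}$ respectively, so the number of epochs is controlled by the slower of the two, producing the $\max$ in the statement. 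For $\gamma > 2$, the first branch $\left[\mu_t \cdot 5\delta_t^\nabla + \delta_t\right]^{1/(\gamma-1)}$ shrinks more slowly when raised to the power $\frac{1}{\gamma-1} < 1$, yielding the stated polynomial dependence $\BigO{(\delta_0/\epsilon)^{[(1-1/\rho)(\gamma-1) - 1]\cdot\max\{1,\, 1/((1-1/\rho)(\gamma-1))\}}}$ after inversion.

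The main obstacle is the warm-up bookkeeping: in those early epochs neither the sharp branch of Lemma \ref{lemma:morenvgrad_bound_relating_with_distopt} nor the fast-phase recurrence is applicable, so one has to show that the inner IPPA does not move $x^t$ far from $X^*$ despite the relatively loose inner tolerance $5\delta_t^{\nabla}$. The key tool is nonexpansiveness \eqref{rel:prox_nonexpansiveness}, which together with the choice $\delta_t = \mu_t\delta_t^{\nabla}$ ensures that $\dist_{X^*}(x^{t+1}) \le \dist_{X^*}(x^t) + \delta_t$, so the warm-up distance is controlled by a summable geometric tail. Once the warm-up is past, the remainder is a routine geometric-series argument that matches the four cases of the theorem.
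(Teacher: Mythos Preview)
Your approach is essentially the paper's: split into a warm-up phase (until $\delta_t^\nabla$ drops below the threshold needed for the sharp branch of Lemma~\ref{lemma:morenvgrad_bound_relating_with_distopt}) and a fast phase in which the stopping criterion translates directly into a geometrically decaying distance bound. The epoch-count arithmetic you do is correct.

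There is, however, a genuine gap in your accounting. The quantity $\T_{IPP}(\gamma,\epsilon)$ counts \emph{total IPPA iterations across all epochs}, not the number of epochs. In the fast phase you only count epochs and then declare that this ``yields the logarithmic terms in the theorem''. That step is missing the crucial claim that each fast-phase epoch terminates in $\BigO{1}$ inner IPPA iterations when $\gamma\le 2$, and that these inner counts must be \emph{summed} when $\gamma>2$. Concretely: for $\gamma=1$, once $\norm{\nabla F_{\mu_{t-1}}(x^t)}<\sigma_F$ the paper uses Lemma~\ref{lemma:morenvgrad_bound_relating_with_distopt} to get $\dist_{X^*}(x^t)\le\delta_{t-1}<\mu_t\sigma_F$ and then Theorem~\ref{th:IPPconvergence}$(i)$ to deduce $K_t=1$. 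For $\gamma\in(1,2]$ the ratio $\dist_{X^*}(x^t)/\delta_t$ is bounded independently of $t$, so $K_t=\BigO{1}$. For $\gamma>2$ this ratio grows like $2^{t[(\rho-1)-\rho/(\gamma-1)]}$, and the stated polynomial bound in $1/\epsilon$ comes exactly from summing these growing $K_t$---not from the epoch count alone, which is still logarithmic. Your sketch for $\gamma>2$ attributes the polynomial rate to the per-epoch distance decay, which is not where it comes from.

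A minor point: the per-epoch inner length is governed by $K_t=\lceil\dist_{X^*}(x^t)/\delta_t\rceil$ from Theorem~\ref{th:general_IPP_convrate} (the bound on iterations until the gradient-norm stopping rule triggers), not by Corollary~\ref{cor:complexity_for_inexact_case}, which bounds iterations to reach a prescribed $\dist_{X^*}$-tolerance---a different criterion. Your nonexpansiveness argument for the warm-up is fine in spirit, but you still need this $K_t$ formula to multiply out and obtain $\T_{ct}=K_0\lceil\frac{1}{\rho}\log(12\delta_0/\sigma_F)\rceil$.
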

\begin{proof}
The proof can be found in the Appendix.
\end{proof}

\begin{remark}
	Notice that for any $\gamma \in [1,2]$, logarithmic complexity $\BigO{\log(1/\epsilon)}$ is obtained.
	When $\gamma > 2$,  the above estimate is shortened as
	\begin{align*}
		\BigO{\left( \frac{1}{\epsilon}\right)^{(\zeta - 1)\Max{1,\frac{1}{\zeta}}}},
	\end{align*}
	where $\zeta = (\gamma - 1)\left( 1 - \frac{1}{\rho}\right)$,
	In particular, if $\rho \le \frac{\gamma - 1}{\gamma - 2}$, then all epochs reduce to length $1$ and the total number of IPPA iterations reduces to the same order as in the exact case:
	$\;	\BigO{\left( \frac{1}{\epsilon}\right)^{\gamma - 2} }.$
\end{remark}

\section{Inner Proximal Subgradient Method routine}\label{sec:total_complexity}

Although the influence of growth modulus on the behaviour of IPPA is obvious, all complexity estimates derived in the previous sections assume the existence of an oracle computing an approximate proximal mapping:
\begin{align}\label{the_subproblem}
x^{k+1} \approx	\arg\min\limits_{z \in \rset^n} \; F(z) + \frac{1}{2\mu}\norm{z - x^k}^2.
\end{align}
Therefore in the rest of this section we focus on the solution of the following inner minimization problem:
\begin{align*}
	\min\limits_{z \in \rset^n} \; f(z) + \psi(z) + \frac{1}{2\mu}\norm{z - x}^2.
\end{align*}
In most situations, despite the regularization term, this problem is not trivial and one should select an appropriate routine that computes $\{x^k\}_{k \ge 0}$.  
For instance, variance-reduced or accelerated proximal first-order methods were employed in \cite{Lin:15,Lin:18,Lu:20,Luo:93} as inner routines and theoretical guarantees were provided. Also,  Conjugate Gradients based Newton method was used in \cite{Tom:11} under a twice differentiability assumption on $f$. We limit our analysis only to gradient-type routines and let other accelerated or higher-order methods, that typically improve the performance of their classical counterparts, for future work.

In this section we evaluate the computational complexity of IPPA in terms of number of proximal gradient iterations. The basic routine for solving \eqref{the_subproblem}, that we analyze below, is the Proximal (sub)Gradient Method. Notice that when $f$ is nonsmooth with bounded subgradients, we consider only the case when $\psi$ is an indicator function for a simple, closed convex set. In this situation, PsGM becomes a simple projected subgradient scheme with constant stepsize that solves \eqref{the_subproblem}.

\begin{algorithm}
	\caption{Proximal subGradient Method (PsGM) ($z^0,x^k, \alpha, \mu, N$)}
	\For{$\ell = 0, \cdots, N-1 $}{
		$z^{\ell+1} = \prox_{\alpha}^\psi \left(z^\ell - \alpha \left( f'(z^\ell) + \frac{1}{\mu}(z^\ell - x^k)\right) \right) $	\\
	}
	$\text{Output:} \; z^N$
\end{algorithm}

Through a natural combination of the outer guiding IPP iteration and the inner PsGM scheme, we further derive the total complexity of proximal-based restartation Subgradient Method in terms of subgradient oracle calls.


\begin{algorithm}\label{algorithm:IPP-SGM}
	\caption{Restarted Inexact Proximal Point - SubGradient Method (RIPP-PsGM) ($x^0, \delta_0, \mu_0, \rho, q, L_f, T$)}
	Initialize: $t := 0, \alpha_0 : = \frac{\mu_0}{2}, N_0 : = \Max{8\log(L_f/\delta_0)+1,\rho-1} $  \\
	\For{ $t:=0, \cdots, T-1$}{
		$x^{0,t} : = x^t, k: = 0$\\
		\Do{ $\norm{x^{k-1,t} - x^{k-2,t}} > \mu_t\delta_t$ }{
		$ x^{k+1,t} := PsGM(x^{k,t}, x^{k,t}, \alpha_t, \mu_t, N_t)$ \\
		$k:=k+1$ \\
	} 
		$x^{t+1}:= x^{k-1,t}$\\	
	{$\alpha_{t+1}: = \alpha_{t}2^{-q}, N_{t + 1}: = N_t 2^{-(q+1)}$\\
	$\mu_{t+1}:= 2\mu_t, \delta_{t+1}:= 2^{-\rho}\delta_t $\\
	}
}
Return $x^{T}, \delta_T$ \\

\vspace{5pt}

\hrule

\vspace{5pt}

\textbf{Postprocessing}($\tilde{x}^0,\beta_0,\mu,N,K$)  \\
\For{ $k:=0, \cdots, K-1$}{
		$ \tilde{x}^{k+1} := PsGM(\tilde{x}^{k}, \tilde{x}^{k}, \beta_k, \mu, N)$ \\
		$\beta_{k+1}: = \beta_{k}/2$}
Return $\tilde{x}^{K}$
\end{algorithm}

\begin{theorem}\label{th:main_computational_theorem}
Let the assumptions of Theorem \ref{th:Holder_gradients} hold and $\rho > 1, \delta_0 \ge 2L_f$ then the following assertions hold:

\vspace{5pt}

\noindent $(i)$ Let $x^f$ be generated as follows:
\begin{align*}
\{x^T, \delta_T\} &= \textbf{RIPP-PsGM}(x^0,\delta_0,\mu_0,\rho,2\rho - 1,L_f,T) \\ 
x^f &= \textbf{Postprocessing}(x^T,\delta_T^2/L_f^2 ,\mu_T,\lceil 2(L_f/\delta_T)^2 \rceil,\lceil\log(\delta_T/\epsilon)\rceil).  
\end{align*}
For $T \ge \left\lceil \frac{1}{\rho} \log(12\delta_0/\sigma_F) \right \rceil$, the final output $x^f$ satisfies $\dist_{X^*}(x^f) \le \epsilon$ after 
$\BigO{\log\left( 1 / \epsilon \right) }$
PsGM iterations. 

\vspace{5pt}

\noindent $(ii)$ Moreover, if $f$ has $\nu-$Holder continuous gradients with constant $L_f$ and $\nu \le \gamma - 1$, assume $\delta_0 \ge (2L_f^2)^{\frac{1}{2(1-\nu)}} \mu_0^{\frac{\nu}{1-\nu}}$ and
$\frac{q-1}{\rho - 1} \ge 2(1-\nu)$.
If $T = \BigO{  \Max{   \frac{\gamma-1}{\rho}\log(\mu_0\delta_0/\epsilon), \frac{1}{\rho-1}\log(\mu_0\delta_0/\epsilon)  } }$, then the output $x^T:=RIPP-PsGM(x^0,\delta_0,\mu_0,\rho,q,L_f,T)$ satisfies $\dist_{X^*}(x^T) \le \epsilon$  within a total cost of: 
\begin{align*}
	\BigO{ 1/\epsilon^{  \Max{ \gamma - 2 + \frac{q}{\rho}(\gamma - 1) , (\gamma-2)\frac{\rho}{(\rho - 1)(\gamma - 1)} + \frac{q}{\rho - 1}  }     }}
\end{align*}
PsGM iterations.
\end{theorem}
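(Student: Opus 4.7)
The plan is to combine three ingredients: the outer iteration complexity of RIPPA from Theorem \ref{th:RIPP_complexity}, the inner convergence of PsGM on each strongly convex proximal subproblem (invoking Theorem \ref{th:Holder_gradients}), and a careful bookkeeping of the parameter schedules $(\mu_t, \delta_t, \alpha_t, N_t)$ across epochs.

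For part $(i)$, I would proceed in two phases. The choice $T \ge \lceil \frac{1}{\rho}\log(12\delta_0/\sigma_F) \rceil$ is precisely the length of the ``first stage'' in the WSM analysis of Theorem \ref{th:RIPP_complexity}, after which the restartation argument guarantees $\prox_{\mu_T}^F(x^T) = \pi_{X^*}(x^T)$ and $\dist_{X^*}(x^T) \le \delta_{T-1}$, with $\delta_T$ of order $\sigma_F$ (a constant). First I would bound the PsGM cost during these $T$ epochs: Theorem \ref{th:Holder_gradients} ensures that $N_t$ PsGM steps per do-while iteration, together with the bounded do-while count given by \eqref{rel:epoch_length}, yield an approximate proximal point of the required quality; summing over these $T$ epochs contributes only an $\epsilon$-independent constant number of PsGM calls. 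Next, the Postprocessing loop performs $K = \lceil \log(\delta_T/\epsilon) \rceil$ PsGM calls, each of constant cost $N = \lceil 2(L_f/\delta_T)^2 \rceil$; invoking Lemma \ref{lemma:morenvgrad_bound_relating_with_distopt}$(i)$ at each step shows that the halving of $\beta_k$ halves $\dist_{X^*}(\tilde{x}^k)$, so $K$ such steps suffice to drive the distance below $\epsilon$. The grand total is $\BigO{\log(1/\epsilon)}$.

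For part $(ii)$, Theorem \ref{th:RIPP_complexity} supplies the outer epoch count $T = \BigO{\max\{(\gamma-1)/\rho,\,1/(\rho-1)\}\log(\mu_0\delta_0/\epsilon)}$ when $\gamma \in [1,2]$, or the polynomial-in-$1/\epsilon$ count when $\gamma > 2$. It then remains to bound the per-epoch inner PsGM cost. The hypotheses $\delta_0 \ge (2L_f^2)^{1/(2(1-\nu))}\mu_0^{\nu/(1-\nu)}$ and $(q-1)/(\rho-1) \ge 2(1-\nu)$ are calibrated precisely so that, as $\mu_t$ doubles and $\delta_t$ shrinks by factor $2^{-\rho}$, the inner PsGM with stepsize $\alpha_t = \alpha_0 2^{-qt}$ and $N_t = N_0 2^{-(q+1)t}$ iterations per call still produces a $\delta_t$-accurate proximal approximation, by an inductive application of Theorem \ref{th:Holder_gradients} with the updated $(\mu_t, \alpha_t, N_t)$. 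The total cost is $\sum_{t=0}^{T-1} N_t K_t$, where $K_t$ (the do-while count) is controlled via \eqref{rel:epoch_length}; this is a geometric-type sum whose dominating term depends on the sign of $(\rho-1)(\gamma-1)-\rho$, giving the $\max$ in the final exponent of $1/\epsilon$.

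The hardest step will be the inductive verification that inner PsGM maintains the required accuracy $\delta_t$ uniformly across epochs, as both the strong convexity modulus $1/\mu_t$ weakens and the target tolerance tightens simultaneously. This is precisely where the algebraic conditions on $(\rho, q, \nu, \delta_0)$ enter: they ensure that the per-epoch multiplicative factor in PsGM's error bound (via Theorem \ref{th:Holder_gradients}) composes across epochs to still yield the required $\delta_t$, under the decaying $N_t$. Once this invariant is in hand, the total-cost computation reduces to summing a geometric series, and identifying which of the two competing rates ($K_t$ growth versus $N_t$ decay) dominates gives the exponent in the final $\epsilon^{-\cdot}$ bound.
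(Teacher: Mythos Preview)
Your overall architecture matches the paper's proof: combine the epoch count from Theorem~\ref{th:RIPP_complexity}, the inner PsGM guarantee from Theorem~\ref{th:Holder_gradients}, and a geometric sum over $N_t K_t$. Two points deserve correction.

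\textbf{The direction of $N_t$.} You write that $N_t = N_0 2^{-(q+1)t}$ decays and then frame the final computation as ``$K_t$ growth versus $N_t$ decay''. This is backwards. The PsGM bound in Theorem~\ref{th:Holder_gradients} requires $N \ge \lceil 4\mu/\alpha \cdot \log(\cdot)\rceil$; since $\mu_t = 2^t\mu_0$ doubles while $\alpha_t = 2^{-qt}\alpha_0$ shrinks, the ratio $\mu_t/\alpha_t = 2^{(q+1)t}\mu_0/\alpha_0$ \emph{grows}, and hence $N_t$ must grow as $2^{(q+1)t}$ (up to the log factor, which the paper shows is $\BigO{\rho-1}$ once the warm-start $\|x^{t}-z(x^t)\|\le \mu_{t-1}\delta_{t-1}$ is in place). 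The algorithm description in the paper contains a sign typo here, but the proof uses the increasing form $N_t = \lceil 4(\rho-1)2^{(q+1)t}\rceil$. If $N_t$ were decaying you could not maintain the inner accuracy invariant at all as the subproblem conditioning $\mu_t/\alpha_t$ worsens; and with both $N_t$ and $K_t$ growing, the dominant geometric ratio is $2^{(q+1)+(\rho-1)-\rho/(\gamma-1)}$, which after multiplying by $T$ yields exactly the stated exponent.

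\textbf{The Postprocessing step in (i).} You invoke Lemma~\ref{lemma:morenvgrad_bound_relating_with_distopt}$(i)$ to get the halving, but that lemma only tells you $\prox_{\mu}^F(x)=\pi_{X^*}(x)$ once the approximate gradient is small. The actual contraction comes from the \emph{second} part of Theorem~\ref{th:Holder_gradients}: under WSM with $\dist_{X^*}(x)\le \mu\sigma_F$, a PsGM call with stepsize $\beta_k$ and $N=\lceil 2(\dist_{X^*}(x)/(\beta_k L_f))^2\rceil$ iterations outputs $z^N$ with $\dist_{X^*}(z^N)\le \beta_k L_f^2/(2\sigma_F)$. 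Halving $\beta_k$ then halves this bound, and an easy induction shows that the fixed budget $N=\lceil 2(L_f/\delta_T)^2\rceil$ suffices at every step because $\dist_{X^*}(\tilde{x}^k)\le \beta_{k-1}L_f^2/(2\delta_T)$ keeps the required $N$ from growing. This is the mechanism you need to cite, not Lemma~\ref{lemma:morenvgrad_bound_relating_with_distopt}.
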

\begin{proof}
The proof can be found in the Appendix.
\end{proof}

\vspace{10pt}

\begin{remark}
Although the bound on the number of epochs in RIPP-PsGM depends somehow on $\gamma$, a sufficiently high value of $T$ ensure the result to hold. To investigate some important particular bounds hidden into the above complexity estimates (of Theorem \ref{th:main_computational_theorem}) we analyze different choices of the input parameters $(\rho,q)$ which will be synthesized in Table 1. 

\noindent Assume $\nu$ is known, $q = 1 + 2(1-\nu)(\rho - 1)$ and denote $\zeta = \left(1 - \frac{1}{\rho} \right)(\gamma - 1)$
\begin{align}
[\gamma = 1 + \nu ] & \quad \BigO{ 1/\epsilon^{   (3-2\gamma)(\zeta - 1)\Max{1,\frac{1}{\zeta}}     }} \label{rel:known_nu_estimates1}\\
[\gamma > 1 + \nu ] & \quad\BigO{ 1/\epsilon^{ \left[ 2(\gamma - \nu - 1) + (1-2\nu)(\zeta - 1) \right]\Max{1,\frac{1}{\zeta}}          }}. \label{rel:known_nu_estimates2}
\end{align}

\noindent Under knowledge of $\gamma$, by setting $\rho = \frac{\gamma - 1}{\gamma - 2}$ then $\zeta = 1$ and \eqref{rel:known_nu_estimates1} becomes $\BigO{\log(1/\epsilon)}$. When $\gamma < 2$, a sufficiently large $\rho$ simplify \eqref{rel:known_nu_estimates2} into $\BigO{1/\epsilon^{3-2\nu - \frac{1}{\gamma - 1}   }}$.
Given any $\nu \in [1/2,1]$ and $\gamma > 2$, similarly for $\rho \to \infty $ the estimate \eqref{rel:known_nu_estimates2} reduces to $\BigO{1/\epsilon^{(3-2\nu)(\gamma - 1)-1}}$. 

\noindent In the particular smooth case $\nu = 1$, bounds \eqref{rel:known_nu_estimates1}-\eqref{rel:known_nu_estimates2} become:
\begin{align*}
	[\gamma = 2 ] & \quad \BigO{ 1/\epsilon^{   \Max{\frac{1}{\rho},\frac{1}{\rho-1}}     }} \\
	[\gamma > 2 ] & \quad\BigO{ 1/\epsilon^{  \left[\gamma - 2 + \frac{\gamma - 1}{\rho} \right]\Max{1,\frac{1}{\zeta}}          }}.
\end{align*}
For high values of $\rho \ge \log(1/\epsilon)$, the first one becomes $\BigO{ \log(1/\epsilon) }$. Also the second one reduces to $\BigO{ 1/\epsilon^{\gamma - 2} }$ when $\rho \ge (\gamma - 1) \log(1/\epsilon)$.

\noindent In the bounded gradients case $\nu = 0 $ the estimates reduces to
\begin{align}
	[\gamma = 1 ] & \quad \BigO{ \log(1/\epsilon)     } \label{rel:no_info_estimates1}\\
	[\gamma > 1 ] & \quad\BigO{ 1/\epsilon^{  \left[2(\gamma - 1) + \zeta - 1 \right]\Max{1,\frac{1}{\zeta}}          }}. \label{rel:no_info_estimates2}
\end{align}
First observe that when the main parameters $\sigma_F,L_f,\gamma$ are known then $\zeta = 1$ and we recover the same iteration complexity in terms of the number of subgradient evaluations as in the literature \cite{Yan:18,Joh:20,Rou:20,Fre:18}. 
The last estimate \eqref{rel:no_info_estimates2} holds when RIPP-PsGM performs a sufficiently high number of epochs, under no availability of problem parameters $\sigma_F, \nu, \gamma$.   
\end{remark}



\noindent  In \cite{NemNes:85,Rou:20} are derived the optimal complexity estimates $\BigO{\epsilon^{-\frac{2(\gamma - \nu-1)}{3(1+\nu) - 2}   }   }$, in terms of the number of (sub)gradient evaluations, for accelerated first-order methods under $\nu-$Holder smoothness and $\gamma-$HG. These optimal estimates require full availability of the problem information: $\gamma, \sigma_F, \nu, L_F$.

\begin{table}[h!]
	\begin{center}
		
		
		
		\begin{tabular}{c|c|c|c|c}
			\textbf{Knowledge} & \textbf{DS-SG}\cite{Joh:20} &  \textbf{RSG}\cite{Yan:18} & \textbf{Restarted UGM}\cite{NemNes:85} & \textbf{IPPA-PsGM} \\
			\textbf{} &  &   &  \\
			\hline
			$\sigma_F, \gamma, L_f (\nu = 0)$ &
			$\BigO{\epsilon^{-2(\gamma - 1)}}$ &
			$\BigO{\epsilon^{-2(\gamma - 1)}}$ &
			$\BigO{\epsilon^{-2(\gamma - 1)}}$ &
			$\BigO{\epsilon^{-2(\gamma - 1)}}$ \\
			
			$\sigma_F, \gamma, L_f, \nu > 0 $ &
			- &
			- &
			$\BigO{\epsilon^{-\frac{2(\gamma - \nu-1)}{3(1+\nu) - 2}}}$ &
			\eqref{rel:known_nu_estimates1}/\eqref{rel:known_nu_estimates2} \\
			
			$\gamma, L_f$ &
			- &
			- &
			$\BigO{\epsilon^{-2(\gamma - 1)}}$ &
			$\BigO{\epsilon^{-2(\gamma - 1)}}$ \\
			
			$\nu, L_f$ &
			- &
			- &
			- &
			\eqref{rel:known_nu_estimates1}/\eqref{rel:known_nu_estimates2} \\
			
			$L_f$ &
			- &
			- &
			- &
			\eqref{rel:no_info_estimates2} 
		\end{tabular}
		\caption{Comparison of complexity estimates under various knowledge degrees of problem information} 
	\end{center}
\label{tab:comparison}
\end{table}

\section{Numerical simulations}\label{sec:numerical}

In the following Section we evaluate RIPP-PsGM by applying it to real-world applications often found in machine learning tasks.
The algorithm and its applications are public and available online~\footnote{https://github.com/pirofti/IPPA}.

Unless stated otherwise,
we perform enough epochs (restarts) until the objective is within $\varepsilon_0=0.5$ proximity to the CVX computed optimum.
The current objective value is computed within each inner PsGM iteration. All the models under consideration satisfy WSM property and therefore the implementation of PsGM reduces to the scheme of (projected) Subgradient Method.



We would like to thank the authors of the methods we compare with for providing the code implementation for reproducing the experiments.
No modifications were performed by us on the algorithms or their specific parameters.
Following their implementation and as is common in the literature,
in our reports we also use the minimum error obtained in the iterations thus far.

All our experiments were implemented in Python 3.10.2 under ArchLinux (Linux version 5.16.15-arch1-1) and executed on an AMD Ryzen Threadripper PRO 3955WX with 16-Cores and 512GB of system memory.

\subsection{Robust $\ell_1$ Least Squares}

We start out with the least-squares (LS) problem in the $\ell_1$ setting.
This form deviates from standard LS by imposing an $\ell_1$-norm on the objective and
by constraining the solution sparsity through the $\tau$ parameter on its $\ell_1$-norm:
\begin{align*}
	\min\limits_{x \in \rset^n} \; & \; \norm{Ax - b}_1 \\
	\text{s.t.} & \;\; \norm{x}_1 \le \tau
\end{align*}

\noindent Our goal is to analyze the effect of the data dimensions, the problem and IPPA parameters
on the total number of iterations.

\begin{figure}[t]
	\centering
	\subfigure[]{\includegraphics[width=0.31\textwidth]{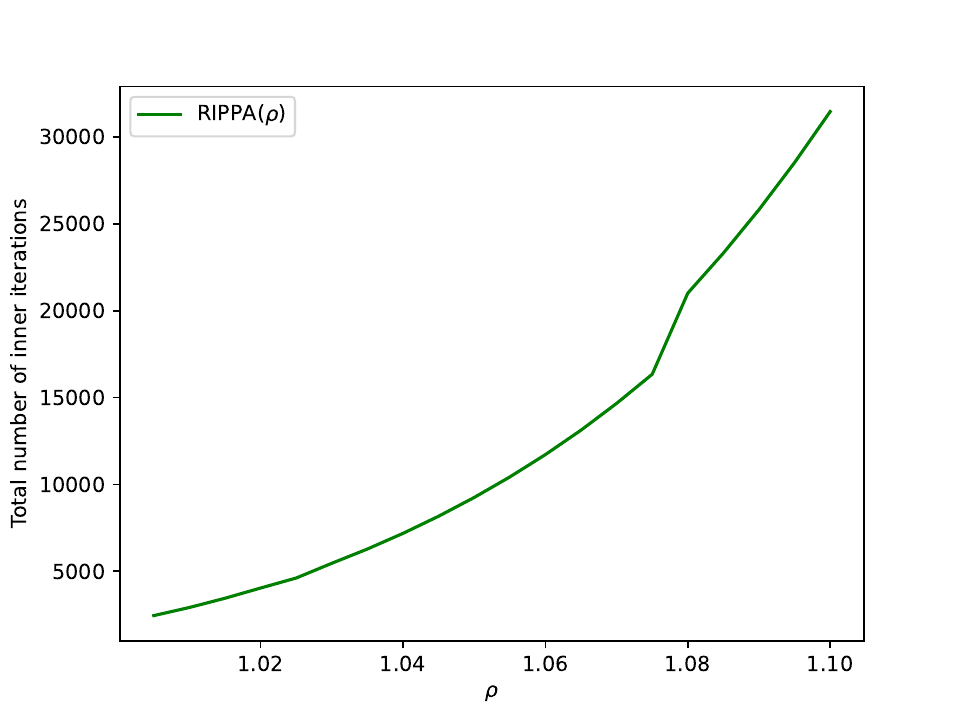}\label{fig:rippa_rho}}
	\subfigure[]{\includegraphics[width=0.31\textwidth]{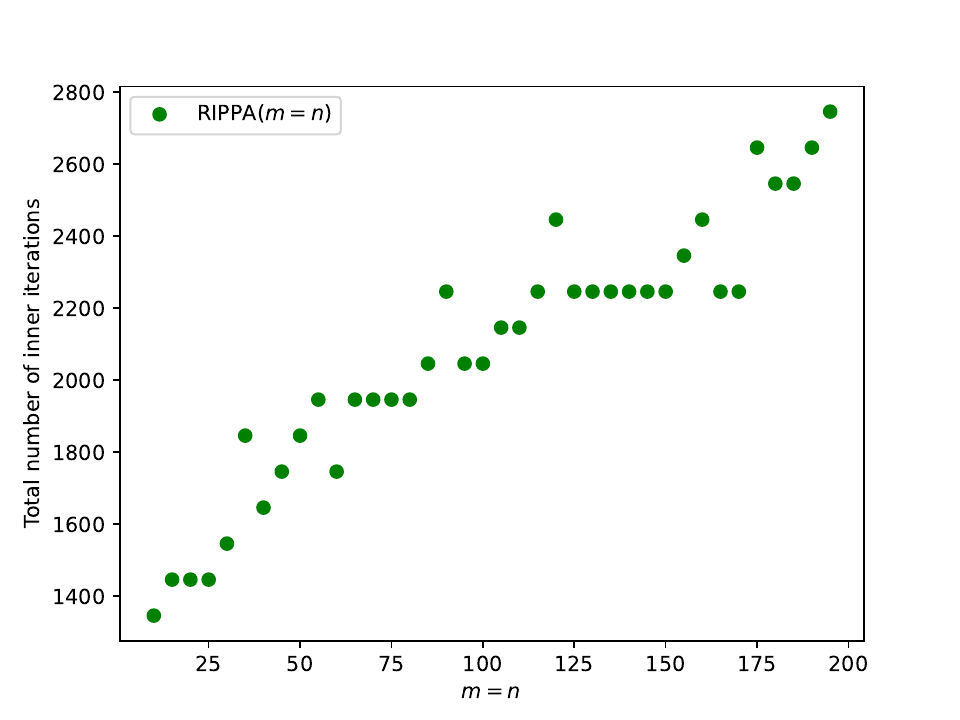}\label{fig:rippa_mn}}
	\subfigure[]{\includegraphics[width=0.31\textwidth]{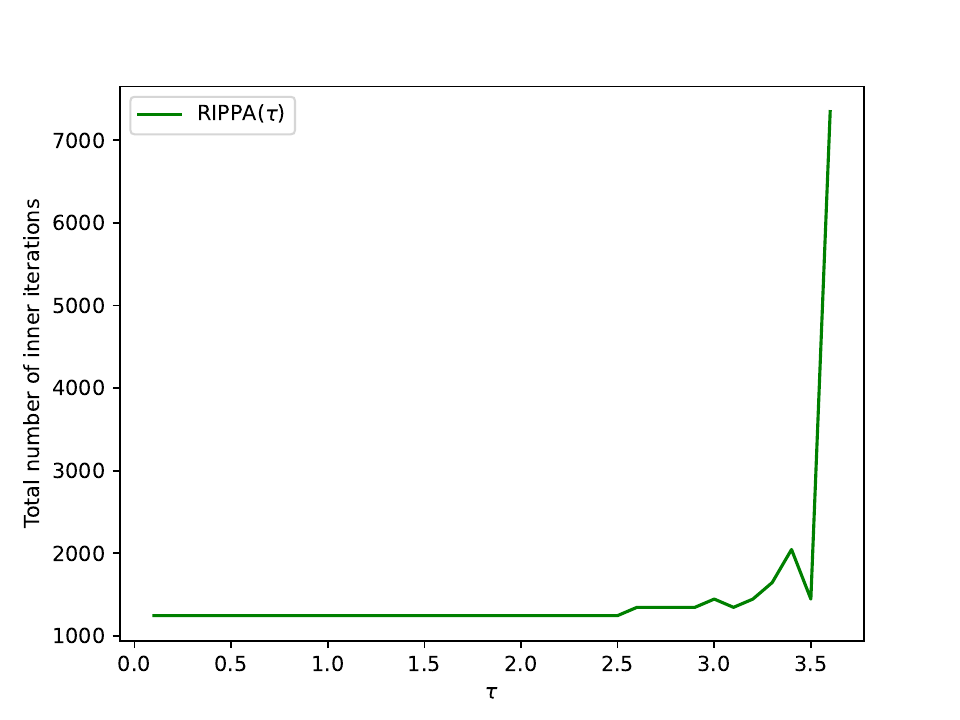}\label{fig:rippa_tau}}
  \caption{Total number of inner iterations needed for various parametrizations.
	(a) Varying $\rho$.
	(b) Varied problem dimensions where we set $m=n$.
	(c) Varying $\tau$}
\label{fig:l1ls}
\end{figure}

The first experiment from Figure \ref{fig:rippa_rho} investigates the effect of the $\rho$ parameter on the unconstrained $\ell_1$-LS formulation ($\tau=\infty$) on a small $50\times 20$ problem.
In our experiment we start with $\mu=0.1$, with 9 epochs and vary $\rho$ from 1.005 to 1.1 in 0.005 steps sizes.
In Figure~\ref{fig:rippa_mn}, we repeat the same experiment with fixed $\rho=1.005$ now, but with varied problem dimensions starting from 10 up to 200 in increments of 5 where we set both dimensions equal ($m=n$).
Finally, in Figure~\ref{fig:rippa_tau}, we study the effect of the problem specific parameter on
the total number of iterations. Although dim effects are noticed in the beginning, we can see a sudden burst past $\tau=3.4$. Please note that this is specific to $\ell_1$-LS and not to IPPA in general as we will see in the following section.

\subsection{Graph Support Vector Machines}

Graph SVM adds a graph-guided lasso regularization to the standard SVM averaged hinge-loss objective
and extends the $\ell_1$-SVM formulation through the factorization $Mx$ where $M$ is the weighted graph adjacency matrix, i.e.
\begin{align}
	\min\limits_{x \in \rset^n} \; & \; \frac{1}{m}\sum_{i=1}^m \max \{0, 1 - y_i a_i^T x \} + \tau\norm{Mx}_1
	\label{graphsvm}
\end{align}
where $a_i \in \rset^n$, $y_i \in \{\pm1\}$ are the $i-$th data point and its label, respectively.
When $M=I_n$ the Regularized Sparse $\ell_1$-SVM formulation is recovered.

\begin{figure}
	\centering

    Synthetic \\
	\subfigure[]{\includegraphics[width=0.49\textwidth]{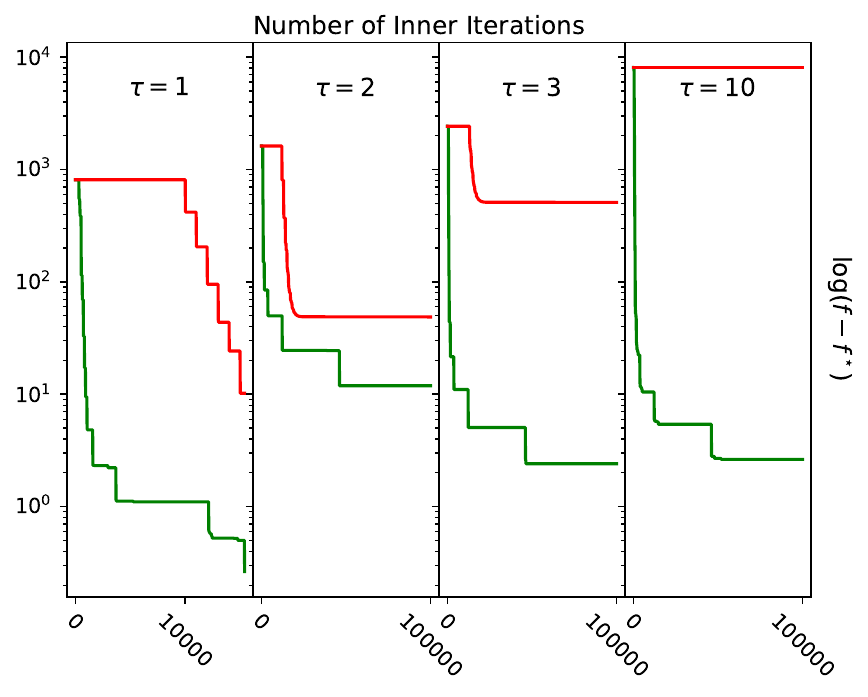}}
	\subfigure[]{\raisebox{3mm}{\includegraphics[width=0.49\textwidth]{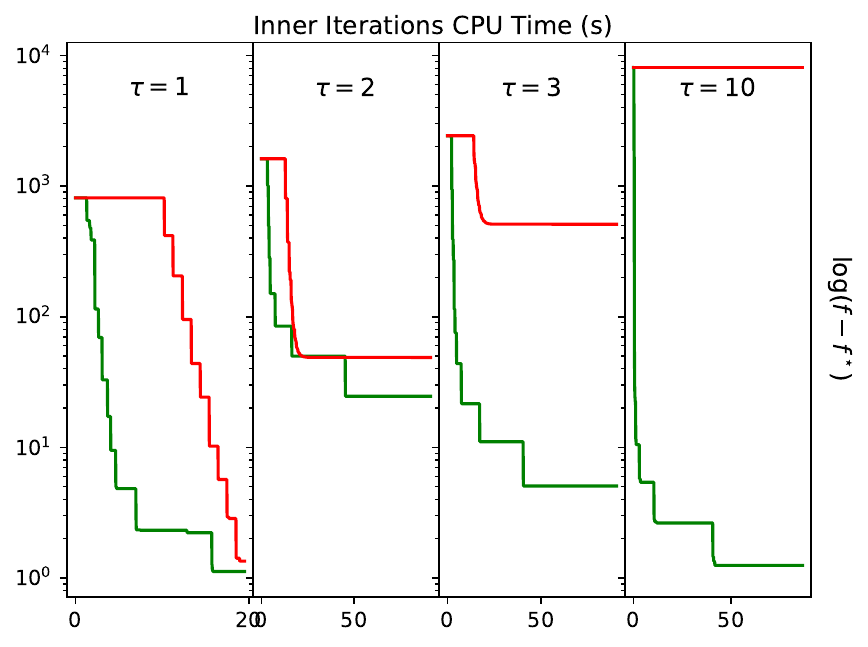}}}

    20newsgroups \\
	\subfigure[]{\includegraphics[width=0.49\textwidth]{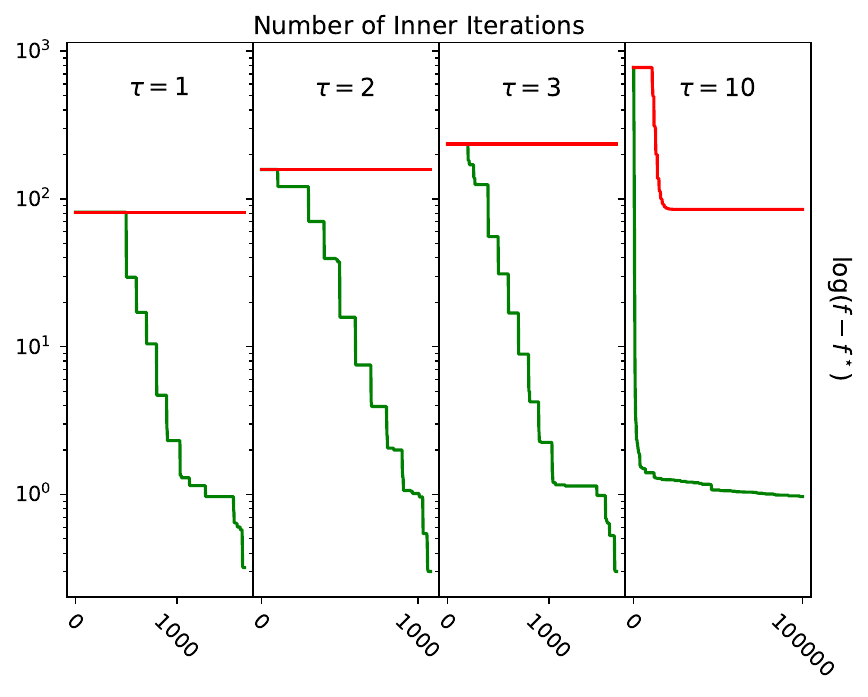}}
	\subfigure[]{\raisebox{3mm}{\includegraphics[width=0.49\textwidth]{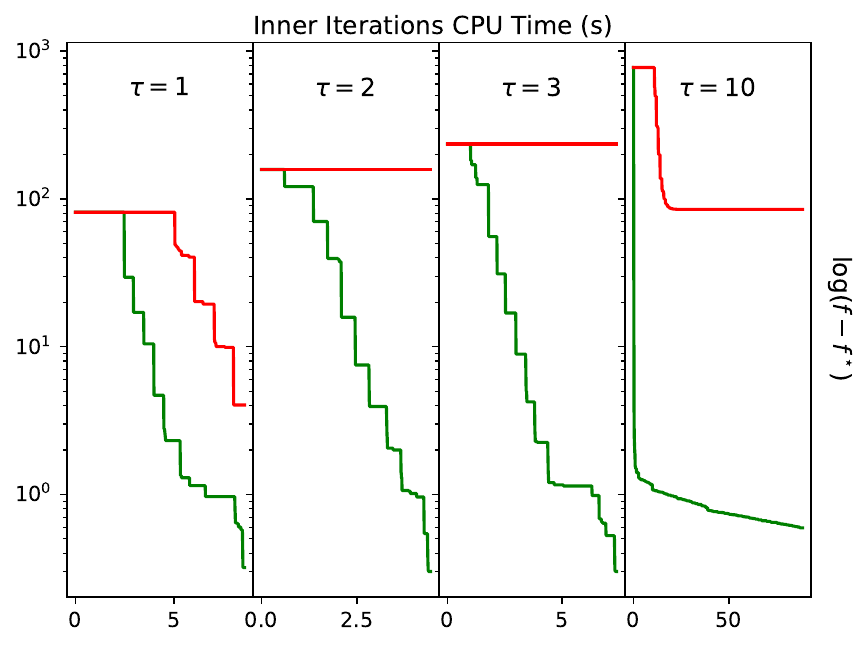}}}

    $\ell_1$-SVM ($M = I_n$) \\
    \subfigure[]{\includegraphics[width=0.49\textwidth]{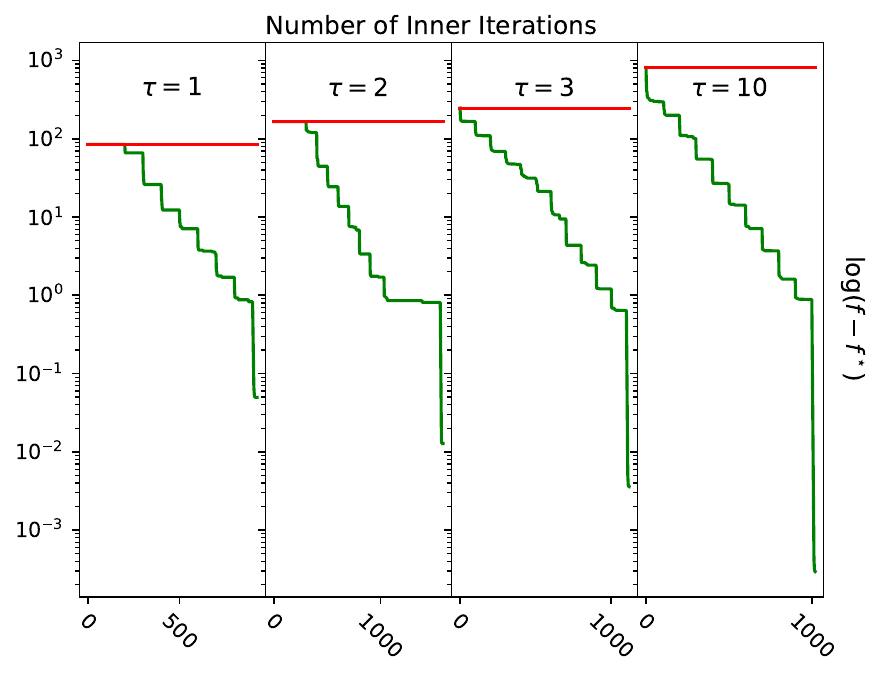}}
    \subfigure[]{\raisebox{2mm}{\includegraphics[width=0.49\textwidth]{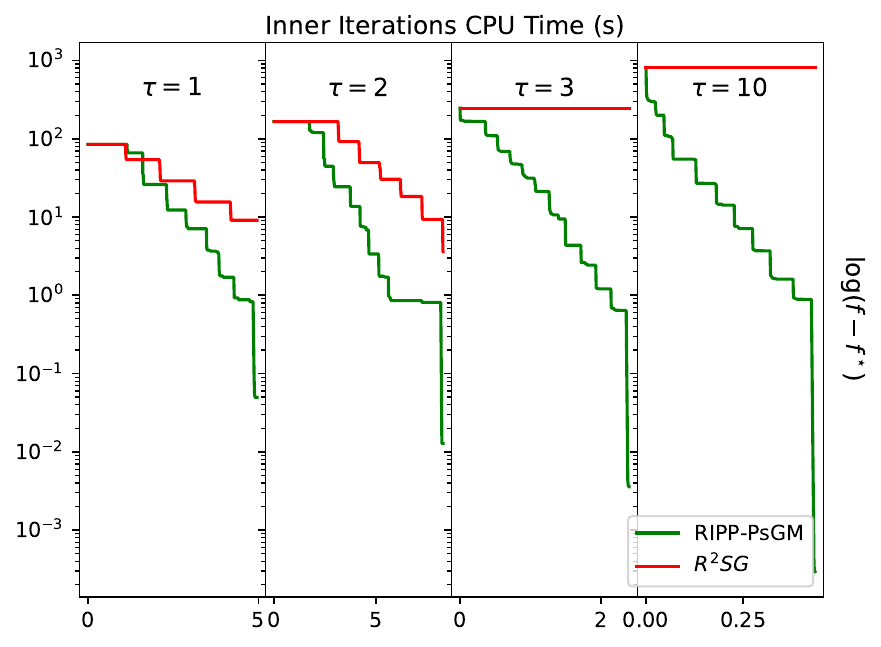}}}

  \caption{GraphSVM experiments on synthetic data, first row, on the 20newsgroups data-set, second row, and with $M=I_n$ on the third row.
	(a), (c), (e): Objective error evolution across inner iterations.
	(b), (d), (f): Objective error evolution across inner iterations measured in CPU time.
    The error is displayed in logarithmic scale.}
\label{fig:graphsvm}
\end{figure}

In Figure~\ref{fig:graphsvm} we present multiple experiments on different data and across different $\tau$ parametrizations. Figure \ref{fig:graphsvm} (a) and (b) compares RIPP-PsGM with R\textsuperscript{2}SG from \cite{Yan:18} on synthetic random data $\{C,X,M\}$ from the standard normal distribution.
The same initialization and starting point $x_0$ was used for all methods.
We use $m=100$ measurements of $n=512$ samples $x$
with initial parameters $\mu=0.1$, $\rho=1.0005$ and $\tau=1$ which we execute for 15 epochs. 

We repeat the experiment in Figure \ref{fig:graphsvm} (c) and (d), but this time on real-data
from the 20newsgroup data-set\footnote{https://cs.nyu.edu/~roweis/data.html}
following the experiment from \cite{Yan:18},
with parameters $\mu=0.1$, $\rho=1.005$, and $\tau=3$.
Here we find a similar behaviour for both methods as in the synthetic case. In Figure \ref{fig:graphsvm} (e) and (f),
we repeat the experiment by setting $M=I_m$ in \eqref{graphsvm} thus recovering the regularized $\ell_1$-SVM formulation.
We notice here that RIPP-PsGM maintains its position ahead of R\textsuperscript{2}SG and that the error drop is similar to the 20newsgroup experiment.

\begin{figure}[t]
	\centering
	\subfigure[]{\includegraphics[width=0.31\textwidth]{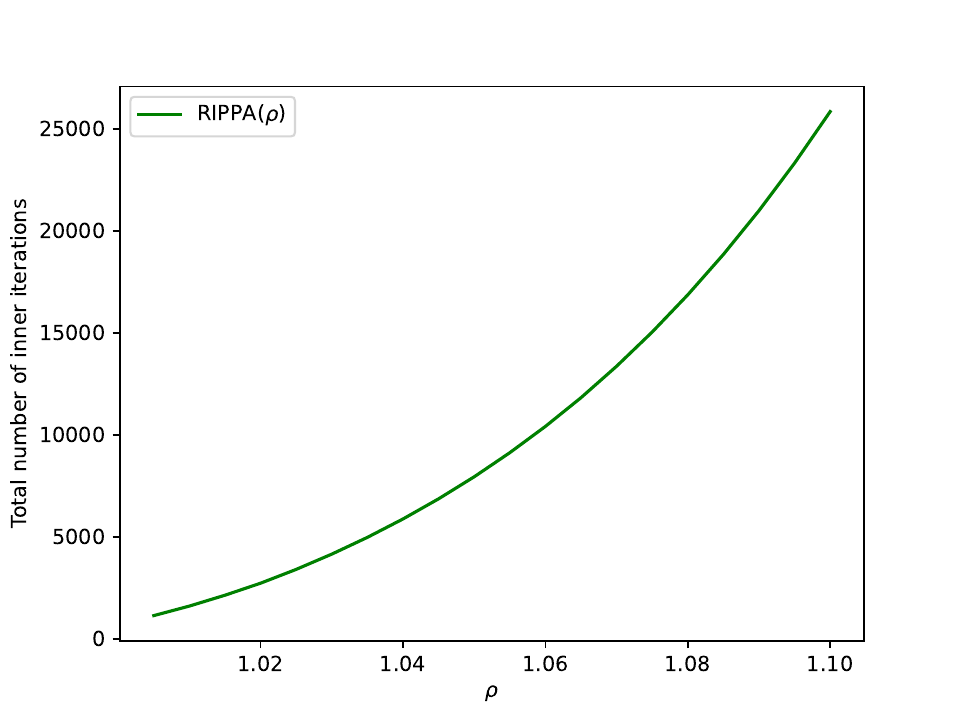}\label{fig:rippa_graphsvm_rho}}
	\subfigure[]{\includegraphics[width=0.31\textwidth]{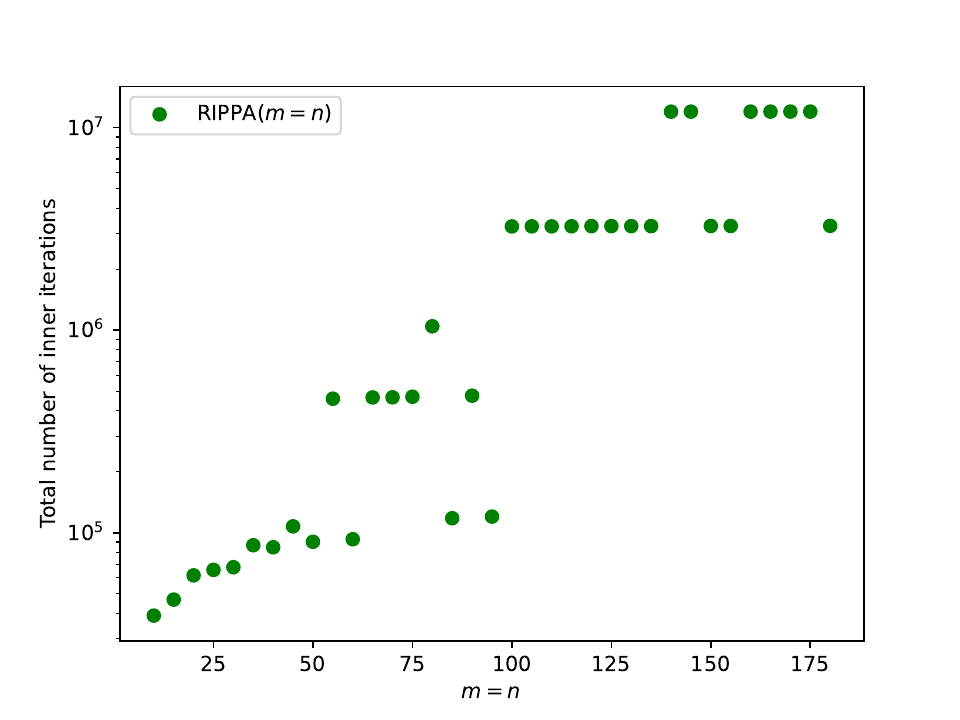}\label{fig:rippa_graphsvm_mn}}
	\subfigure[]{\includegraphics[width=0.31\textwidth]{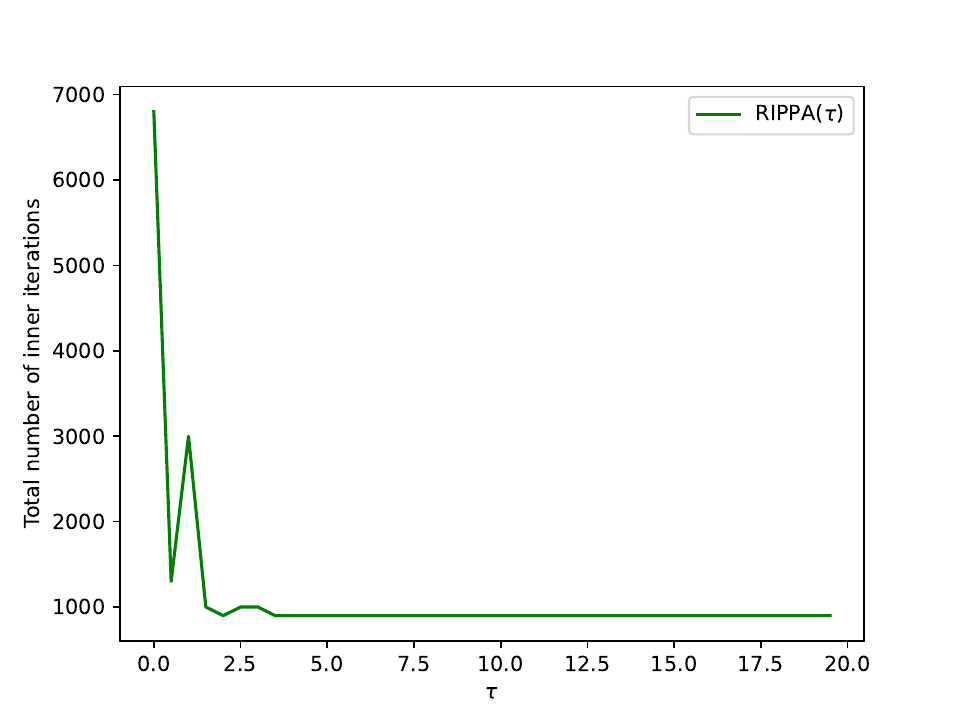}\label{fig:rippa_graphsvm_tau}}
	\caption{GraphSVM: Total number of inner iterations needed for various parametrizations.
		(a) Varying $\rho$.
		(b) Varied problem dimensions where we set $m=n$.
		(c) Varied $\tau$.}
	\label{fig:graphsvm_params}
\end{figure}

Furthermore, Figure~\ref{fig:graphsvm_params} rehashes the experiments from the $\ell_1$-LS Section in order to study the effect of the data dimensions and of the problem parameters on the number of total number of required inner iterations.
The results for $\rho$ and the data dimensions are as expected: as they grow they almost linearly increase the iteration numbers.
For the GraphSVM specific parameter $\tau$,
we find the results are opposite to that of $\ell_1$-LS;
it is harder to solve the problem when $\tau$ is small.

\begin{figure}[ht]
	\centering
	\subfigure[]{\includegraphics[width=0.49\textwidth]{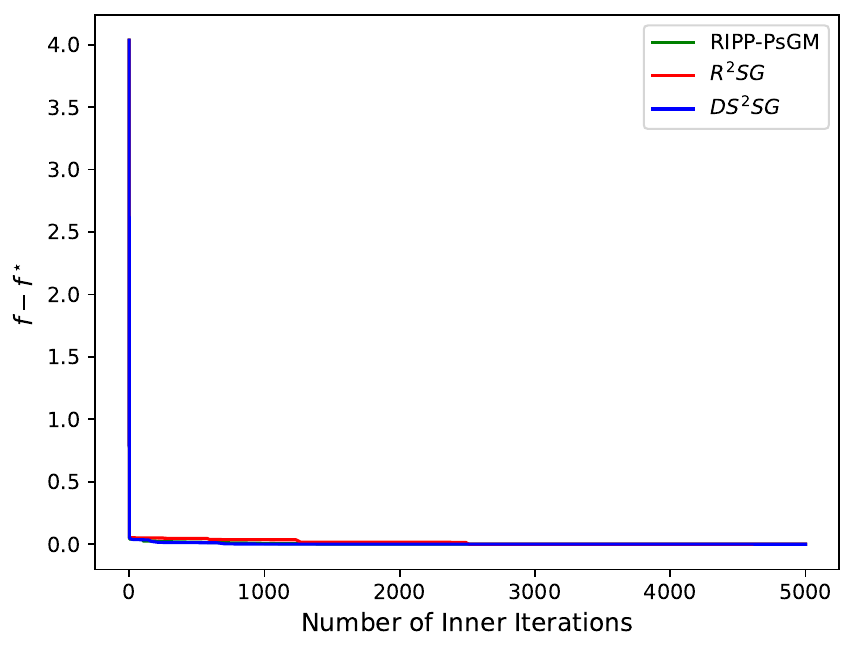}}
    \subfigure[]{\raisebox{5mm}{\includegraphics[width=0.5\textwidth]{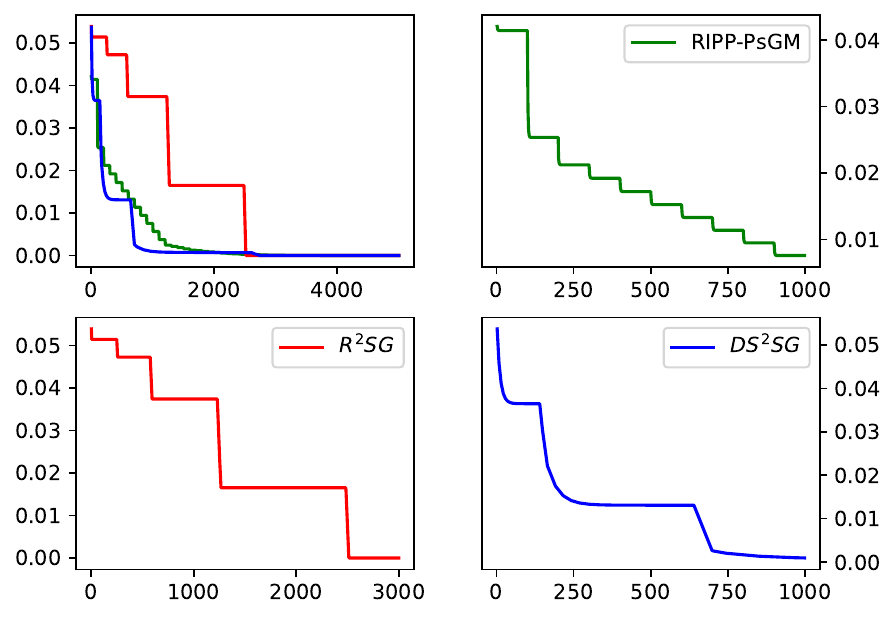}}}
  \caption{Sparse $\ell_1$-SVM: (a) Objective error evolution across inner iterations.
	(b) Zoomed objective error evolution.}
\label{fig:l1svm}
\end{figure}

\noindent In order to compare with DS\textsuperscript{2}SG~\cite{Joh:20} and R\textsuperscript{2}SG, we follow the Sparse SVM experiment from \cite{Joh:20} which requests $M=I_m$ and removes the regularization $\tau\norm{x}_1$ from \eqref{graphsvm} and instead
adds it as a constraint $\{x: \norm{x}_1 \le \tau\}$.
We used the parameters $\mu=0.001$, $\rho=1.0005$, and $\tau=0.4$ and plotted the results in Figure~\ref{fig:l1svm}.
Please note that the starting point is the same, with a quick initial drop for all three methods as can be seen in Figure~\ref{fig:l1svm} (a).
Execution times were almost identical and took around 0.15s.
To investigate the differences in convergence behaviour,
we zoom in after the first few iterations in the 4 panels of Figure~\ref{fig:l1svm} (b).
In the first panel we show the curves for all methods together and in the other three we see the individual curves for each.
Our experiments showed that this is the general behaviour for $\tau>0.4$:
DS\textsuperscript{2}SG has a slightly sharper drop,
with RIPP-PsGM following in closely with a staircase behaviour
while R\textsuperscript{2}SG takes a few more iterations to reach convergence.
For smaller values of $\tau$ we found that RIPP-PsGM reaches the solution in just 1--5 iterations within
$10^{-9}$ precision,
while the others lag behind for a few hundred iterations.

\subsection{Matrix Completion for Movie Recommendation}

In this last section,
the problem of matrix completion is applied
to the standard movie recommendation challenge which recovers a full user-rating  matrix $X$ from the partial observations $Y$
corresponding to the $N$ known user-movie ratings pairs.
\begin{align*}
	\min\limits_{X \in \rset^{m\times n}} \; & \; \frac{1}{N}\sum_{(i,j)\in\Sigma}^N |X_{ij} - Y_{ij}| + \tau\norm{X}_\star
\end{align*}
where $\Sigma$ is the set of user-movie pairs with $N = |\Sigma|$. Solving this will complete matrix X based on the known sparse matrix Y while maintaining a low rank.

\begin{figure}[t]
	\centering
	\subfigure[]{\includegraphics[width=0.49\textwidth]{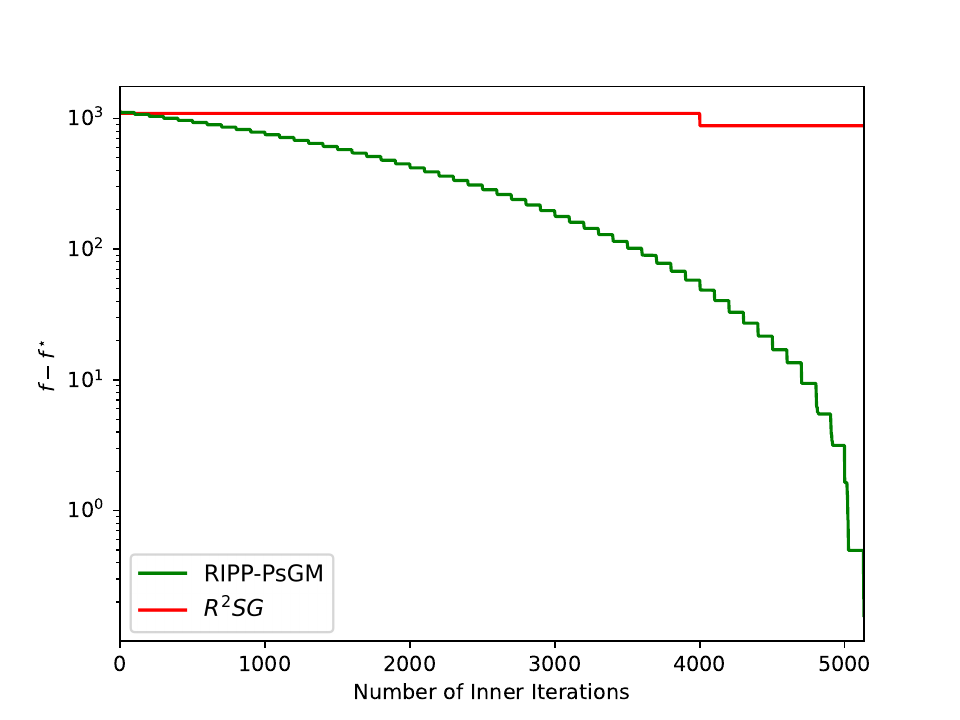}}
	\subfigure[]{\includegraphics[width=0.49\textwidth]{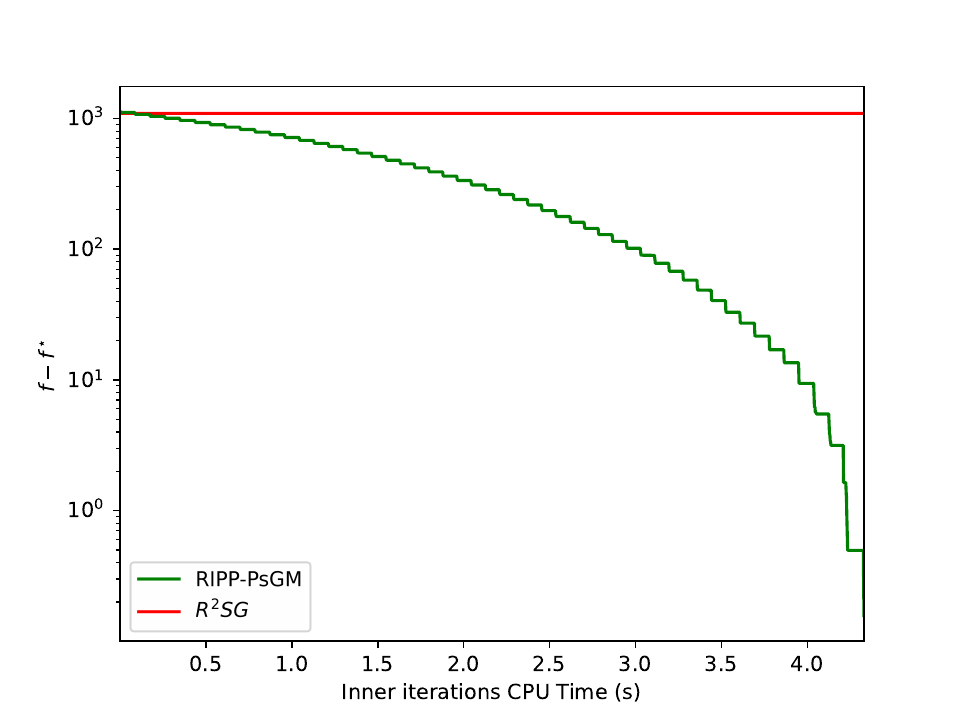}}
  \caption{Matrix completion: (a) Objective error evolution across inner iterations.
	(b) Objective error evolution across inner iterations measured in CPU time.}
\label{fig:mc}
\end{figure}

In Figure~\ref{fig:mc} we reproduce the experiment from \cite{Yan:18}
with parameters $\mu=0.1$, $\rho=1.005$, $\tau=3$
on a synthetic database with 50 movies and 20 users
filled with 250 i.i.d. randomly chosen ratings from 1 to 5.
We let a few more R\textsuperscript{2}SG iterations execute to show that the slight progress that is made.

\bibliographystyle{spmpsci}
\bibliography{inexact}


\section{Appendix}



\begin{proof}[Proof of Lemma \ref{lemma:deterministic_moreau_growth}]
	By using $\gamma$-HG, we get:
	\begin{align}
		F_{\mu}(x) - F^* 
		& = \min_z F(z) - F^* + \frac{1}{2\mu}\norm{z-x}^2 \nonumber \\
		& \ge \min_z \sigma_F\dist_{X^*}^{\gamma}(z)  + \frac{1}{2\mu}\norm{z-x}^2 \nonumber \\
		& = \min_{z,y \in X^*} \sigma_F\norm{z-y}^{\gamma}  + \frac{1}{2\mu}\norm{z-x}^2. \label{eq:eq:prelim_Moreau_growth}
	\end{align}
	The solution of \eqref{eq:eq:prelim_Moreau_growth} in $z$, denoted as $z(x)$  satisfies the following optimality condition: $\sigma_F\gamma \frac{z(x) - y}{\norm{z(x) - y}^{2 - \gamma}} + \frac{1}{\mu}\left(z(x) - x \right) = 0$, which simply implies that 
	\begin{align}\label{eq:solution_Holder_subproblem}
		z(x) = \frac{\norm{z-y}^{2-\gamma}}{\norm{z-y}^{2-\gamma} + \sigma_F \mu \gamma} x + \frac{\sigma_F \mu \gamma}{\norm{z-y}^{2-\gamma} + \sigma_F \mu \gamma} y.
	\end{align}
	$(i)$	For a function with sharp minima ($\gamma = 1$) it is easy to see that $(z(x) - y) \left[1 + \frac{\sigma_F \mu}{\norm{z(x) - y}} \right] = x - y$. By taking norm in both sides then: $\norm{z(x)- y} = \max\{0, \norm{x-y} - \sigma_F \mu\}$ and \eqref{eq:solution_Holder_subproblem} becomes:
	$$ z(x) = \begin{cases} y + \left( 1 - \frac{\sigma_F\mu}{\norm{x-y}} \right) (x-y), & \norm{x-y} > \sigma_F \mu\\ 
		y, & \norm{x-y} \le \sigma_F \mu \end{cases}$$
	By replacing this form of $z(x)$ into \eqref{eq:eq:prelim_Moreau_growth} we obtain our first result:
	\begin{align*}
		F_{\mu}(x) - F^* 
		& \ge \min_{y \in X^*} 
		\begin{cases}\sigma_F \norm{x-y}  - \frac{\mu\sigma-F^2}{2}, & \norm{x-y} > \sigma_F\mu \\ 
			\frac{1}{2\mu}\norm{y-x}^2, & \norm{x-y} \le \sigma_F\mu
		\end{cases} \\
		& \ge 
		\begin{cases}\sigma_F \dist_{X^*}(x)  - \frac{\mu\sigma^2_F}{2}, & \dist_{X^*}(x) > \sigma_F \mu \\ 
			\frac{1}{2\mu}\dist_{X^*}(x)^2, & \dist_{X^*}(x) \le \sigma_F \mu
		\end{cases}. 
	\end{align*}

	\noindent $(ii)$ For quadratic growth, \eqref{eq:solution_Holder_subproblem} reduces to $z(x) = \frac{1}{1 + 2\sigma_F \mu} x + \frac{2\sigma_F \mu }{1 + 2\sigma_F \mu} y$ and by \eqref{eq:eq:prelim_Moreau_growth} it leads to:
	\begin{align*}
		F_{\mu}(x) - F^* 
		& \ge \min_{y \in X^*}  \frac{\sigma_F}{1 + 2\sigma_F \mu}\norm{y-x}^2 = \frac{\sigma_F}{1 + 2\sigma_F \mu}\dist_{X^*}^2(x).
	\end{align*}
	
	\noindent  $(iii)$ Lastly, from \eqref{eq:solution_Holder_subproblem} we see that $z(x)$ lies on the segment $[x,y]$, i.e.  $z(x) = \lambda x + (1-\lambda)y$ for certain $\lambda \in [0, 1]$. Using this argument into \eqref{eq:eq:prelim_Moreau_growth} we equivalently have:
	\begin{align*}
		F_{\mu}(x) - F^* & \ge \min_{y \in X^*, \lambda \in [0,1], z = \lambda x + (1-\lambda)y} \sigma_F\norm{z-y}^{\gamma}  + \frac{1}{2\mu}\norm{z-x}^2 \\
		& = \min_{y \in X^*, \lambda \in [0,1]} \sigma_F\lambda^{\gamma}\norm{x-y}^{\gamma}  + \frac{(1-\lambda)^2}{2\mu}\norm{x-y}^2 \\
		&= \min_{\lambda \in [0,1]} \sigma_F\lambda^{\gamma}\dist^{\gamma}_{X^*}(x)  + \frac{(1-\lambda)^2}{2\mu}\dist^{2}_{X^*}(x) \\
		& \ge \min \left\{   \sigma_F\dist^{\gamma}_{X^*}(x), \frac{1}{2\mu}\dist^{2}_{X^*}(x) \right\} \min_{\lambda \in [0,1]} \lambda^{\gamma} + (1-\lambda)^2.
	\end{align*}
\end{proof}


\begin{proof}[Proof of Lemma \ref{lemma:morenvgrad_bound_relating_with_distopt}]
	Assume $\norm{\nabla_{\delta} F_{\mu}(x)}  \le \epsilon$. Observe that on one hand, by the triangle inequality that: $\norm{\nabla F_{\mu}(x)} \le \norm{\nabla_{\delta} F_{\mu}(x)} + \frac{\delta}{\mu} \le \epsilon + \frac{\delta}{\mu} =: \tilde{\epsilon}$. On the other hand,
	one can derive:
	\begin{align}
		\norm{\nabla F_{\mu}(x)} 
		\le \sqrt{\frac{2}{\mu}[F_{\mu}(x)-F^*]} 
		&  \le  
		\frac{1}{\mu}\dist_{X^*}(x). \label{upperbound_morenvgrad}
	\end{align}
	
	\noindent Now let $\gamma = 1$. Based on $\nabla F_{\mu}(x) \in \partial F(\prox_{\mu}^F(x))$, for nonoptimal $\prox_{\mu}^F(x)$ the bound \eqref{gamma1_Env_Holder_error_bound} guarantees $  \norm{\nabla F_{\mu}(x)} = \norm{F'(\prox_{\mu}^F(x))} \ge \sigma_F  $. Therefore, if $x$ would determine $\tilde{\epsilon} < \sigma_F$, and implicitly $\norm{\nabla F_{\mu}(x)} < \sigma_F$, the contradiction with the previous lower bound impose that $\prox_{\mu}^F(x) \in X^*$. Moreover, in this case, since $ \norm{x - \prox_{\mu}^F(x)} = \mu \norm{\nabla F_{\mu}(x)} \overset{\eqref{upperbound_morenvgrad}}{\le} \dist_{X^*}(x)$ then obviously $\prox_{\mu}^F(x) = \pi_{X^*}(x)$. On summary, a sufficiently small approximate gradient norm $\norm{\nabla_{\delta} F_{\mu}(x)} < \sigma_F - \frac{\delta}{\mu}$ also confirms a small distance to optimal set $\dist_{X^*}(x) \le \mu\tilde{\epsilon}$. 
	
	
	\noindent Let $\gamma > 1$. Then our assumption $\norm{\nabla F_{\mu}(x)} \le \tilde{\epsilon}$, \eqref{upperbound_morenvgrad} and \eqref{Holder_error_bound} implies the error bound:
	$\;\;	\dist_{X^*}(x) \le \Max{\left[ \frac{\tilde{\epsilon}}{\sigma_F \varphi(\gamma)} \right]^{\frac{1}{\gamma  - 1}}, \frac{2\mu \tilde{\epsilon}}{\varphi(\gamma)}}$,
	which confirms the last result.
\end{proof}


\begin{proof}[Proof of  Corrolary \ref{cor:complexity_sharp_minima}]
	From Theorem \ref{th:IPPconvergence}$(i)$ we obtain directly:
	\begin{align*}
		\dist_{X^*}(x^{k}) 
		\le  \max\left\{\dist_{X^*}(x^{0}) - k(\mu\sigma_F - \delta), \delta  \right\}, 
	\end{align*}
	which means that after at most $K$ iterations $x^k$ reaches $\dist_{X^*}(x^{K}) \le \delta < \mu\sigma_F$. Lastly, the same reasoning as in Lemma \ref{lemma:morenvgrad_bound_relating_with_distopt}, based on the relations \eqref{upperbound_morenvgrad} and \eqref{gamma1_Env_Holder_error_bound},  lead to $\prox_{\mu}^F(x^K) = \pi_{X^*}(x^K)$.
\end{proof}


\begin{proof}[Proof of  Corrolary \ref{cor:complexity_for_exact_case}]
	The proof for the first two estimates are immediately derived from Theorem \ref{th:IPPconvergence} $(i)$ and $(ii)$. For $\gamma \in (1,2)$, we considered $\alpha = \frac{\mu\varphi(\gamma)\sigma_F}{2}, \beta = \frac{1 + \sqrt{1-\sigma_F}}{2}$ into Corrolary \ref{corr:exact_complexity} and obtained an estimate for our exact case. To refine the complexity order, we majorized some constants by using: $(2\beta \mu\sigma_F)^{\frac{1}{2-\gamma}} \le (2 \mu\sigma_F)^{\frac{1}{2-\gamma}}$ and $1 - \sqrt{1-\varphi(\gamma)} < 1$. For $\gamma > 2$, we replace the same $\alpha$ as before and $\hat{\beta} = \Max{ \frac{1 + \sqrt{1-\sigma_F}}{2}, 1 - \frac{1}{\gamma - 1}   }$ into Corrolary \ref{corr:exact_complexity} to get the last estimate.
\end{proof}

\begin{proof}[Proof of Corrolary \ref{cor:complexity_for_inexact_case}]
	The proof for the first two estimates can be derived immediately from Theorem \ref{th:IPPconvergence} $(i)$ and $(ii)$. We provide details for the other two cases.
	
	For $\gamma \in (1,2)$ we use the same notations as in the proof of Theorem \ref{th:IPPconvergence} (given in the appendix). There, the key functions which decide the decrease rate of $\dist_{X^*}(x^k)$ are the nondecreasing function $h$ and accuracy $\hat{\delta}_k = \Max{\left( \frac{2\delta_k}{\mu\sigma_F \varphi(\gamma)} \right)^{\frac{1}{\gamma-1}},  \frac{2\delta_k}{1 - \sqrt{1-\varphi(\gamma)}}}$. First recall that 
	\begin{align}\label{varphi_bounds}
		\frac{1}{2} = \varphi(2) \le \varphi(\gamma) \le \varphi(1) = \frac{3}{4}.
	\end{align}
	which implies that for any $\delta \ge 0$
	\begin{align}\label{rel:low_bound_h}
		\frac{\delta}{2} \overset{\eqref{varphi_bounds}}{\le} \frac{1 + \sqrt{1-\varphi(\gamma)}}{2}\delta \le h(\delta).
	\end{align}
	Recalling that $\bar{\delta}_k = \Max{\hat{\delta}_{k}, h^{}(\bar{\delta}_{k-1})}$, then by Theorem \ref{th:IPPconvergence}$(iii)$ we have:
	\begin{align}\label{rel:IPPconvergence_reloaded}
		\dist_{X^*}(x^k) 
		&\le 
		\max\left\{h^{(k)}(\dist_{X^*}(x^0)), \bar{\delta}_k \right\}
	\end{align}	
	By taking $\delta_k = \frac{\delta_{k-1}}{2}$ then $ \hat{\delta}_{k} =
	\Max{\frac{1}{2^{\frac{1}{\gamma - 1}}}\left( \frac{2\delta_{k-1}}{\mu\sigma_F \varphi(\gamma)} \right)^{\frac{1}{\gamma-1}},  \frac{1}{2}\frac{2\delta_{k-1}}{1 - \sqrt{1-\varphi(\gamma)}}} \le \frac{\hat{\delta}_{k-1}}{2} \overset{\eqref{rel:low_bound_h}}{\le} h^{}(\hat{\delta}_{k-1})$. By this recurrence, the monotonicity of $h$ and $\hat{\delta}_0 = \bar{\delta}_0$, we derive: 
	\begin{align*}
		\bar{\delta}_k 
		& \le \Max{h(\hat{\delta}_{k-1}), h^{}(\bar{\delta}_{k-1})} = h^{}(\bar{\delta}_{k-1}) \\
		& = h^{(k)}(\hat{\delta}_{0}). 
	\end{align*}
	Finally this key bound enters into \eqref{rel:IPPconvergence_reloaded} and we get:
	\begin{align*}
		\dist_{X^*}(x^k) 
		& \le \Max{h^{(k)}\left(\dist_{X^*}(x^0)\right), h^{(k)}\left(\hat{\delta}_{0}\right)}\\ 
		& \le h^{(k)}\left(\Max{\dist_{X^*}(x^0), \hat{\delta}_{0}}\right), 
	\end{align*}
	where for the last equality we used the fact that, since $h$ is nondecreasing, $h^{(k)}$ is monotonically nondecreasing.
	Finally, by applying Theorem \ref{th:central_recurrence} we get our result.
	Now let $\gamma >2$. By redefining $h$ as in Theorem \ref{th:IPPconvergence}, observe that
	\begin{align}\label{rel:low_bound_h_hat}
		\delta \left( 1 - \frac{1}{\gamma - 1}\right)\le h(\delta).
	\end{align}
	Take $\delta_k = \left( 1 - \frac{1}{\gamma - 1}\right)^{\gamma - 1} \delta_{k-1}$ then 
	\begin{align*}
		\hat{\delta}_{k} & =
		\Max{\left( 1 - \frac{1}{\gamma - 1}\right)  \left( \frac{2\delta_{k-1}}{\mu\sigma_F \varphi(\gamma)} \right)^{\frac{1}{\gamma-1}},  \left( 1 - \frac{1}{\gamma - 1}\right)^{\gamma - 1} \frac{2\delta_{k-1}}{1 - \sqrt{1-\varphi(\gamma)}}} \\
		& \le \left( 1 - \frac{1}{\gamma - 1}\right) \hat{\delta}_{k-1} \overset{\eqref{rel:low_bound_h_hat}}{\le} h^{}(\hat{\delta}_{k-1}). 
	\end{align*}
	We have shown in the proof of Theorem \ref{th:IPPconvergence} that also this variant of $h$ is nondecreasing and thus, using the same reasoning as in the case $\gamma \in (1,2)$ we obtain the above result.
\end{proof}


\begin{proof}[Proof of Theorem \ref{th:RIPP_complexity}]
	In this proof we use notation $K_t$ for the number of iterations in the $t-$th epoch, large enough to turn the stopping criterion to be satisfied. We denote $x^{k,t}$ as the $k-$th IPPA iterate during $t-$th epoch. From Theorem \ref{th:general_IPP_convrate} (from Appendix B) we have that
	\begin{align}\label{rel:epoch_length}
		K_t = \left \lceil \frac{\dist_{X^*}(x^{t})}{ \delta_t} \right\rceil
	\end{align}
	iterations are sufficient to guarantee $\norm{\nabla_{\delta_{t}}F_{\mu_t}(\hat{x}^{K_t,t})} \le 5\delta_{t}^{\nabla}$  and thus the end of $t-$th epoch. Furthermore, by the triangle inequality
	\begin{align}\label{rel:lower_bound_noisygrad}
		\norm{\nabla F_{\mu_t}(x^{t+1})} - \delta_t^{\nabla} \le \norm{\nabla_{\delta_t} F_{\mu_t}(x^{t+1})} \le  5\delta_t^{\nabla},
	\end{align}
	which implies that the end of $t-$th epoch we also have $\norm{\nabla F_{\mu_t}(x^{t+1})} \le 6\delta_t^{\nabla}$.

	\vspace{5pt}
	
	\noindent Let WSM  hold and recall assumption $\dist_{X^*}(x^0) \ge \mu_0\sigma_F$. 
	For sufficiently large $t$ we show that restartation loses any effect and after a single iteration the stopping criterion of epoch $t$ is satisfied. We separate the analysis in two stages: the first stage covers the epochs that produce $x^{t+1}$ satisfying $\norm{ \nabla F_{\mu_t}(x^{t+1})} > \sigma_F$. The second one covers the rest of epochs when the gradient norms decrease below the threshold $\sigma_F$, i.e. $\norm{ \nabla F_{\mu_t}(x^{t+1})} \le \sigma_F$.
	
	In the first stage, the stopping rule $\norm{ \nabla F_{\mu_t}(x^{t+1})} \le \delta_t^{\nabla}$ limits the first stage to maximum $T_1 = \left \lceil\frac{1}{\rho}\log \left( \frac{12\delta_0}{\sigma_F}\right) \right\rceil$ epochs. 
	The total number of iterations in this stage is bounded by: $\sum_{t = 0}^{T_1}K_t  \le T_1K_0 $.
	
	For the second stage when $\norm{\nabla F_{\mu_{t-1}}(x^t)} < \sigma_F$, Lemma \ref{lemma:morenvgrad_bound_relating_with_distopt} states that $\prox_{\mu,F}(x^t) = \pi_{X^*}(x^t)$ and thus we have: 
	$\;\;	\dist_{X^*}(x^t) = \mu_{t-1}\norm{\nabla F_{\mu_{t-1}}(x^t)} \le \mu_{t-1} \delta_{t-1}^{\nabla} = \delta_{t-1}  < 2\mu_{t-1}\sigma_F = \mu_t\sigma_F.$ 
	Therefore, by Theorem \ref{th:IPPconvergence}: 
	\begin{align*}
		\dist_{X^*}(x^{t+1}) 
		&\le \max \left\{\dist_{X^*}(x^t) - K_{t}(\mu_{t}\sigma_F-\delta_t),\delta_t \right\} \\
		&\le \max \left\{\mu_{t}\sigma_F - K_t (\mu_t\sigma_F- \delta_t),\delta_t \right\}	
	\end{align*}
	which means that after a single iteration, i.e. $K_t = 1$, it is guaranteed that $\dist_{X^*}(x^{t+1}) \le \delta_t$. In this phase, the output of IPPA is in fact the only point produced in $t$-th epoch and the necessary number of epochs (or equivalently the number of IPPA iterations) is $T_2 = \BigO{\frac{1}{\rho-1}\log\left( \frac{\delta_{T_1}}{\epsilon}\right)}$.

	\vspace{10pt}

	\noindent Let $\gamma = 2$. By Lemma  \ref{lemma:deterministic_moreau_growth}$(ii)$ and convexity of $F_{\mu}$ yields $ \frac{\sigma_F}{1+2\sigma_F\mu}\dist_{X^*}^2(x) \le \langle \nabla F_{\mu}(x), x - \pi_{X^*}(x)\rangle \le \norm{\nabla F_{\mu}(x)}\dist_{X^*}(x)$. By using the inequality formed by the first and last terms, together with the relation \eqref{rel:lower_bound_noisygrad}, then at the end of $t-1$ epoch:
	\begin{align*}
		\dist_{X^*}(x^{t}) \le  \norm{\nabla F_{\mu_{t-1}}(x^t)} \left(    \frac{1}{ \mu_{t-1} \sigma_F} + 2   \right) \le 6\delta_{t-1}\left(    \frac{1}{ \mu_{t-1} \sigma_F} + 2   \right),
	\end{align*}
	suggesting that the necessary number of epochs is 
	$T = \BigO{\frac{1}{\rho}\log \left( \frac{\delta_0}{\epsilon}\right)}.$
	This fact allow us to refine $K_t$ in \eqref{rel:epoch_length} as 
	$K_t = \left\lceil   3 \cdot 2^{\rho}\left( 2  + \frac{1}{\mu_{t-1}\sigma_F}\right)     \right\rceil \quad \forall t \ge 1. \nonumber $
	Since $K_t$ is bounded, then the total number of IPPA iterations has the order	$\sum_{t = 0}^{T-1}K_t   = K_0 + \BigO{T}.$
	
	\vspace{5pt}
	
	\noindent Let $\gamma > 1$. Similarly as in the previous two cases, $\norm{\nabla F_{\mu_{t-1}}(x^t)} \le 6\delta_t^{\nabla}$ guaranteed by $t-1$ epoch further implies
	$\;\;	\dist_{X^*}(x^{t}) \overset{\eqref{gamma_all_Env_Holder_error_bound}}{\le} \Max{ \frac{12\delta_{t-1}}{\varphi(\gamma)} , \left[ \frac{ 6\delta_{t-1}^{\nabla} }{\varphi(\gamma)\sigma_F} \right]^{\frac{1}{\gamma-1}}  },$
	which suggests that the maximal number of epochs is 
	\begin{align*}
		T = \BigO{\Max{ \frac{\gamma-1}{\rho}\log \left( \frac{\delta_0}{\epsilon}\right), \frac{1}{\rho-1}\log \left( \frac{\mu_0\delta_0}{\epsilon}\right)}}.
	\end{align*}
	Now, $K_t$ of \eqref{rel:epoch_length} becomes:
	$\;	K_t = \left\lceil   \Max{ \frac{3\cdot 2^{\rho+1}}{\varphi(\gamma)} ,  D 2^{t\left[ (\rho - 1)  - \frac{\rho}{\gamma-1}\right] + \frac{\rho}{\gamma-1} }   }  \right\rceil $ for all $t \ge 1,$
	where $D = \left(\frac{6\delta_0}{\sigma_F \varphi(\gamma)}  \right)^{2/\gamma} \frac{1}{\mu_0\delta_0}$. For $\gamma \le 2$, $K_t$ is bounded, thus for $\gamma > 2$ we further estimate
	the total number of IPPA iterations by summing:
	\begin{align}
		\sum_{t = 0}^{T_1}K_t  
		& = K_0 + \sum_{t = 1}^{T_1} \left\lceil   \Max{ \frac{3\cdot 2^{\rho+1}}{\varphi(\gamma)} ,  D 2^{t\left[ (\rho - 1)  - \frac{\rho}{\gamma-1}\right] + \frac{\rho}{\gamma-1} }   }  \right\rceil \\
		& \le K_0 + T_1 +  \Max{ \frac{3\cdot 2^{\rho+1}}{\varphi(\gamma)}T_1 ,  D2^{\frac{\rho}{\gamma-1}} \sum_{t = 1}^{T} 2^{t\left[ (\rho - 1)  - \frac{\rho}{\gamma-1}\right]  }   }   \nonumber
	\end{align}
	Finally, 
	$\sum_{t = 1}^{T} 2^{t\left[ (\rho - 1)  - \frac{\rho}{\gamma-1}\right]  } 
	= \BigO{\left( \frac{\delta_0}{\epsilon}\right)^{ \Max{ \left(1  -\frac{1}{\rho} \right)(\gamma - 1) - 1 , 1 - \frac{\rho}{(\rho - 1)(\gamma - 1)} }    } }
	$.


\end{proof}

\begin{proof}[Proof of Theorem \ref{th:main_computational_theorem}]
	We keep the same notations as in the proof of Theorem \ref{th:RIPP_complexity} and redenote $\delta_t^{\nabla}:=\delta_t$. 
	By assumption $\delta_0 \ge 2L_f$ we observe that $	(4\alpha_0\mu_0L_f^2)^{\frac{1}{2}} \le 2\mu_0L_f \le \mu_0 \delta_0$.
	Since $\alpha_t = \alpha_02^{-(2\rho-1)t}$, then the inequality  $4\alpha_t\mu_tL_f^2 \le  (\delta_t)^{2} $ recursively holds for all $t \ge 0$.
	This last inequality allow Theorem \ref{th:Holder_gradients} to establish that at $t-$epoch there are enough:
	\begin{align*}
		[t = 0] &\quad	N_0 = \left\lceil 8\log\left( \frac{ \norm{\nabla F_{\mu_0}(x^0)} }{\delta_0}  \right)  \right\rceil  \\
		[t > 0]	& \quad N_t = \left\lceil 4\cdot 2^{2\rho t} \log\left( \frac{ \mu_{t-1}\delta_{t-1}}{\mu_t\delta_{t}}  \right)\right\rceil    = 
		\left\lceil 4(\rho - 1)2^{2\rho t}\right\rceil   
	\end{align*}
	PsGM iterations. Lastly, we compute the total computational cost by summing over all $N_t$.
	Recall that	at the end of $t-$th epoch RIPPA guarantees that $\norm{\nabla F_{\mu_t}(x^{t+1})} \le \delta_t^{\nabla}$. After a number of epochs of $T = \left \lceil \frac{1}{\rho} \log\left( \frac{\delta_0}{\sigma_F} \right)\right \rceil$, measuring a total number of:
	\begin{align*}
		\T_{1} = \sum\limits_{t = 0}^{T_1 - 1} N_t K_{t}
		& = N_0 K_{0} + K_{0}\sum\limits_{t = 1}^{T_1-1} \left\lceil 4(\rho - 1)2^{2\rho t}\right\rceil 
		\\
		&= \BigO{K_{0} 2^{2\rho T_1}} = \BigO{K_{0} \left(\frac{\delta_0}{\sigma_F} \right)^{2} }.
	\end{align*}
	PsGM iterations, then one has $x^{T}$ satisfying $\dist_{X^*}(x^{T}) \le \mu_{T-1}\delta_T \le \mu_{T-1}\sigma_F$. Now we evaluate the final postprocessing loop, which aims to bring the iterate $x^T$ into the $\epsilon-$suboptimality region. By Theorem \ref{th:Holder_gradients}, a single call of $PsGM \left(x^{T},x^{T},\beta_0, \mu_{T-1}, \left\lceil 2\left(\frac{L_f}{\delta_T} \right)^2 \right\rceil \right)$ guarantees that $\dist_{X^*}(\tilde{x}^{1}) \le \frac{\beta_0 L_f^2}{2\sigma_F} \le \frac{\beta_0 L_f^2}{2\delta_T} = \frac{\delta_T}{2}$. 
	In general, if $\dist_{X^*}(\tilde{x}^{k}) \le \frac{\beta_{k-1} L_f^2}{2 \sigma_F} \le \frac{\beta_{k-1} L_f^2}{2 \delta_{T}} $, Theorem \ref{th:Holder_gradients} specifies that after $\left \lceil 2\left(\frac{\dist_{X^*}(\tilde{x}^{k})}{\beta_{k} L_f} \right)^2 \right \rceil  \overset{\text{Theorem} \; \ref{th:Holder_gradients}}{\le}
	\left\lceil 2\left(\frac{L_f}{\delta_{T}} \right)^2 \right \rceil $ routine iterations, the output $\tilde{x}^{k+1}$ satisfies $\dist_{X^*}(\tilde{x}^{k+1}) \le \frac{\beta_k L_f^2}{2 \sigma_F} \le \frac{\beta_k L_f^2}{2 \delta_{T}}$. Therefore, 	by setting $N =\left\lceil 2\left(\frac{L_f}{\delta_T} \right)^2 \right\rceil, K = \lceil\log(\delta_T/\epsilon)\rceil$  and by running the final procedure \textbf{Postprocessing}($x^T,\delta_T^2/L_f^2,\mu_{T-1},N,K$), the "second phase" loop produces $\dist_{X^*}(x^{K}) \le \epsilon$. The total cost of the "second phase" loop can be computed by  
	$ \T_2 = K N = \left\lceil 2\left(\frac{L_f}{\delta_T} \right)^2 \right\rceil \left\lceil\log \left(\frac{\delta_T}{\epsilon}\right)\right\rceil.$
	Lastly, by taking into account that $\delta_t = \BigO{\sigma_F}$, then the total complexity has the order:
	$ \;	\T_1 + \T_2 = \BigO{ K_{0} \left(\frac{\delta_0}{\sigma_F} \right)^{2}  + \left(\frac{L_f}{\sigma_F} \right)^2 \log \left(\frac{\sigma_F}{\epsilon}\right)},$
	which confirms the first part of the above result.

	\noindent Now we prove the second part of the result. By assumption $\delta_0 \ge (2L_f^2)^{\frac{1}{2(1-\nu)}} \mu_0^{\frac{\nu}{1-\nu}} $ we observe that:
	\begin{align}\label{rel:induction_initial_step}
		(4\alpha_0\mu_0L_f^2)^{\frac{1}{2(1-\nu)}} \le \mu_0 \delta_0.
	\end{align}
	Further we show that, for appropriate stepsize choices $\alpha_t$, the inequality  $4\alpha_t\mu_tL_f^2 \le  (\mu_t\delta_t)^{2(1-\nu)} $ recursively holds for all $t \ge 0$ under initial condition \eqref{rel:induction_initial_step}. Indeed let $2(\rho - 1)(1-\nu) \le q-1$, then 
	\begin{align}
		4\alpha_t \mu_t L_f^2  
		& = 
		\frac{2\mu_{0}^2 L_f^2}{2^{(q-1)t}}  
		\overset{\eqref{rel:induction_initial_step}}{\le}  \frac{(\mu_0\delta_0)^{2(1-\nu)}  }{2^{2(\rho-1)(1-\nu)t}}    = (\mu_t \delta_t)^{2(1-\nu)}. \label{rel:constant_inner_condition}
	\end{align}
	The inequality \eqref{rel:constant_inner_condition} allow Theorem \ref{th:Holder_gradients} to establish the necessary inner complexity for each IPPA iteration. By using bounds from Theorem \ref{th:Holder_gradients}, at $t-$epoch there are enough:
	\begin{align*}
		[t = 0] &\quad	N_0 = \left\lceil 8\log\left( \frac{ \norm{\nabla F_{\mu_0}(x^0)} }{\delta_0}  \right)  \right\rceil  \\
		[t > 0]	& \quad N_t = \left\lceil 4\cdot 2^{(q+1)t} \log\left( \frac{\mu_{t-1} \delta_{t-1}}{\mu_t\delta_t}  \right)\right\rceil    = 
		\left\lceil 4(\rho - 1)2^{(q+1)t}\right\rceil   
	\end{align*}
	PsGM iterations. We still keep the same notations from Theorem \ref{th:RIPP_complexity}. 
	

	\noindent Let $\gamma > 1$ and recall $T = \BigO{  \Max{   \frac{\gamma-1}{\rho}\log(\mu_0\delta_0/\epsilon), \frac{1}{\rho-1}\log(\mu_0\delta_0/\epsilon)  } }.$ By following a similar reasoning, we require:
	\begin{align*}
		\sum\limits_{t = 0}^{T - 1} N_t K_{t} 
		& = N_0 K_{0} + \BigO{  
			\sum_{t = 1}^{T} 2^{t\left[ (\rho - 1)  - \frac{\rho}{\gamma-1} + q + 1\right]  } 
		}     \\
		& = N_0 K_{0}+ \BigO{ 2^{T\left[ (\rho - 1)  - \frac{\rho}{\gamma-1} + q + 1\right] }}.
	\end{align*}
	Let $\zeta = \frac{\rho - 1}{\rho}(\gamma - 1) \ge 1$, then the exponent of the last term becomes:
	\begin{align*}
		T\left[ (\rho - 1)  - \frac{\rho}{\gamma-1} + q + 1\right] 
		& = \Max{ \frac{\gamma - 1}{\rho}, \frac{1}{\rho - 1}   }  \left[ (\rho - 1)  - \frac{\rho}{\gamma-1} + q + 1\right] \\
		& = \gamma - 2 + \frac{q}{\rho}(\gamma - 1).
	\end{align*}
	Otherwise, if $\zeta < 1$ then the respective exponent turns into:
	$	T\left[ (\rho - 1)  - \frac{\rho}{\gamma-1} + q + 1\right] 
	= \frac{\rho}{\rho-1}\frac{\gamma - 2}{\gamma - 1} + \frac{q}{\rho - 1}.$

\end{proof}


\subsection{Appendix B}

\begin{lemma}\label{lemma:first_sequence}
	Let the sequence $\{u_k\}_{k \ge 0}$ satisfy: $u_{k+1} \le \alpha_k u_k + \beta_k,$ where $\alpha_k \in [0,1), \{\beta_k\}_{k \ge 0} $ nonincreasing and $\sum\limits_{i=0}^{\infty} \beta_i \le \Gamma$. Then the following bound holds:
	\begin{align*}
		u_k \le u_0 \prod\limits_{j=0}^k \alpha_j  + \Gamma \prod\limits_{j=\lceil k/2 \rceil + 1}^k \alpha_j   + \max\limits_{\lceil k/2 \rceil +1 \le i \le k}\frac{\beta_i}{1-\alpha_i}.
	\end{align*}
	Moreover, if $\alpha_k = \alpha \in [0,1)$ then:
	$\;\; u_k \le \alpha^{(k-4)/2}(u_0 + \Gamma)+ \frac{\beta_{\lceil k/2 \rceil +1}}{1-\alpha}.$ 
\end{lemma}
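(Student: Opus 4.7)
The natural starting point is to unroll the one-step recurrence into a closed-form bound. By induction on $k$ it follows that
\begin{equation*}
u_k \;\le\; u_0 \prod_{j=0}^{k-1}\alpha_j \;+\; \sum_{i=0}^{k-1}\beta_i \prod_{j=i+1}^{k-1}\alpha_j,
\end{equation*}
with the convention that an empty product is $1$. The goal is then to show that the tail sum behaves like the two pieces on the right-hand side of the claim, so the entire proof reduces to splitting this sum at the midpoint $i_0 = \lceil k/2\rceil$ and estimating each half separately.

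For indices $i\le i_0$, each factor $\alpha_j\in[0,1)$ so the product $\prod_{j=i+1}^{k-1}\alpha_j$ is majorized by $\prod_{j=i_0+1}^{k-1}\alpha_j$ (a shorter product of sub-unit factors is larger). Pulling this common bound out and using $\sum_{i\ge 0}\beta_i \le \Gamma$ yields
\begin{equation*}
\sum_{i=0}^{i_0}\beta_i \prod_{j=i+1}^{k-1}\alpha_j \;\le\; \Gamma \prod_{j=i_0+1}^{k-1}\alpha_j,
\end{equation*}
which, up to cosmetic reindexing, is the second term of the lemma.

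For indices $i\ge i_0+1$, I plan to exploit the telescoping identity
\begin{equation*}
(1-\alpha_i)\prod_{j=i+1}^{k-1}\alpha_j \;=\; \prod_{j=i+1}^{k-1}\alpha_j \;-\; \prod_{j=i}^{k-1}\alpha_j.
\end{equation*}
Writing $\beta_i = \frac{\beta_i}{1-\alpha_i}(1-\alpha_i)$, factoring out the maximum $M_k := \max_{i_0+1\le i\le k-1}\tfrac{\beta_i}{1-\alpha_i}$, and summing gives a telescoping series whose value is at most $1$, so
\begin{equation*}
\sum_{i=i_0+1}^{k-1}\beta_i\prod_{j=i+1}^{k-1}\alpha_j \;\le\; M_k\bigl(1 - \textstyle\prod_{j=i_0+1}^{k-1}\alpha_j\bigr) \;\le\; M_k.
\end{equation*}
This is the main (and only) substantial step; everything else is routine bookkeeping. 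Combining the two halves proves the first assertion.

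The constant-$\alpha$ specialization follows by substitution: the first term becomes $u_0\alpha^{k-1}$, the second $\Gamma\,\alpha^{k-1-i_0}$, and the third $\frac{\beta_{i_0+1}}{1-\alpha}$ thanks to monotonicity of $\{\beta_i\}$. Since $i_0 = \lceil k/2\rceil \le (k+1)/2$, we have $k-1-i_0 \ge (k-4)/2$ and obviously $k-1\ge(k-4)/2$, hence both $\alpha^{k-1}$ and $\alpha^{k-1-i_0}$ are dominated by $\alpha^{(k-4)/2}$. Merging the two geometric terms into $(u_0+\Gamma)\alpha^{(k-4)/2}$ yields the stated bound.
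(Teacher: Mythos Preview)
Your proposal is correct and follows essentially the same route as the paper: unroll the recurrence, split the tail sum at $\lceil k/2\rceil$, bound the early half by $\Gamma$ times the common product, and handle the late half via the telescoping identity $(1-\alpha_i)\prod_{j>i}\alpha_j=\prod_{j>i}\alpha_j-\prod_{j\ge i}\alpha_j$ after factoring out $\max_i \beta_i/(1-\alpha_i)$. The only differences are cosmetic index shifts (you bound $u_k$ with products running to $k-1$, whereas the paper's proof actually bounds $u_{k+1}$ with products running to $k$, matching the statement only up to the same off-by-one you flagged).
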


\begin{proof}[Proof of Lemma \ref{lemma:first_sequence}] 
	By using a simple induction we get:
	\begin{align*}
		u_{k+1} 	
		&\le \alpha_k u_k + \beta_k  \le u_0 \prod\limits_{j=0}^k \alpha_j  + \sum\limits_{i=0}^k \beta_i \prod\limits_{j=i+1}^k \alpha_j \\ 
		& = u_0 \prod\limits_{j=0}^k \alpha_j  + \sum\limits_{i=0}^{\lceil k/2 \rceil} \beta_i \prod\limits_{j=i+1}^k \alpha_j + \sum\limits_{i=\lceil k/2 \rceil+1}^k \beta_i \prod\limits_{j=i+1}^k \alpha_j
	\end{align*}
	\begin{align*}
		& \le u_0 \prod\limits_{j=0}^k \alpha_j  + \Gamma \prod\limits_{j=\lceil k/2 \rceil + 1}^k \alpha_j   + \sum\limits_{i=\lceil k/2 \rceil+1}^k \frac{\beta_i}{1-\alpha_i}(1-\alpha_i) \prod\limits_{j=i+1}^k \alpha_j \\
		& \le u_0 \prod\limits_{j=0}^k \alpha_j  + \Gamma \prod\limits_{j=\lceil k/2 \rceil + 1}^k \alpha_j   + \max\limits_{\lceil k/2 \rceil +1 \le i \le k}\frac{\beta_i}{1-\alpha_i} \sum\limits_{i=\lceil k/2 \rceil+1}^k (1-\alpha_i) \prod\limits_{j=i+1}^k \alpha_j\\ 
		& \le u_0 \prod\limits_{j=0}^k \alpha_j  + \Gamma \prod\limits_{j = \lceil k/2 \rceil + 1}^k \alpha_j   + \max\limits_{\lceil k/2 \rceil +1 \le i \le k}\frac{\beta_i}{1-\alpha_i}.
	\end{align*}
\end{proof}

%
%

\begin{lemma}\label{max_lemma}
	Let $a_1 \ge \cdots \ge a_n$ be $n$ real numbers, then the following relation holds:
	\begin{align*}
		\max\left\{ 0, a_n, a_n + a_{n-1}, \cdots, \sum\limits_{j = 1}^n a_j \right\}  = \max \left\{0,\sum\limits_{j = 1}^n a_j \right\}. 
	\end{align*} 
\end{lemma}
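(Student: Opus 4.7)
The cleanest way to prove this is to recognize that the partial sums inside the max form a discretely convex sequence, so their maximum is attained at an endpoint.

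Let me define $T_0 = 0$ and, for $k = 1, \ldots, n$, set $T_k = a_n + a_{n-1} + \cdots + a_{n-k+1} = \sum_{j=n-k+1}^n a_j$. Then the set under consideration is exactly $\{T_0, T_1, \ldots, T_n\}$. First I would compute the discrete first differences: $T_k - T_{k-1} = a_{n-k+1}$ for $k = 1, \ldots, n$. As $k$ increases from $1$ to $n$, the index $n-k+1$ decreases from $n$ to $1$, so the successive increments are $a_n, a_{n-1}, \ldots, a_1$. Since by hypothesis $a_1 \ge a_2 \ge \cdots \ge a_n$, this sequence of increments is nondecreasing in $k$.

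A sequence whose first differences are nondecreasing is convex (in the discrete sense), and the maximum of a convex sequence over the index set $\{0, 1, \ldots, n\}$ is attained at one of the two endpoints. Hence
\begin{align*}
\max_{0 \le k \le n} T_k = \max\{T_0, T_n\} = \max\bigl\{0, \textstyle\sum_{j=1}^n a_j\bigr\},
\end{align*}
which is exactly the claim. The discrete-convexity argument can be spelled out directly in one line: if $T_{k-1} \le T_k$ for some $k$, then $a_{n-k+1} \ge 0$ and the nondecreasing-increment property forces $a_{n-k} \ge a_{n-k+1} \ge 0$, hence $T_k \le T_{k+1}$; so once the sequence starts to rise it keeps rising, which means the maximum is at $T_0$ (if it never rises) or at $T_n$ (otherwise).

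I do not expect any real obstacle here. The only thing worth stating carefully is the direction of monotonicity: the original ordering $a_1 \ge \cdots \ge a_n$ makes the increments \emph{non}decreasing (not nonincreasing), which is precisely what yields convexity rather than concavity and places the maximum at the endpoints rather than in the interior. No additional assumptions (nonnegativity, finiteness beyond what is stated, etc.) are needed.
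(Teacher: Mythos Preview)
Your proof is correct. The approach, however, differs from the paper's. The paper proves the pairwise inequality
\[
\max\Bigl\{0,\textstyle\sum_{j=k}^{n}a_j\Bigr\}\le\max\Bigl\{0,\textstyle\sum_{j=k-1}^{n}a_j\Bigr\}
\]
by a direct case split on the sign of $a_{k-1}$: if $a_{k-1}\ge 0$ the longer sum dominates the shorter; if $a_{k-1}<0$ then by the ordering every $a_j$ with $j\ge k-1$ is negative, so both sums are negative and both maxima collapse to $0$. Iterating over $k$ forces the overall maximum to sit at the full sum (or $0$). Your route instead identifies the partial sums $T_0,\dots,T_n$ as a discretely convex sequence (nondecreasing increments) and reads off that the maximum is at an endpoint. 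The two arguments are dual reformulations of the same monotonicity fact; your convexity framing is arguably more conceptual and extends immediately to any sequence with monotone increments, while the paper's version is a bare-hands two-case check that avoids invoking the word ``convex'' at all.
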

\begin{proof}
	Since we have: $$\max\left\{ 0, a_n, \cdots, \sum\limits_{j = k}^n a_j \right\}  = \max\left\{ \max\{0, a_n\}, \cdots, \max \left\{0,\sum\limits_{j = 1}^n a_j \right\} \right\}, $$ then it is sufficient to show that for any positive $k$:
	\begin{align}\label{max_lemma:key_ineq}
		\max\left\{ 0, \sum\limits_{j = k}^n a_j \right\} \le \max\left\{0, \sum\limits_{j = k-1}^n a_j\right\}. 
	\end{align}
	Indeed, if $a_{k-1} \ge 0$ then \eqref{max_lemma:key_ineq} results straightforward. Consider that $a_{k-1} < 0$, then by monotonicity we have: $a_j < 0$ for all $j > k-1$ and thus $\sum\limits_{j = k}^n a_j < 0$. In this case it is obvious that $\max\left\{ 0, \sum\limits_{j = k}^n a_j \right\} = \max\left\{0, \sum\limits_{j = k-1}^n a_j\right\}
	= 0$, which confirms the final above results.  
\end{proof}

\begin{theorem}\label{th:general_IPP_convrate}
	Let	$\{x^k\}_{k \ge 0}$ be the sequence generated by IPPA with inexactness criterion \eqref{inexactness_criterion}, then the following relation hold:
	\begin{align*}
		\dist_{X^*}(x^{k+1}) & \le \dist_{X^*}(x^k) - \mu \frac{ F_{\mu}(x^k) - F^* }{\dist_{X^*}(x^k)}   + \delta_k.
	\end{align*}
	Moreover, assume constant accuracy $\delta_k = \delta $. Then after at most:
	\begin{align*}
		\left\lceil \frac{\dist_{X^*}(x^0)}{\delta} \right\rceil
	\end{align*}
	iterations, a point $\tilde{x} \in \{x^0, \cdots, x^k\}$ satisfies $\norm{\nabla_{\delta} F_{\mu}(\tilde{x})} \le \frac{4\delta}{\mu}$ and $ \dist_{X^*}(\tilde{x}) \le \dist_{X^*}(x^{0})$.
\end{theorem}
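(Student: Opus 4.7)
My plan splits along the two claims in the statement.

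\emph{One-step recurrence.} I would set $p^k := \prox_{\mu}^F(x^k)$ and $y := \pi_{X^*}(x^k)$, so that $p^k = x^k - \mu\nabla F_\mu(x^k)$ and $\nabla F_\mu(x^k) \in \partial F(p^k)$ by~\eqref{rel:smooth_grad_to_nonsmooth_grad}. The subgradient inequality for $F$ at $p^k$ with the test point $y$, combined with $F(y)=F^*$ and the identity $y-p^k=(y-x^k)+\mu\nabla F_\mu(x^k)$, rearranges into
\begin{align*}
\langle \nabla F_\mu(x^k), x^k-y\rangle \ge F(p^k)-F^* + \mu\norm{\nabla F_\mu(x^k)}^2.
\end{align*}
Substituting this lower bound into the expansion of $\norm{p^k-y}^2$ and invoking the Moreau decomposition $F_\mu(x^k)=F(p^k)+\frac{\mu}{2}\norm{\nabla F_\mu(x^k)}^2$ makes the quadratic-in-gradient terms collapse, leaving $\norm{p^k-y}^2 \le \dist_{X^*}^2(x^k)-2\mu(F_\mu(x^k)-F^*)$. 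The elementary inequality $\sqrt{a^2-b}\le a-b/(2a)$ (valid since $F_\mu(x^k)-F^*\le \dist_{X^*}^2(x^k)/(2\mu)$ by definition of the envelope), together with $\dist_{X^*}(x^{k+1}) \le \norm{x^{k+1}-p^k}+\norm{p^k-y}\le \delta_k+\norm{p^k-y}$, then yields the stated recurrence.

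\emph{Iteration count.} I would argue by contradiction: assume $\norm{\nabla_\delta F_\mu(x^k)}>4\delta/\mu$ for every $k=0,\ldots,K-1$ with $K=\lceil \dist_{X^*}(x^0)/\delta\rceil$. Since $\norm{\nabla F_\mu(x^k)-\nabla_\delta F_\mu(x^k)}\le \delta/\mu$, this forces $\norm{\nabla F_\mu(x^k)}>3\delta/\mu$. Applying the descent lemma for the $(1/\mu)$-Lipschitz field $\nabla F_\mu$ to the step $x^{k+1}=x^k-\mu\nabla_\delta F_\mu(x^k)$ and expanding through the decomposition $\nabla_\delta F_\mu=\nabla F_\mu + e_k$ with $\norm{e_k}\le \delta/\mu$ collapses the cross terms to give
\begin{align*}
F_\mu(x^{k+1}) \le F_\mu(x^k) - \frac{\mu}{2}\norm{\nabla F_\mu(x^k)}^2 + \frac{\delta^2}{2\mu} \le F_\mu(x^k) - \frac{4\delta^2}{\mu}.
\end{align*}
Telescoping this against $F^*\le F_\mu(x^K)$ together with the envelope upper bound $F_\mu(x^0)-F^*\le \dist_{X^*}^2(x^0)/(2\mu)$ contradicts the assumption once $K$ is large enough, so some $\tilde{x}\in\{x^0,\ldots,x^K\}$ must satisfy the gradient bound; I would pick $\tilde{x}$ to be the first such iterate and feed back into the recurrence of part~(i) to conclude $\dist_{X^*}(\tilde{x})\le \dist_{X^*}(x^0)$.

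The hard part will be calibrating this bookkeeping to match the tight count $\lceil \dist_{X^*}(x^0)/\delta\rceil$: a naive balance of the per-step descent $4\delta^2/\mu$ against the initial gap $\dist_{X^*}^2(x^0)/(2\mu)$ only produces $\BigO{\dist_{X^*}^2(x^0)/\delta^2}$ iterations, which matches the target solely in the regime $\dist_{X^*}(x^0)=\BigO{\delta}$. Closing the gap for larger initial distances will require retaining the $\mu(F_\mu(x^k)-F^*)/\dist_{X^*}(x^k)$ contribution from the recurrence of part~(i) inside the telescoping, rather than discarding it via $F_\mu-F^*\ge \tfrac{\mu}{2}\norm{\nabla F_\mu}^2$, or building a joint Lyapunov function in both $\dist_{X^*}(x^k)$ and $F_\mu(x^k)-F^*$ that decreases in the large-distance regime as well. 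This interplay is where I expect the main technical effort of the argument to lie.
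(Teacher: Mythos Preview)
Your argument for the one-step recurrence is correct and essentially coincides with the paper's: both derive $\norm{\prox_\mu^F(x^k)-\pi_{X^*}(x^k)}^2\le \dist_{X^*}^2(x^k)-2\mu(F_\mu(x^k)-F^*)$ from the subgradient inequality and then linearise the square root and add the $\delta_k$ error.

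For the iteration count, you have correctly located the gap in your own argument, and the paper fills it exactly along the line you suspect. Rather than lower-bounding $F_\mu(x^k)-F^*$ by $\tfrac{\mu}{2}\norm{\nabla F_\mu(x^k)}^2$ and telescoping only the descent lemma, the paper \emph{first} telescopes the recurrence of part~(i). Let $K$ be the first index where $\dist_{X^*}(x^{k+1})\ge\dist_{X^*}(x^k)$. For $k<K$ one has $\dist_{X^*}(x^k)\le\dist_{X^*}(x^0)$, and multiplying the recurrence by $\dist_{X^*}(x^k)$ and telescoping gives
\[
\min_{0\le i\le k}\bigl(F_\mu(x^i)-F^*\bigr)\;\le\;\frac{\dist_{X^*}^2(x^0)}{\mu(k+1)}+\frac{\delta}{\mu}\dist_{X^*}(x^0).
\]
At index $K$ itself the recurrence directly forces $F_\mu(x^K)-F^*\le \tfrac{\delta}{\mu}\dist_{X^*}(x^K)\le \tfrac{\delta}{\mu}\dist_{X^*}(x^0)$. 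Either way, within $\lceil\dist_{X^*}(x^0)/\delta\rceil$ iterations there is a point with $F_\mu-F^*\le \tfrac{2\delta}{\mu}\dist_{X^*}(x^0)$. Only then is your descent-lemma inequality $F_\mu(x^{k+1})\le F_\mu(x^k)-\tfrac{\mu}{4}\norm{\nabla_\delta F_\mu(x^k)}^2+\tfrac{\delta^2}{\mu}$ invoked, now against this $O(\delta\dist_{X^*}(x^0)/\mu)$ function gap rather than the crude $\dist_{X^*}^2(x^0)/(2\mu)$, and the $\lceil\dist_{X^*}(x^0)/\delta\rceil$ count drops out.

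One further point: your plan to obtain $\dist_{X^*}(\tilde x)\le\dist_{X^*}(x^0)$ by ``feeding back into the recurrence of part~(i)'' does not work as stated, because the $+\delta$ term in that recurrence prevents monotonicity in general. In the paper's argument the distance bound comes for free from the structure above: the candidate point is produced while $k\le K$, i.e.\ while the distance sequence is still monotone by definition of $K$.
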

\begin{proof}
	By convexity of $F$, for any $z$ we derive:
	\begin{align}
		& \norm{\prox_{\mu}^F(x^k) - z}^2  = \norm{x^k - z}^2 + 2\langle \prox_{\mu}^F(x^k) - x^k, x^k - z\rangle + \norm{\prox_{\mu}^F(x^k) - x^k}^2 \nonumber\\
		& = \norm{x^k - z}^2 - 2\mu\langle \nabla F(\prox_{\mu}^F(x^k)), \prox_{\mu}^F(x^k) -z\rangle - \norm{\prox_{\mu}^F(x^k) - x^k}^2 \nonumber\\
		& \le  \norm{x^k - z}^2 - 2\mu \left( F(\prox_{\mu}^F(x^k)) - F(z) + \frac{1}{2\mu}\norm{\prox_{\mu}^F(x^k) - x^k }^2 \right) \nonumber\\
		& = \norm{x^k - z}^2 - 2\mu \left( F_{\mu}(x^k) - F(z) \right).\label{eq:0acc_recurrence}
	\end{align}
	In order to obtain, by the triangle inequality we simply derive:
	\begin{align}\label{eq:triangle}
		\norm{x^{k+1} - z} \le \norm{\prox_{\mu}^F(x^k) - z} 
		& + \norm{\prox_{\mu}^F(x^k) - x^{k+1}} \nonumber\\  
		& \le \norm{\prox_{\mu}^F(x^k) - z} + \delta    
	\end{align}
	Finally, by taking $ z = \pi_{X^*}(x) $, then:
	\begin{align}
		\dist_{X^*}(x^{k+1}) \le \norm{x^{k+1} - \pi_{X^*}(x^k)} \overset{\eqref{eq:triangle}}{\le} \norm{\prox_{\mu}^F(x^k) - \pi_{X^*}(x^k)} + \delta   \nonumber\\
		\overset{\eqref{eq:0acc_recurrence}}{\le} \sqrt{\dist_{X^*}^2(x^k) - 2\mu \left( F_{\mu}(x^k) - F^* \right)} + \delta \label{rel:decrease_distopt_IPPA} \\
		\le \dist_{X^*}(x^k)\sqrt{1 - 2\mu \frac{ F_{\mu}(x^k) - F^* }{\dist_{X^*}^2(x^k)}   } + \delta \nonumber\\
		\le \dist_{X^*}(x^k)\left(1 - \mu \frac{ F_{\mu}(x^k) - F^* }{\dist_{X^*}^2(x^k)}   \right) + \delta, \nonumber
	\end{align}
	where in the last inequality we used the fact $\sqrt{1-2a}\le 1-a$.
	The last inequality leads to the first part from our result:
	\begin{align}\label{rel:prelim_recurrence_(B)}
		\dist_{X^*}(x^{k+1}) & \le \dist_{X^*}(x^k) - \mu \frac{ F_{\mu}(x^k) - F^* }{\dist_{X^*}(x^k)}   + \delta_k.
	\end{align}
	Assume that 
	\begin{align}\label{rel:Fmu_assumption}
	\frac{ F_{\mu}(x^0) - F^* }{\dist_{X^*}(x^0)} \ge \frac{\delta}{\mu}
	\end{align}
	and denote $K = \min\{k \ge 0 \;:\; \dist_{X^*}(x^{k+1})  \ge \dist_{X^*}(x^k)  \}$. Then \eqref{rel:prelim_recurrence_(B)} has two consequences. First, obviously for all $k < K$:
	\begin{align*}
		F_{\mu}(x^k) - F^* \le \frac{1}{\mu}\left(\dist_{X^*}^2(x^{k}) - \dist_{X^*}^2(x^{k+1}) \right) + \frac{\delta}{\mu}\dist_{X^*}(x^0).
	\end{align*}
	By further summing over the history  we obtain:
	\begin{align}\label{rel:1/k_MG_rate}
		F_{\mu}(\hat{x}^k) - F^* \le \min\limits_{0 \le i \le k} F_{\mu}(x^i) - F^* & \le \frac{1}{k+1}\sum\limits_{i=0}^k F_{\mu}(x^i) - F^* \nonumber\\ 	
		&\le \frac{\dist_{X^*}^2(x^{0}) }{\mu (k+1)} + \frac{\delta}{\mu}\dist_{X^*}(x^0).
	\end{align}
	
	\noindent Second, since $K$ is the first iteration at which the residual optimal distance increases, then $\dist_{X^*}(x^K) \le \dist_{X^*}(x^{K-1}) \le \cdots \le \dist_{X^*}(x^0)$ and \eqref{rel:prelim_recurrence_(B)} guarantees:
	\begin{align*}
		F_{\mu}(x^K) - F^* \le \frac{\delta}{\mu}\dist_{X^*}(x^K) \le \frac{\delta}{\mu}\dist_{X^*}(x^0).
	\end{align*}
	By unifying both cases we conclude that after at most: $K_{\delta} = \frac{\dist_{X^*}(x^0)}{\delta}$ iterations the threshold: $F_{\mu}(x^{K_{\delta}}) - F^* \le  \frac{2\delta}{\mu}\dist_{X^*}(x^0)$ is reached. Notice that if \eqref{rel:Fmu_assumption} do not hold, then $K_{\delta} = 0$.
	
	\vspace{5pt}
	
	\noindent Now we use the same arguments from \cite[Sec. I]{Nes:12Optima} to bound the norm of the gradients. Observe that the Lipschitz gradients property of $F_{\mu}$ leads to:
	\begin{align}
		& F_{\mu}(\hat{x}^{k+1})  \le F_{\mu}(\hat{x}^k - \mu \nabla_{\delta} F(\hat{x}^k)) \nonumber \\
		& = F_{\mu}(\hat{x}^k) - \mu \langle \nabla F_{\mu}(\hat{x}^k),  \nabla_{\delta} F_{\mu}(\hat{x}^k) \rangle + \frac{\mu}{2}\norm{\nabla_{\delta} F_{\mu}(\hat{x}^k)}^2 \nonumber\\
		& = F_{\mu}(\hat{x}^k) + \mu \langle \nabla_{\delta} F_{\mu}(\hat{x}^k) - \nabla F_{\mu}(\hat{x}^k),  \nabla_{\delta} F_{\mu}(\hat{x}^k) \rangle - \frac{\mu}{2}\norm{\nabla_{\delta} F_{\mu}(\hat{x}^k)}^2 \nonumber\\
		& = F_{\mu}(\hat{x}^k) - \frac{\mu}{4}\norm{\nabla_{\delta} F_{\mu}(\hat{x}^k)}^2 + \mu \langle \nabla_{\delta} F_{\mu}(\hat{x}^k) - \nabla F_{\mu}(\hat{x}^k),  \nabla_{\delta} F_{\mu}(\hat{x}^k) \rangle - \frac{\mu}{4}\norm{\nabla_{\delta} F_{\mu}(\hat{x}^k)}^2 \nonumber\\
		& \le F_{\mu}(\hat{x}^k) - \frac{\mu}{4}\norm{\nabla_{\delta} F_{\mu}(\hat{x}^k)}^2 + \mu \norm{\nabla_{\delta} F_{\mu}(\hat{x}^k) - \nabla F_{\mu}(\hat{x}^k)}^2 \nonumber\\
		& = \! F_{\mu}(\hat{x}^k) \!-\! \frac{\mu}{4}\norm{\nabla_{\delta} F_{\mu}(\hat{x}^k)}^2 \!+\! \frac{\delta^2}{\mu} \! = \! F_{\mu}(\hat{x}^{k/2}) - \frac{k\mu}{8}\norm{\nabla_{\delta} F_{\mu}(\hat{x}^{k/2})}^2 + \frac{k\delta^2}{2\mu}. \label{rel:inexact_descent_lemma}
	\end{align}
By using  \eqref{rel:1/k_MG_rate} into \eqref{rel:inexact_descent_lemma}, then for $k \ge K_{\delta}$
\begin{align*}
\norm{\nabla_{\delta} F_{\mu}(\hat{x}^k)}^2  
& \le \frac{4 (F_{\mu}(\hat{x}^k) - F^*)}{k\mu} + \frac{\delta^2}{\mu}  \le \frac{8 \dist_{X^*}(x^0)\delta}{k\mu^2} + \frac{4\delta^2}{\mu^2} \\
& \le \frac{8 \delta^2}{\mu^2} + \frac{4\delta^2}{\mu^2} =  \frac{12\delta^2}{\mu^2}.
\end{align*}



\end{proof}

\begin{lemma}\label{lemma:decrease}
	Let $\gamma-$HG holds for the objective function $F$. Then IPPA sequence $\{x^k\}_{k \ge 0}$ with variable accuracies $\delta_k$, satisfies the following reccurences:	
	
	\noindent $(i)$ Under weak sharp minima $\gamma = 1$
	\begin{align*}
		\dist_{X^*}(x^{k+1}) \le \max\left\{ \dist_{X^*}(x^k) - \mu\sigma_F,  0 \right\} + \delta_k
	\end{align*}
	
	\noindent $(ii)$ Under quadratic growth $\gamma = 2$
	\begin{align*}
		\dist_{X^*}(x^{k+1}) \le \frac{1}{\sqrt{1 + 2\mu \sigma_F}} \dist_{X^*}(x^k)+ \delta_k
	\end{align*}
	
	\noindent $(iii)$ Under general Holderian growth $\gamma \ge 1$
	\begin{align*}
		\dist_{X^*}(x^{k+1}) \le \max\left\{ \dist_{X^*}(x^k) - \mu \varphi(\gamma) \sigma_F \dist_{X^*}^{\gamma-1}(x^k), \left(1 - \frac{\varphi(\gamma)}{2}\right)\dist_{X^*}(x^k) \right\}  + \delta_k,
	\end{align*}
\end{lemma}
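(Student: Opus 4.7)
The plan is to derive all three recurrences from the inexact descent inequality already established in Theorem \ref{th:general_IPP_convrate}, namely
\begin{align*}
\dist_{X^*}(x^{k+1}) \le \sqrt{\dist_{X^*}^2(x^k) - 2\mu\bigl(F_{\mu}(x^k) - F^*\bigr)} + \delta_k,
\end{align*}
and to plug into it the appropriate lower bound on $F_{\mu}(x^k) - F^*$ supplied by Lemma \ref{lemma:deterministic_moreau_growth}. All three parts reduce to carefully simplifying the radical under each growth regime.

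For $(i)$, I would insert the Huber-type bound from Lemma \ref{lemma:deterministic_moreau_growth}$(i)$ and split cases according to whether $\dist_{X^*}(x^k)$ exceeds $\sigma_F\mu$. If it does, then $F_{\mu}(x^k)-F^* \ge \sigma_F\dist_{X^*}(x^k) - \sigma_F^2\mu/2$, and the radicand collapses to the perfect square $(\dist_{X^*}(x^k) - \mu\sigma_F)^2$; otherwise, $F_{\mu}(x^k)-F^* \ge \dist_{X^*}^2(x^k)/(2\mu)$ makes the radicand zero, giving $\dist_{X^*}(x^{k+1}) \le \delta_k$. The two cases combine into the stated max. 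For $(ii)$, I would plug in the quadratic bound of Lemma \ref{lemma:deterministic_moreau_growth}$(ii)$; the radicand becomes $\frac{1}{1+2\mu\sigma_F}\dist_{X^*}^2(x^k)$, yielding the claimed linear recurrence directly.

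For $(iii)$, I would invoke Lemma \ref{lemma:deterministic_moreau_growth}$(iii)$ and split according to which argument of the $\min$ is active. In the regime where $\sigma_F\dist_{X^*}^{\gamma}(x^k) \le \frac{1}{2\mu}\dist_{X^*}^2(x^k)$, the lower bound is $\varphi(\gamma)\sigma_F\dist_{X^*}^{\gamma}(x^k)$, and the radicand is bounded above by $\bigl(\dist_{X^*}(x^k) - \mu\varphi(\gamma)\sigma_F \dist_{X^*}^{\gamma-1}(x^k)\bigr)^2$ (completing the square in the ``good'' direction). In the other regime, the lower bound is $\frac{\varphi(\gamma)}{2\mu}\dist_{X^*}^2(x^k)$, so the radicand simplifies to $(1-\varphi(\gamma))\dist_{X^*}^2(x^k)$, and one applies $\sqrt{1-a}\le 1-a/2$ to recover the factor $(1-\varphi(\gamma)/2)$. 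Taking the worse of the two upper bounds produces the stated max.

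The main delicate point is $(iii)$: one must verify that in the ``first'' regime the quantity $\dist_{X^*}(x^k) - \mu\varphi(\gamma)\sigma_F \dist_{X^*}^{\gamma-1}(x^k)$ is non-negative, so that removing the absolute value after taking the square root is legitimate. This amounts to checking $\mu\varphi(\gamma)\sigma_F\dist_{X^*}^{\gamma-2}(x^k)\le 1$, which follows from the defining inequality $\sigma_F\dist_{X^*}^{\gamma}(x^k)\le \frac{1}{2\mu}\dist_{X^*}^2(x^k)$ together with $\varphi(\gamma)\le 1$. Once this is confirmed, all three recurrences fall out uniformly from the same template.
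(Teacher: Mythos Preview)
Your proposal is correct and, for parts $(i)$ and $(ii)$, coincides with the paper's argument line by line: both start from the radical form
\[
\dist_{X^*}(x^{k+1}) \le \sqrt{\dist_{X^*}^2(x^k) - 2\mu\bigl(F_{\mu}(x^k) - F^*\bigr)} + \delta_k
\]
established in Theorem \ref{th:general_IPP_convrate}, substitute the growth bounds of Lemma \ref{lemma:deterministic_moreau_growth}, and simplify.

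For part $(iii)$ there is a minor methodological difference. You stay with the square-root inequality, split on which branch of the $\min$ in Lemma \ref{lemma:deterministic_moreau_growth}$(iii)$ is active, complete the square in one branch, and then verify the sign condition $\mu\varphi(\gamma)\sigma_F\dist_{X^*}^{\gamma-2}(x^k)\le 1$ so that $|\,\cdot\,|$ can be dropped. The paper instead uses the already-linearized version
\[
\dist_{X^*}(x^{k+1}) \le \dist_{X^*}(x^k) - \mu\,\frac{F_{\mu}(x^k)-F^*}{\dist_{X^*}(x^k)} + \delta_k
\]
(also from Theorem \ref{th:general_IPP_convrate}, via $\sqrt{1-2a}\le 1-a$), and then simply plugs in the $\min$ lower bound and rewrites $d - \mu\varphi(\gamma)\min\{\sigma_F d^{\gamma-1},\frac{1}{2\mu}d\}$ as a $\max$. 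Both routes land on the identical recurrence; the paper's is marginally shorter because the linearization absorbs both the square-completion and the sign check in a single step, whereas your route makes those two points explicit.
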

\begin{proof}
	$(i)$ Assume $\dist_{X^*}(x^k) > \sigma_F \mu$ then from (the proof of) Theorem \ref{th:general_IPP_convrate} and Lemma \ref{lemma:deterministic_moreau_growth}:
	\begin{align*}
		\dist_{X^*}(x^{k+1}) 
		&\le \sqrt{\dist_{X^*}^2(x^k) - 2\mu \left( F_{\mu}(x^k) - F^*\right)} + \delta_k \\
		&\le \sqrt{\dist_{X^*}^2(x^k) - 2\mu \left( \sigma_F\dist_{X^*}(x^k)- \frac{\sigma^2_F\mu}{2}\right) } + \delta_k \\
		& =  \sqrt{\left( \dist_{X^*}(x^k) - \mu \sigma_F\right)^2 } + \delta_k =  \dist_{X^*}(x^k) - \left(\mu \sigma_F - \delta_k \right). 
	\end{align*}
	In short,
	\begin{align*}
		\dist_{X^*}(x^{k+1}) \le \begin{cases} \dist_{X^*}(x^k) - (\mu\sigma_F - \delta_k), & \text{if} \; \dist_{X^*}(x^k)> \sigma_F\mu \\
			\delta_k, & \text{if} \; \dist_{X^*}(x^k) \le  \sigma_F\mu  \end{cases}
	\end{align*}
	
	\noindent $(ii)$ By using the same relations in the case $\gamma = 2$, then:
	\begin{align*}
		\dist_{X^*}(x^{k+1}) 
		& \le \sqrt{\dist_{X^*}^2(x^k) - 2\mu \left( F_{\mu}(x^k) - F^*\right)} + \delta_k \\
		&\le \sqrt{\dist_{X^*}^2(x^k) - \frac{2\mu\sigma_F}{1 + 2\mu\sigma_F} \dist_{X^*}^2(x^k) } + \delta_k  = \frac{1}{\sqrt{1 + 2\mu\sigma_F}} \dist_{X^*}(x^k)+ \delta_k.
	\end{align*}

	\noindent $(iii)$ Under Holderian growth, similarly Theorem \ref{th:general_IPP_convrate} and Lemma \ref{lemma:deterministic_moreau_growth} lead to:
	\begin{align*}
		& \dist_{X^*}(x^{k+1}) 
		 \le \dist_{X^*}(x^k) - \mu \frac{ F_{\mu}(x^k) - F^*}{\dist_{X^*}(x^k)} + \delta_k \\
		&\le \dist_{X^*}(x^k) - \mu \varphi(\gamma) \min\left\{ \sigma_F \dist_{X^*}^{\gamma-1}(x^k), \frac{1}{2\mu}\dist_{X^*}(x^k) \right\}  + \delta_k \\
		& = \max\left\{ \dist_{X^*}(x^k) - \mu \varphi(\gamma) \sigma_F \dist_{X^*}^{\gamma-1}(x^k), (1 - \varphi(\gamma)/2)\dist_{X^*}(x^k) \right\}  + \delta_k.
	\end{align*}
\end{proof}


\begin{theorem}	\label{th:central_recurrence}
	Let $\alpha,\rho>0, \beta \in (0,1)$ and $h(r) = \max\{ r - \alpha r^{\rho}, \beta r \}$. Then the sequence $r_{k+1} = h(r_k)$ 
	satisfies:
	
	\noindent $(i)$ For $\rho \in (0,1)$:
	\begin{align}\label{r_rate_rho<1}
		r_{k}   &\le 
		\begin{cases} 
\left(1 - \frac{\alpha}{2r_0^{1-\rho}} \right)^k \left[r_0 - k \frac{\alpha}{2} \left( \frac{\alpha}{1-\beta}\right)^{\frac{\rho}{1-\rho}} \right], & \text{if} \;\; r_k > \left( \frac{\alpha}{1-\beta}\right)^{\frac{1}{1-\rho}}  
\\
			\beta^{k-k_0-1}\left( \frac{\alpha}{1-\beta}\right)^{\frac{1}{1-\rho}} , & \text{if} \;\; r_k \le \left( \frac{\alpha}{1-\beta}\right)^{\frac{1}{1-\rho}}.
		\end{cases}
	\end{align}
	
	\noindent $(ii)$ For $\rho \ge 1$:
	\begin{align}\label{r_rate_rho>1}
		r_k \le  
		\begin{cases}
			\hat{\beta}^k r_0, &\text{if} \; r_k > \left( \frac{1-\hat{\beta}}{\alpha}\right)^{\frac{1}{\rho - 1}}\\	
		\left[\frac{1}{\frac{1}{\min\{r_0^{\rho - 1}, \frac{1 - \hat{\beta}}{\alpha}  \}} + (\rho-1) (k-k_0) \alpha}\right]^{\frac{1}{\rho-1}} , &\text{if} \; r_k \le \left( \frac{1-\hat{\beta}}{\alpha}\right)^{\frac{1}{\rho - 1}},	
		\end{cases}
	\end{align}
	where $k_0 = \left\{\min\limits_{k \ge 0} \; k: \; r_k \le \left( \frac{1-\hat{\beta}}{\alpha}\right)^{\frac{1}{\rho - 1}} \right\}, \hat{\beta} = \max\left\{\beta, 1- 1/\rho \right\}$.
\end{theorem}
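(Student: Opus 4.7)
The plan is to analyze the piecewise map $h(r) = \max\{r - \alpha r^\rho, \beta r\}$ according to which branch is active. A direct computation shows that $r - \alpha r^\rho \ge \beta r$ is equivalent to $r^{1-\rho} \ge \alpha/(1-\beta)$, so the sublinear branch $r - \alpha r^\rho$ dominates for large $r$ when $\rho < 1$ and for small $r$ when $\rho > 1$. Since both branches satisfy $h(r) \le r$, the sequence $\{r_k\}$ is monotonically nonincreasing and, once it enters a regime, it stays there; this lets me treat the two parts of each case independently.

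For case $(ii)$ with $\rho \ge 1$, I would first establish that $h(r) \le \hat\beta r$ whenever $r > T' := ((1-\hat\beta)/\alpha)^{1/(\rho-1)}$. The multiplicative branch gives $\beta r \le \hat\beta r$ directly, while the sublinear branch gives $r - \alpha r^\rho = r(1 - \alpha r^{\rho-1}) \le \hat\beta r$ because $r > T'$ forces $\alpha r^{\rho-1} > 1 - \hat\beta$. Iterating yields $r_k \le \hat\beta^k r_0$ as long as $r_k > T'$. After the first step $k_0$ with $r_{k_0} \le T'$, only the sublinear branch is active, and I would apply the Bernoulli-type inequality $(1-x)^{-(\rho-1)} \ge 1 + (\rho-1)x$ with $x = \alpha r_k^{\rho-1}$ to the exact recursion $r_{k+1} = r_k(1-x)$, obtaining $r_{k+1}^{-(\rho-1)} \ge r_k^{-(\rho-1)} + (\rho-1)\alpha$. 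Telescoping from $k_0$ to $k$ gives the stated sublinear bound; the $\min\{r_0^{\rho-1}, (1-\hat\beta)/\alpha\}$ in the formula arises by bounding $r_{k_0}$ either by $r_0$ (when $k_0 = 0$) or by $T'$ (when $k_0 \ge 1$).

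For case $(i)$ with $\rho \in (0,1)$, the sublinear branch is active while $r_k > T := (\alpha/(1-\beta))^{1/(1-\rho)}$, giving $r_{k+1} \le r_k - \alpha r_k^\rho$. I would split $\alpha r_k^\rho$ into two equal halves and exploit $r_k \in [T, r_0]$: one half is bounded multiplicatively via $r_k^{\rho-1} \ge r_0^{\rho-1}$ (since $\rho - 1 < 0$ and $r_k \le r_0$), the other additively via $r_k^\rho \ge T^\rho$ (since $\rho > 0$). This yields the mixed recurrence $r_{k+1} \le a r_k - b$ with $a = 1 - \alpha/(2 r_0^{1-\rho})$ and $b = (\alpha/2)(\alpha/(1-\beta))^{\rho/(1-\rho)}$. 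Iterating and using $\sum_{i=0}^{k-1} a^i \ge k\, a^k$ (each term is at least $a^{k-1} \ge a^k$) gives $r_k \le a^k r_0 - k a^k b = a^k(r_0 - kb)$, matching the claimed form. When the sequence crosses $T$ at step $k_0$, the geometric branch takes over exactly with $r_{k+1} = \beta r_k$, yielding the tail $r_k \le \beta^{k-k_0-1} T$ by induction.

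The main obstacle is justifying, for case $(ii)$, why the constant $\hat\beta = \max\{\beta, 1-1/\rho\}$ and not just $\beta$ appears. The role of $1-1/\rho$ becomes transparent once one notes that $\hat\beta \ge 1 - 1/\rho$ is equivalent to $T' \le (1/(\alpha\rho))^{1/(\rho-1)}$, i.e.\ the threshold lies at or below the unique maximizer of $g(r) = r - \alpha r^\rho$ on $[0,\infty)$; this ensures $g$ is monotone on $[0, T']$ so that the Bernoulli-based telescoping remains a genuine upper bound throughout the sublinear phase, and simultaneously $\hat\beta$ dominates both the pure rate $\beta$ in the large regime and the contraction $1 - \alpha r^{\rho-1}$ in the strip $(T', T]$. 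The remaining subtlety, in both cases, is the careful handling of the crossing step $k_0$, which is what forces the $\min$ in the initial condition.
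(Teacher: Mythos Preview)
Your proposal is correct and follows essentially the same route as the paper: the halving trick $r_{k+1} \le r_k - \tfrac{\alpha}{2}r_k^{\rho} - \tfrac{\alpha}{2}r_k^{\rho}$ with one half bounded multiplicatively and the other additively is exactly the paper's argument for $\rho\in(0,1)$, and for $\rho\ge 1$ the paper likewise passes to the linear rate $\hat\beta$ in the outer regime and then invokes Polyak's lemma (which is precisely your Bernoulli telescoping $r_{k+1}^{-(\rho-1)} \ge r_k^{-(\rho-1)} + (\rho-1)\alpha$) in the inner regime. The only cosmetic difference is that the paper packages the $\rho\ge 1$ case by introducing the nondecreasing majorant $\hat h(r)=\max\{r-\alpha r^{\rho},\hat\beta r\}$ and iterating $r_k\le \hat h^{(k)}(r_0)$, whereas you work directly with the exact recursion; your explanation of why $\hat\beta$ (and not just $\beta$) is needed is slightly off---the paper's actual reason is that $\hat\beta\ge 1-1/\rho$ makes $\hat h$ globally nondecreasing so that composition preserves the inequality---but this does not affect the validity of your argument.
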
	
\begin{proof}
	Denote $g(r) = r - \alpha r^{\rho}$. 
	
\noindent \textbf{Consider $\rho \in (0,1)$}. In this case, note that $g$ is nondecreasing and thus also $h$ is nondecreasing, we have:
	\begin{align*}
		r_{k+1} &= 
		\begin{cases} 
			r_k - \alpha r_k^{\rho}, & \text{if} \;\; r_k > \left( \frac{\alpha}{1-\beta}\right)^{\frac{1}{1-\rho}}  \\
			\beta r_k, & \text{if} \;\; r_k \le \left( \frac{\alpha}{1-\beta}\right)^{\frac{1}{1-\rho}}
		\end{cases}  
	\end{align*}
	Observe that if $r_k > \left( \frac{\alpha}{1-\beta}\right)^{\frac{1}{1-\rho}}$ then, by using the monotonicity of $r_k$, we can further derive another bound:
	\begin{align*}
		r_{k+1} & 
		\le r_k - \frac{\alpha}{2} r_k^{\rho} - \frac{\alpha}{2} r_k^{\rho} 
		\le \left(1 - \frac{\alpha}{2r_k^{1-\rho}}  \right) r_k - \frac{\alpha}{2}\left( \frac{\alpha}{1-\beta}\right)^{\frac{1}{1-\rho}} \\
		&\le \left(1 - \frac{\alpha}{2r_0^{1-\rho}}  \right) r_k - \frac{\alpha}{2}\left( \frac{\alpha}{1-\beta}\right)^{\frac{1}{1-\rho}}.
	\end{align*}	
Any given sequence $u_k$ satisfying the recurrence $u_{k+1} \le (1-\xi)u_k - c$ can be further bounded as: $u_{k+1} \le (1-\xi)^k u_0 - c\sum_{i=0}^{k-1} (1-\xi)^i \le (1-\xi)^k u_0 - c\sum_{i=0}^{k-1} (1-\xi)^k = (1-\xi)^k [u_0 - k c].$ Thus, by apply similar arguments to our sequence $r_k$ we refined the above bound as follows:
	\begin{align*}
		r_{k+1}   &\le 
		\begin{cases} 
			\left(1 - \frac{\alpha}{2r_0^{1-\rho}} \right)^k \left[r_0 - k \frac{\alpha}{2} \left( \frac{\alpha}{1-\beta}\right)^{\frac{\rho}{1-\rho}} \right], & \text{if} \;\; r_k > \left( \frac{\alpha}{1-\beta}\right)^{\frac{1}{1-\rho}}  \\
			\beta^{k-k_0}\min\left\{r_0,\left( \frac{\alpha}{1-\beta}\right)^{\frac{1}{1-\rho}}\right\} , & \text{if} \;\; r_k \le \left( \frac{\alpha}{1-\beta}\right)^{\frac{1}{1-\rho}}.
		\end{cases}
	\end{align*}		

	\vspace{5pt}

	\noindent \textbf{Now consider $\rho > 1$}. In this case, on one hand, the function $g$ is nondecreasing only on $\left(0,\left( \frac{1}{\alpha \rho} \right)^{\frac{1}{\rho -1}} \right]$. On the other hand, for $r \ge \left( \frac{1}{\alpha \rho} \right)^{\frac{1}{\rho -1}}$ it is easy to see that $g(r) \le \left(1- \frac{1}{\rho} \right)r$. These two observations lead to:
	\begin{align*}
		h(r) & = \max\{r - \alpha r^{\rho},\beta r\}  \le \max \left\{r - \alpha r^{\rho},\left(1- \frac{1}{\rho} \right)r, \beta r \right\} \\ 
		& = \max \left\{r - \alpha r^{\rho}, \hat{\beta} r \right\}:= \hat{h}(r),  
	\end{align*}
	where  $\hat{\beta} = \max\left\{1- \frac{1}{\rho}, \beta \right\} $. Since $\hat{h}$ is nondecreasing, then $r_k \le \hat{h}^{(k)}(r_0)$. In order to determine the clear convergence rate of $r_k$, based on \cite[Lemma 6, Section 2.2]{Pol:78} we make a last observation:
	\begin{align}\label{rel:Poliak}
		g^{(k)}(r) \le \frac{r}{\left(1 + (\rho-1) r^{\rho-1} k \alpha \right)^{\frac{1}{\rho-1}}} \le \left[\frac{1}{\frac{1}{r^{\rho - 1}} + (\rho-1) k \alpha }\right]^{\frac{1}{\rho-1}}
	\end{align}
	Using this final bound, we are able to deduce the explicit convergence rate:
	\begin{align*}
		r_k \le \hat{h}^{(k)}(r_0) 
		&\le 
		\begin{cases}
			\hat{\beta}^k r_0, &\text{if} \; r_k > \left( \frac{1-\hat{\beta}}{\alpha}\right)^{\frac{1}{\rho - 1}}\\	
			g^{(k-k_0)}(r_{k_0}) , &\text{if} \; r_k \le \left( \frac{1-\hat{\beta}}{\alpha}\right)^{\frac{1}{\rho - 1}}	
		\end{cases}	\nonumber\\
		&\overset{\eqref{rel:Poliak}}{\le} 
		\begin{cases}
		\hat{\beta}^k r_0, &\text{if} \; r_k > \left( 	\frac{1-\hat{\beta}}{\alpha}\right)^{\frac{1}{\rho - 1}}\\	
		\left[\frac{1}{\frac{1}{\min\{r_0^{\rho - 1}, \frac{1 - \hat{\beta}}{\alpha}  \}} + (\rho-1) (k-k_0) \alpha}\right]^{\frac{1}{\rho-1}} , &\text{if} \; r_k \le \left( \frac{1-\hat{\beta}}{\alpha}\right)^{\frac{1}{\rho - 1}}.	
	\end{cases}	
	\end{align*}
\end{proof}


\begin{corrolary}\label{corr:exact_complexity}
Under the assumptions of Theorem \ref{th:central_recurrence}, let $r_{k+1} = h(r_k)$ and $\epsilon > 0$. The sequence $r_k$ attains the threshold $r_k \le \epsilon $ after the following number of iterations:

\noindent $(i)$ For $\rho \in (0,1)$:
\begin{align}\label{exact_complexity_r_[1,2]}
K \ge \Min{    \frac{2r_0^{1-\rho}}{\alpha} \log \left(\frac{r_0}{\max\{\epsilon, \tau^{\rho}\alpha/2\}} \right), \frac{2r_0}{\tau^{\rho}\alpha}    } + \frac{1}{\beta} \log\left( \frac{\Min{r_0,\tau}}{\epsilon} \right)
\end{align}
	
	\noindent $(ii)$ For $\rho \ge 1$:
	\begin{align}\label{exact_complexity_r_[2,infty]}
K \ge\frac{1}{\hat{\beta}} \log\left( \frac{r_0}{\tau(\hat{\beta})} \right) + \frac{1}{(\rho-1)\alpha} \left( \frac{1}{\epsilon^{\rho-1}} -  \frac{1}{\Min{r_0,\tau(\hat{\beta})}^{\rho-1}} \right),
	\end{align}
where $\tau (\beta) = \left(\frac{\alpha}{1-\beta} \right)^{\frac{1}{1-\rho}}$.
\end{corrolary}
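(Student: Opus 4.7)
The plan is to directly invert the explicit decay bounds of Theorem \ref{th:central_recurrence}, interpreting them as giving the iteration count needed to first cross a regime-switching threshold (the crossover value $\tau(\beta)$) and then to reach $\epsilon$ inside the post-crossover regime. In both parts I would split the trajectory into a \emph{first phase}, where $r_k$ is above the threshold and $h$ behaves like $r - \alpha r^{\rho}$, and a \emph{second phase}, where $r_k$ is below the threshold and $h$ reduces to $\beta r$ (or to the sublinear tail rate for $\rho \ge 1$). Summing the two phase lengths yields the stated total iteration counts.

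For part $(i)$, with $\rho \in (0,1)$, the first-phase bound in Theorem \ref{th:central_recurrence}$(i)$ is the product of a geometric factor $\bigl(1-\tfrac{\alpha}{2r_0^{1-\rho}}\bigr)^k$ and an arithmetic term $\bigl[r_0 - k\tfrac{\alpha}{2}\tau^{\rho}\bigr]$; this immediately gives two alternative upper estimates on the time to reach $\tau$ (or $\epsilon$, whichever is larger): a logarithmic one of order $\tfrac{2 r_0^{1-\rho}}{\alpha}\log\!\bigl(r_0/\max\{\epsilon,\alpha\tau^{\rho}/2\}\bigr)$ coming from the geometric factor, and an arithmetic one of order $\tfrac{2r_0}{\alpha \tau^{\rho}}$ coming from the subtractive term. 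Taking the minimum accounts for the first summand of \eqref{exact_complexity_r_[1,2]}. Once $r_k \le \tau$, the second-phase bound is $\beta^{k-k_0}\min\{r_0,\tau\}$, and requiring this to drop below $\epsilon$ yields the second summand after applying the standard inequality $\log(1/\beta) \ge 1-\beta$ (or an analogous estimate to match the $1/\beta$ factor in the statement).

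For part $(ii)$, with $\rho \ge 1$, I would proceed identically but with the dominant rate $\hat{\beta}$ in the first phase: requiring $\hat{\beta}^k r_0 \le \tau(\hat{\beta})$ produces the logarithmic summand $\tfrac{1}{\hat{\beta}}\log\bigl(r_0/\tau(\hat{\beta})\bigr)$. Below $\tau(\hat{\beta})$ the tail bound $\bigl[\tfrac{1}{\min\{r_0^{\rho-1},(1-\hat{\beta})/\alpha\}} + (\rho-1)(k-k_0)\alpha\bigr]^{-1/(\rho-1)}$ is algebraically inverted: setting it equal to $\epsilon$ and solving for $k-k_0$ gives the second term in \eqref{exact_complexity_r_[2,infty]}. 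The main technical obstacle is bookkeeping the constants in the $\min$ so that both alternative rates in part $(i)$ are simultaneously valid, and matching the $\tfrac{1}{\beta}$ and $\tfrac{1}{\hat{\beta}}$ factors in the statement to the natural $\tfrac{1}{\log(1/\beta)}$ and $\tfrac{1}{\log(1/\hat{\beta})}$ that emerge from inverting the geometric bounds; this is a routine but careful use of the inequalities $\log(1/\beta) \ge 1-\beta \ge \beta(1-\beta)$ for $\beta \in (0,1)$. All remaining manipulations are elementary arithmetic on the explicit formulas of Theorem \ref{th:central_recurrence}.
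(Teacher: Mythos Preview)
Your proposal is correct and follows essentially the same route as the paper: split the trajectory at the crossover value $\tau(\beta)$ (respectively $\tau(\hat\beta)$), invert the two explicit bounds of Theorem~\ref{th:central_recurrence} in each regime, and sum the resulting phase lengths. The paper's own proof is in fact slightly terser than your outline and does not spell out the constant-matching issue for the $1/\beta$ and $1/\hat\beta$ factors that you flag; so your plan covers everything the paper does and a bit more.
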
	

\begin{proof}
$(i)$ Let $\rho \in (0,1)$. In the first regime of \eqref{r_rate_rho<1}, when $r_k > \tau(\beta)$, there are necessary at most:
	\begin{align}
	K_1^{(0,1)} \ge \Min{    \frac{2r_0^{1-\rho}}{\alpha} \log \left(\frac{r_0}{\max\{\epsilon, \tau^{\rho}\alpha/2\}} \right), \frac{2r_0}{\tau^{\rho}\alpha}    }
\end{align}
iterations, while the second regime, i.e. $r_k \le \tau(\beta)$,  has a length of at most:
	\begin{align}
	K_2^{(0,1)} \ge \frac{1}{\beta} \log\left( \frac{\Min{r_0,\tau}}{\epsilon} \right)
\end{align}
iterations to reach $r_k \le \epsilon$. An upper margin on the total number of iterations is $K_1^{(0,1)} + K_2^{(0,1)}$.

\vspace{5pt}

\noindent $(ii)$ Let $ \rho > 1 $. Similarly, the first regime when $r_k > \tau(\hat{\beta})$ has a maximal length of:
$K_1^{(1,\infty)} \ge \frac{1}{\hat{\beta}} \log\left( \frac{r_0}{\tau(\hat{\beta})} \right).$
The second regime, while $r_k \le \tau(\hat{\beta})$, requires at most:
$K_2^{(1,\infty)} \ge \frac{1}{(\rho-1)\alpha} \left( \frac{1}{\epsilon^{\rho-1}} -  \frac{1}{\Min{r_0,\tau(\hat{\beta})}^{\rho-1}} \right) $ iteration to get $r_k \le \epsilon$.
\end{proof}


\begin{lemma}\label{lemma:prelim_conv_rate}
Let $\alpha, \rho>0, \beta \in (0,1)$. Let the sequence $\{r_k,\delta_k\}_{k \ge 0}$ satisfy the recurrence:
\begin{align*}
r_{k+1} \le \max\{r_k -\alpha r_k^{\rho}, \beta r_k \}+ \delta_k.
\end{align*}
For $\rho \in (0,1)$, let $h(r) = \max\left\{r - \frac{\alpha}{2} r^{\rho} , \frac{1+\beta}{2} r \right\}$, then:
$$r_k \le \max\left\{h^{(k)}(r_0)),h^{(k-1)}\left(\hat{\delta}_1 \right), \cdots, h\left(\hat{\delta}_{k-1} \right),\hat{\delta}_{k} \right\} $$
For $\rho \ge 1$, let $\hat{h}(r)=\max\left\{h(r), \left( 1- \frac{1}{\rho}\right) r \right\}$, then:
$$r_k \le \max\left\{\hat{h}^{(k)}(r_0)),\hat{h}^{(k-1)}\left(\hat{\delta}_1 \right), \cdots, \hat{h}\left(\hat{\delta}_{k-1} \right),\hat{\delta}_{k} \right\},$$
where $\hat{\delta}_k = \max\left\{\left(\frac{2\delta_k}{\alpha}\right)^{\frac{1}{\rho}},\frac{2\delta_k}{1-\beta} \right\}$.
\end{lemma}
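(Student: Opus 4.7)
The plan is to proceed by induction on $k$, after rewriting the target bound in a form that interacts well with $h$. Set $M_k = \max\{h^{(k)}(r_0), h^{(k-1)}(\hat\delta_1), \ldots, h(\hat\delta_{k-1}), \hat\delta_k\}$; once I verify that $h$ is nondecreasing, pulling $h$ through the $\max$ yields the compact recursion $M_{k+1} = \max\{h(M_k), \hat\delta_{k+1}\}$. The induction then reduces to establishing the single-step estimate $r_{k+1} \le \max\{h(r_k), \hat\delta_{k+1}\}$, since composing this bound with the inductive hypothesis $r_k \le M_k$ and the monotonicity of $h$ immediately gives $r_{k+1} \le \max\{h(M_k), \hat\delta_{k+1}\} = M_{k+1}$.

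The monotonicity check is a brief technical preliminary. For $\rho \in (0,1)$, the linear branch $(1+\beta)r/2$ is clearly nondecreasing; the concave branch $r - (\alpha/2)r^\rho$ dominates the $\max$ exactly on the half-line $\{r \ge (\alpha/(1-\beta))^{1/(1-\rho)}\}$, where its derivative $1 - (\alpha\rho/2)r^{\rho-1}$ is bounded below by $1 - \rho(1-\beta)/2 > 0$. For $\rho \ge 1$, the branch $r - (\alpha/2)r^\rho$ ceases to be monotone on all of $(0,\infty)$; adjoining the extra linear branch $(1-1/\rho)r$ in the definition of $\hat h$ restores global monotonicity, because the new linear branch takes over precisely at the radius where the concave branch begins to decrease. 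This is the same mechanism that drives the construction of $\hat\beta$ in Theorem \ref{th:central_recurrence}.

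The crux is the single-step estimate, which rests on two companion inequalities built into the definition of $\hat\delta_{k+1}$, namely $\delta_{k+1} \le (\alpha/2)\hat\delta_{k+1}^\rho$ and $\delta_{k+1} \le (1-\beta)\hat\delta_{k+1}/2$. When $r_k \ge \hat\delta_{k+1}$ both inequalities transfer with $r_k$ in place of $\hat\delta_{k+1}$, giving $r_k - \alpha r_k^\rho + \delta_{k+1} \le r_k - (\alpha/2)r_k^\rho$ and $\beta r_k + \delta_{k+1} \le (1+\beta)r_k/2$, whose maximum is exactly $h(r_k)$. When $r_k < \hat\delta_{k+1}$ I instead bound each branch against the noise floor: the linear branch gives $\beta r_k + \delta_{k+1} \le \beta\hat\delta_{k+1} + (1-\beta)\hat\delta_{k+1}/2 = (1+\beta)\hat\delta_{k+1}/2 \le \hat\delta_{k+1}$, and the concave branch is controlled by using the first companion inequality together with the nonnegativity of $\alpha r_k^\rho$ to absorb the noise against the scale set by $\hat\delta_{k+1}$. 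The argument for $\rho \ge 1$ is structurally identical with $\hat h$ replacing $h$ throughout.

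The main obstacle will be the subregime where $r_k$ sits simultaneously below $\hat\delta_{k+1}$ and above the crossover $(\alpha/(1-\beta))^{1/(1-\rho)}$, so that the concave branch of the recurrence dominates while the noise is comparable to $r_k$ itself. In that subregime one must invoke both defining inequalities of $\hat\delta_{k+1}$ in a coupled fashion, the $(2\delta_{k+1}/\alpha)^{1/\rho}$ part to control the nonlinear correction $\alpha r_k^\rho$ and the $2\delta_{k+1}/(1-\beta)$ part to control the leftover linear term, in order to conclude $r_{k+1} \le \hat\delta_{k+1}$ and close the induction.
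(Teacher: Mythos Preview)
Your overall plan matches the paper's: establish monotonicity of $h$ (resp.\ $\hat h$), prove a one-step bound of the form $r_{k+1}\le\max\{h(r_k),\hat\delta\}$ by splitting each branch in half and absorbing the noise into the slack, then unroll by induction using that $h$ commutes with $\max$. Your monotonicity check and your identification of the delicate subregime (concave branch active while $r_k$ is below the noise threshold) are both correct and go slightly beyond what the paper writes.

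There is, however, a systematic index slip that breaks the single-step estimate as written. The recurrence carries $\delta_k$, not $\delta_{k+1}$, so when you invoke the companion inequalities $\delta_{k+1}\le(\alpha/2)\hat\delta_{k+1}^{\,\rho}$ and $\delta_{k+1}\le(1-\beta)\hat\delta_{k+1}/2$ and conclude ``$r_k-\alpha r_k^\rho+\delta_{k+1}\le r_k-(\alpha/2)r_k^\rho$'', the left-hand side is not the quantity the recurrence hands you. The correct threshold to split on is $\hat\delta_k$: if $r_k\ge\hat\delta_k$ then $\delta_k\le(\alpha/2)r_k^\rho$ and $\delta_k\le(1-\beta)r_k/2$, giving $r_{k+1}\le h(r_k)$; if $r_k<\hat\delta_k$ then one checks directly (using, in your ``main obstacle'' subregime, that $r\mapsto r-\alpha r^\rho$ is increasing on $[(\alpha/(1-\beta))^{1/(1-\rho)},\infty)$ when $\rho<1$) that $r_{k+1}\le h(\hat\delta_k)$. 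Unrolling then gives $r_k\le\max\{h^{(k)}(r_0),h^{(k-1)}(\hat\delta_0),\ldots,\hat\delta_{k-1}\}$, i.e., the $\hat\delta$-sequence enters shifted by one relative to the statement. The paper's own combined bound \eqref{rel:combined_bounds_rho_(0,1)} carries the same index slip, so you are reproducing its argument faithfully, flaw included; once the shift is corrected, both proofs close.
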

\begin{proof}
	Starting from the recurrence we get:
	\begin{align*}
		r_{k+1} 
		& \le \max\{r_k -\alpha r_k^{\rho}, \beta r_k \}+ \delta_k  =  \max\{r_k -\alpha r_k^{\rho} + \delta_k, \beta r_k+ \delta_k \} \\
		& =  \max\left\{r_k - \frac{\alpha}{2} r_k^{\rho} + \left(\delta_k - \frac{\alpha}{2} r_k^{\rho}\right), \frac{1+\beta}{2} r_k+ \left(\delta_k - \frac{1-\beta}{2}r_k\right) \right\}
	\end{align*}
	If $\delta_k \le \min\left\{\frac{\alpha}{2} r_k^{\rho},\frac{1-\beta}{2}r_k\right\} $, or equivalently $r_k \ge \max\left\{\left(\frac{2\delta_k}{\alpha}\right)^{\frac{1}{\rho}},\frac{2\delta_k}{1-\beta}\right\}$, then we recover the recurrence:
	\begin{align}\label{rel:max1}
		r_{k+1} 
		& \le  \max\left\{r_k - \frac{\alpha}{2} r_k^{\rho} , \frac{1+\beta}{2} r_k \right\}
	\end{align}
	Otherwise, clearly 
	\begin{align}\label{rel:max2}
		r_k \le \max\left\{\left(\frac{2\delta_{k}}{\alpha}\right)^{\frac{1}{\rho}},\frac{2\delta_{k}}{1-\beta}\right\}
	\end{align}
	By combining both bounds \eqref{rel:max1} and \eqref{rel:max2}, we obtain:
	\begin{align}\label{rel:combined_bounds_rho_(0,1)}
		r_{k+1} 
		& \le  \max\left\{r_k - \frac{\alpha}{2} r_k^{\rho} , \frac{1+\beta}{2} r_k, \left(\frac{2\delta_{k+1}}{\alpha}\right)^{\frac{1}{\rho}},\frac{2\delta_{k+1}}{1-\beta} \right\}.
	\end{align}
Denote $h(r) = \max\left\{r - \frac{\alpha}{2} r^{\rho} , \frac{1+\beta}{2} r \right\}$ and $\hat{\delta}_k = \max\left\{\left(\frac{2\delta_k}{\alpha}\right)^{\frac{1}{\rho}},\frac{2\delta_k}{1-\beta} \right\}$. For $\rho \in (0,1)$, since both functions $ r \mapsto  r - \alpha r^{\rho}$ and $r \mapsto \frac{1+\beta}{2} r $ are nondecreasing, then $h$ is nondecreasing.
This fact allows to apply the following induction to \eqref{rel:combined_bounds_rho_(0,1)}:
	\begin{align}
	r_{k+1} 
	&\le \max\left\{h(r_k), \hat{\delta}_{k+1}\right\} \le  \max\left\{h \left( \Max{h(r_{k-1}), \hat{\delta}_{k}} \right), \hat{\delta}_{k+1}\right\}  \nonumber\\
	&\le \max\left\{h(h(r_{k-1})),h\left(\hat{\delta}_k \right),\hat{\delta}_{k+1}\right\} \nonumber\\
	& \cdots \nonumber\\
	&\le \max\left\{h^{(k+1)}(r_0)),h^{(k)}\left(\hat{\delta}_1 \right), \cdots, h\left(\hat{\delta}_k \right),\hat{\delta}_{k+1} \right\}. \label{rel:composition_induction}
	\end{align}
	In the second case when $\rho \ge 1$, the corresponding recurrence function $\hat{h}(r) = \max\left\{r - \frac{\alpha}{2} r^{\rho} , \left( 1- \frac{1}{\rho}\right) r, \frac{1+\beta}{2} r \right\}$ is again nondecreasing. Indeed, here $ r \mapsto  r - \alpha r^{\rho}$ is nondecreasing only when $r \le \left( \frac{1}{\alpha \rho}\right)^{\frac{1}{\rho-1}}$. However, if $r > \left( \frac{1}{\alpha \rho}\right)^{\frac{1}{\rho-1}}$,  then $\hat{h}(r) = \max\left\{1- \frac{1}{\rho}, \frac{1+\beta}{2}  \right\}r$ which is also nondecreasing. Thus we get our claim. The monotonicity of $\hat{h}$ and majorization $\hat{h}(r) \ge h(r)$, allow us to	obtain by a similar induction an analog relation to \eqref{rel:composition_induction}, which holds with $\hat{h}$.
\end{proof}

\vspace{0.1cm}


\begin{proof}[Proof of Theorem \ref{th:IPPconvergence}]

\noindent $(i)$ Denote $r_{k} = \dist_{X^*}(x^k)$. Since $\delta_k \le \delta_{k-1}$, then by rolling the recurrence in Lemma \ref{lemma:decrease} we get:
	\begin{align}
		r_{k+1} 
		&\le \max\left\{r_k - (\mu\sigma_F - \delta_k), \delta_k \right\} \nonumber \\
		&\le \max\left\{r_{k-1} - [2\mu\sigma_F - \delta_k- \delta_{k-1}], \delta_k + \delta_{k-1} - \mu\sigma_F, \delta_k \right\} \nonumber \\
		&\le \max\left\{r_{0} - \sum\limits_{i=0}^{k}(\mu\sigma_F - \delta_i), \delta_k + \max\left\{0, \delta_{k-1} - \mu\sigma_F, \cdots, \sum\limits_{i=0}^{k-1} (\delta_{i} - \mu\sigma_F) \right\} \right\} \label{main_recurrences}
	\end{align}
	
	\noindent By using the Lemma \ref{max_lemma}, then \eqref{main_recurrences} can be refined as: 
	\begin{align*}
		r_{k+1} 
		&\le \max\left\{r_{0} - \sum\limits_{i=0}^{k}(\mu\sigma_F - \delta_i), \delta_k + \max\left\{0, \delta_{k-1} - \mu\sigma_F, \cdots, \sum\limits_{i=0}^{k-1} (\delta_{i} - \mu\sigma_F) \right\} \right\} \\
		& \overset{\text{Lemma} \; \ref{max_lemma}}{\le} \max\left\{r_{0} - \sum\limits_{i=0}^{k}(\mu\sigma_F - \delta_i), \delta_k + \max\left\{0, \sum\limits_{i=0}^{k-1} \delta_{i} - \mu\sigma_F \right\} \right\} \\ 
		& \le \max\left\{r_{0} - \sum\limits_{i=0}^{k}(\mu\sigma_F - \delta_i), \max\left\{\delta_k, \mu\sigma_F+ \sum\limits_{i=0}^{k} \delta_{i} - \mu\sigma_F \right\} \right\} \\ 
		&= \max\left\{\max\{r_{0},\mu\sigma_F\} - \sum\limits_{i=0}^{k}(\mu\sigma_F - \delta_i), \delta_k  \right\}. 
	\end{align*}
	
	\vspace{5pt}
	
	\noindent $(ii)$ Denote $\theta = \frac{1}{(1 + 2 \sigma_F \mu)^{1/2}}$. From Lemmas \ref{lemma:decrease} and \ref{lemma:first_sequence} we derive that:
	\begin{align*}
		 \dist_{X^*}(x^{k}) 
		\le   \theta \dist_{X^*}(x^{k-1}) +   \delta_{k-1} 
		 \overset{\text{Lemma} \; \ref{lemma:first_sequence}}{\le}    \theta^{\frac{k-4}{2}} \left(\dist_{X^*}(x^{0}) +  \Gamma\right)   + \frac{\delta_{\lceil k/2 \rceil + 1}}{1-\theta}.
	\end{align*}
	
	\vspace{5pt}

	
	\noindent $(iii)$ First consider $\gamma \in [1,2)$ and let $h(r) = \Max{r - \frac{\mu\varphi(\gamma)\sigma_F}{2} r^{\gamma-1}, \frac{1+\sqrt{1-\varphi(\gamma)}}{2}r }$. Then by Lemmas \ref{lemma:decrease} and \ref{lemma:prelim_conv_rate}, we have that:
	\begin{align}\label{rel:prelim_conv_rate_1}
	r_{k+1} 
	&\le \max\left\{h^{(k)}(r_0)),h^{(k-1)}\left(\hat{\delta}_1 \right), \cdots, h\left(\hat{\delta}_{k-1} \right),\hat{\delta}_{k} \right\},
	\end{align}
	where $\hat{\delta}_k =  \max\left\{\left(\frac{2\delta_k}{\mu\varphi(\gamma)\sigma_F}\right)^{\frac{1}{\rho}},\frac{2\delta_k}{1-\sqrt{1-\varphi(\gamma)}} \right\}$. 
	Let some $u_k = h^{(k)}(u_0)$ and $\bar{\delta}_k = \Max{\hat{\delta}_k, h(\bar{\delta}_{k-1})}$. Then, since $h$ is nondecreasing, we get:
	\begin{align*}
		r_{k+1} 
		\le \max\left\{h^{(k)}(r_0)),h^{(k-1)}\left(\hat{\delta}_1 \right), \cdots, h\left(\hat{\delta}_{k-1} \right),\hat{\delta}_{k} \right\} = \Max{u_k,\bar{\delta}_k },
	\end{align*}
	Finally, by using the convergence rate upper bounds from the Theorem \ref{th:central_recurrence}, we can further find out an the convergence rate order of $u_k$.
	We can appeal to a similar argument when $\gamma \ge 2$, by using the nondecreasing function $\hat{h}(r) = \max\left\{h(r), \left( 1- \frac{1}{\rho}\right)r \right\}$, instead of $h$.

\end{proof}

\begin{theorem}\label{th:Holder_gradients}
	\label{th:complexity_inner}
	Let the function $f$ having $\nu-$Holder continuous gradients with constant $L_f$ and $\nu \in [0,1]$. Also let $\mu > 0, \alpha \le \Min{ \frac{\mu}{2}, \frac{\delta^{2(1-\nu)}}{4\mu L_f^2} }, z^0 \in \dom(\psi)$ and 
	\begin{align}\label{PsGM_routine_const_complexity}
		N \ge \left \lceil \frac{4\mu}{\alpha}\log\left( \frac{   \norm{z^0-\prox_{\mu}^F(x)}   }{\delta}\right) \right \rceil  
	\end{align} 
	then PsGM($z^0,x, \alpha, \mu, N$) outputs $z^N$ such that $\norm{z^{N}  - \prox_{\mu}^F(x)} \le \delta$.
	
	\noindent Moreover, assume particularly that $\nu = 0$ and $F$ satisfies WSM with constant $\sigma_f$. Also let $\alpha \in (0, \mu/2 ]$, $Q$ be a closed convex feasible set and $\psi = \iota_Q$ its indicator function. 
	If
	\begin{align}\label{PsGM_routine_const_complexity}
		\dist_{X^*}(x) \le \mu\sigma_F \qquad \text{and} \qquad	N \ge \left \lceil 2\left(\frac{\dist_{X^*}(x)}{\alpha L_f} \right)^2 \right \rceil  
	\end{align} 
	then PsGM($x,x, \alpha, \mu, N$) outputs $z^N$ satisfying $\dist_{X^*}(z^N) \le \frac{\alpha L_f^2}{2\sigma_F}$.
\end{theorem}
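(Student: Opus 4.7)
The plan is to view PsGM as the proximal gradient method applied to the inner objective
\begin{equation*}
g(z) := f(z) + \psi(z) + \frac{1}{2\mu}\|z-x\|^2,
\end{equation*}
which is $\frac{1}{\mu}$-strongly convex with unique minimizer $z^* := \prox_\mu^F(x)$. Write $h(z) := f(z) + \frac{1}{2\mu}\|z-x\|^2$ and treat it as the smooth part of $g$.

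For the first part, the key step is a one-iteration estimate obtained from the three-point inequality for the prox step together with strong convexity of $h$. Specifically, combining the optimality characterization of $z^{\ell+1}$, the strong-convexity inequality for $h$ at $z^\ell$, and the $\nu$-Holder upper bound $f(z^{\ell+1}) - f(z^\ell) - \langle f'(z^\ell),z^{\ell+1}-z^\ell\rangle \le \frac{L_f}{1+\nu}\|z^{\ell+1}-z^\ell\|^{1+\nu}$, and canceling the telescoping $h$-differences against $g(z^*)-g(z^{\ell+1})\le 0$, produces
\begin{equation*}
\|z^{\ell+1}-z^*\|^2 \le (1-\alpha/\mu)\|z^\ell-z^*\|^2 - (1-\alpha/\mu)\|z^{\ell+1}-z^\ell\|^2 + \frac{2\alpha L_f}{1+\nu}\|z^{\ell+1}-z^\ell\|^{1+\nu}.
\end{equation*}
Then apply Young's inequality with conjugate exponents $\frac{2}{1+\nu}$ and $\frac{2}{1-\nu}$ to absorb the Holder residual into $(1-\alpha/\mu)\|z^{\ell+1}-z^\ell\|^2$, leaving a remainder of order $(\alpha L_f)^{2/(1-\nu)}$ (up to a factor involving $(1-\alpha/\mu)^{-(1+\nu)/(1-\nu)}\le 2^{(1+\nu)/(1-\nu)}$ thanks to $\alpha\le\mu/2$). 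The step constraint $\alpha\le\delta^{2(1-\nu)}/(4\mu L_f^2)$ is precisely what guarantees this remainder is at most $\frac{\delta^2\alpha}{2\mu}$, so the recursion simplifies to $\|z^{\ell+1}-z^*\|^2 \le (1-\alpha/\mu)\|z^\ell-z^*\|^2 + \frac{\delta^2\alpha}{2\mu}$. Summing a geometric series and using $(1-\alpha/\mu)^N\le e^{-N\alpha/\mu}$ then yields $\|z^N-z^*\|^2 \le e^{-N\alpha/\mu}\|z^0-z^*\|^2 + \delta^2/2$, and the prescribed $N \ge \lceil(4\mu/\alpha)\log(\|z^0-z^*\|/\delta)\rceil$ drives the first term below $\delta^2/2$.

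For the second part, the assumption $\dist_{X^*}(x)\le\mu\sigma_F$ together with Lemma \ref{lemma:morenvgrad_bound_relating_with_distopt} forces $z^* = \pi_{X^*}(x)\in X^*$, so $F(z^*)=F^*$ and sharpness of $F$ applies around $z^*$. With $\psi=\iota_Q$ the iteration becomes a projected subgradient step on $g$. Using nonexpansiveness of $\pi_Q$ (valid since $z^*\in Q$), convexity of $f$, the polarization identity for the quadratic term, and $\|f'\|\le L_f$, derive
\begin{equation*}
\|z^{\ell+1}-z^*\|^2 \le \|z^\ell-z^*\|^2 - 2\alpha\sigma_F\dist_{X^*}(z^\ell) + \alpha^2 L_f^2,
\end{equation*}
after exploiting $\alpha\le\mu/2$ so that the cross term from the Moreau quadratic is non-positive. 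Telescoping over $\ell=0,\dots,N-1$ with $\|z^0-z^*\|=\dist_{X^*}(x)$ yields $\min_{\ell}\dist_{X^*}(z^\ell) \le \frac{\dist_{X^*}(x)^2}{2\alpha\sigma_F N} + \frac{\alpha L_f^2}{2\sigma_F}$, and the choice $N\ge\lceil 2(\dist_{X^*}(x)/(\alpha L_f))^2\rceil$ makes the first term absorbable into the stated final bound.

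The main obstacle is the Young-type calibration in the first part: one must choose the Young parameter so that the $\|z^{\ell+1}-z^\ell\|^2$ coefficient is exactly neutralized by the strong-convexity surplus while the leftover constant remains controllable by the hypothesis $\alpha\le\delta^{2(1-\nu)}/(4\mu L_f^2)$. For the second part the challenge is purely bookkeeping: checking that the Moreau regularization does not spoil the sharpness-based step decrease and that the starting point $z^0=x$ gives the right initial distance $\dist_{X^*}(x)$.
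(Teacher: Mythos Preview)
Your plan is workable, but for the first part it is considerably more elaborate than what the paper does, and the two routes are genuinely different.

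The paper never touches function values, the three-point inequality, or Young's inequality. It simply uses that $z^*:=\prox_\mu^F(x)$ is a fixed point of the PsGM map, i.e.\ $z^*=\prox_\alpha^\psi\big(z^*-\alpha[f'(z^*)+\tfrac{1}{\mu}(z^*-x)]\big)$, and applies nonexpansiveness of $\prox_\alpha^\psi$ to the difference of the two prox-arguments. Expanding the square and dropping the cross term $\langle f'(z)-f'(z^*),z-z^*\rangle\ge 0$ (monotonicity), then bounding $\|f'(z)-f'(z^*)\|\le L_f\|z-z^*\|^\nu$, gives directly
\[
\|z^+-z^*\|^2\le (1-\alpha/\mu)^2\|z-z^*\|^2+\alpha^2 L_f^2\|z-z^*\|^{2\nu}.
\]
From here the logic is clean: whenever $\|z-z^*\|\ge\delta\ge(2\alpha\mu L_f^2)^{1/(2(1-\nu))}$ (which the hypothesis $\alpha\le\delta^{2(1-\nu)}/(4\mu L_f^2)$ guarantees), the second term is dominated by $\tfrac{\alpha}{2\mu}\|z-z^*\|^2$, yielding $\|z^+-z^*\|^2\le(1-\alpha/(2\mu))\|z-z^*\|^2$. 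Thus linear contraction holds until the iterate enters the $\delta$-ball, and the iteration bound follows immediately. Your approach instead builds a recursion with an additive $\tfrac{\alpha}{2\mu}\delta^2$ remainder at every step and then sums a geometric series; this also lands at the right place, but the calibration of the Young parameter and the resulting constant tracking are delicate compared to the paper's two-line argument. What your route buys is that it never needs the case split ``$\|z-z^*\|\ge\delta$ versus $<\delta$''; what the paper's route buys is that it avoids function values and Young entirely.

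For the second part your outline matches the paper's proof essentially line for line: both use $\dist_{X^*}(x)\le\mu\sigma_F$ to conclude $z^*=\pi_{X^*}(x)\in X^*$, then derive the same recurrence $\|z^{\ell+1}-x^*\|^2\le\|z^\ell-x^*\|^2-\alpha\sigma_F\dist_{X^*}(z^\ell)+\alpha^2L_f^2$ (the paper obtains it from the same expanded square as above, choosing the subgradient $f'(x^*)=0$), and telescope.
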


\begin{proof}
	For brevity we avoid the counter $k$ and denote $z(x) := \prox_{\mu}^F(x), z^+ := \prox_{\mu}^\psi\left(z  - \alpha \left[f'(z)  +\frac{1}{\mu}(z - x) \right] \right)$. Recall the optimality condition:
	\begin{align}\label{rel:strong_convexity_inner}
		z(x) = \prox_{\mu}^\psi\left(z(x)  - \alpha \left[f'(z(x))  +\frac{1}{\mu}(z(x)-x) \right] \right).
	\end{align}
	By using $\nu-$Holder continuity then we get:
	\begin{align}\label{rel:bound_holdgrad}
		\norm{f'(z) -f'(z(x))} \le L_f \norm{z-z(x)}^{\nu}  \quad \forall z.
	\end{align}
	Then the following recurrence holds:
	\begin{align}
		&	\norm{z^+-z(x)}^2 \nonumber\\
		& \!\! \overset{\eqref{rel:strong_convexity_inner}}{=} \!\!\left\| \prox_{\mu}^\psi \!\!\left( z\!\! - \!\!\alpha \left[f'(z) \!+\! \frac{1}{\mu}(z\!\!-\!\!x) \right] \right) \!\!-\!\! \prox_{\mu}^\psi \left( z(x) \!\!-\!\! \alpha \left[f'(z(x))\! +\! \frac{1}{\mu}(z(x) \!\!-\!\! x) \right] \right) \right\|^2 \nonumber\\
		& \le \left\| \left(1 - \frac{\alpha}{\mu}\right)(z-z(x)) + \alpha[f'(z(x)) - f'(z)] \right\|^2 \nonumber\\
		&= \left(1 - \frac{\alpha}{\mu}\right)^2\norm{z - z(x)}^2 - 2\alpha \left(1 - \frac{\alpha}{\mu}\right) \langle f'(z) - f'(z(x)), z - z(x)\rangle \nonumber\\ 
		&	\hspace{6cm} + \alpha^2\norm{f'(z) -f'(z(x)) }^2 \label{rel:subgrad_prelim_recurrence}\\
		&\overset{\eqref{rel:bound_holdgrad}}{\le} \left(1 - \frac{\alpha}{\mu}\right)^2\norm{z - z(x)}^2 + \alpha^2 L_f^2\norm{ z-z(x)}^{2\nu}.\nonumber
	\end{align}
	Obviously, a small stepsize $\alpha < \mu$ yields $\left(1 - \frac{\alpha}{\mu}\right)^2 \le 1 - \frac{\alpha}{\mu}$.
	If the squared residual is dominant, i.e.
	\begin{align}\label{rel:linconv_zone}
		\norm{z - z(x)} \ge \delta^{} \ge \left(2\alpha\mu L^2 \right)^{\frac{1}{2(1-\nu)}},
	\end{align}
	then:
	\begin{align}\label{rel:GMinner_linconv}
		\norm{z^+ - z(x)}^2 \le \left(1 - \frac{\alpha}{2\mu} \right) \norm{z-z(x)}^2.
	\end{align}
	By \eqref{rel:linconv_zone},  this linear decrease of residual stop when $ \norm{z-z(x)} \le \delta$, which occurs after at most
	$\left \lceil \frac{4\mu}{\alpha}\log\left( \frac{   \norm{z^0-z(x)}   }{ \delta   } \right) \right \rceil$ PsGM iterations.

	\vspace{5pt}
	
	\noindent To show the second part of our result we recall that the first assumption of \eqref{PsGM_routine_const_complexity} ensures $z(x) = \pi_{X^*}(x)$.
	By using \eqref{rel:subgrad_prelim_recurrence} with chosen subgradient $f'(x^*) = 0$, then the following recurrence is obtained:
	\begin{align}
		\norm{z^{\ell+1} \!-\! x^*}^2  
		&\!=\! \left(1 \!-\! \frac{\alpha}{\mu}\right)^2\norm{z^\ell \!-\! x^*}^2 \!-\! 2\alpha \left(1 \!-\! \frac{\alpha}{\mu}\right) \langle f'(z^\ell), z^\ell \!-\! x^*\rangle \!+\! \alpha^2\norm{f'(z^\ell)}^2 \nonumber\\
		& \le \norm{z^\ell - x^*}^2 - 2\alpha \left(1 - \frac{\alpha}{\mu}\right)\sigma_F \dist_{X^*}(z^\ell) + \alpha^2 L_f^2 \nonumber\\
		& \le \norm{z^\ell - x^*}^2 - \alpha \sigma_F \dist_{X^*}(z^\ell) + \alpha^2 L_f^2, \label{rel:reccurence_wsm}
	\end{align}
	where $x^* = \pi_{X^*}(x)$ and in the last inequality we used $\langle f'(z^t), z^t - x^*\rangle \ge F(z^t) - F^* \ge \sigma_F \dist_{X^*}(z^t) $. If $\dist_{X^*}(z^0) = \dist_{X^*}(x) > \frac{\alpha L_f^2}{2\sigma_F}$, then as long as $\dist_{X^*}(z^\ell) > \frac{\alpha L_f^2}{2\sigma_F}$, \eqref{rel:reccurence_wsm} turns into:
	\begin{align}
		\dist_{X^*}^2(z^{\ell+1}) \le	\norm{z^{\ell+1} - x^*}^2 & \le \norm{z^\ell - x^*}^2 - \frac{\left(\alpha L_f\right)^2}{2} \nonumber\\
		& \le \dist_{X^*}(x)^2 - \ell \frac{\left(\alpha L_f\right)^2}{2}.
	\end{align}
	To unify both cases, we further express the recurrence as:
	\begin{align}
		\dist_{X^*}(z^+)^2 \le \Max{ \dist_{X^*}(x)^2 - \ell\frac{\left(\alpha L_f\right)^2}{2},  \frac{\alpha^2 L_f^4}{4\sigma_F^2} },
	\end{align}
	which confirms our above result.
	
\end{proof}

\begin{remark}
	As the above theorem states, when the sequence $x^t$ is sufficiently close to the solution set, computing $x^{t+1}$ necessitates a number of PsGM iterations dependent on $\dist_{X^*}(z^0)$. In other words, the estimate from \eqref{PsGM_routine_const_complexity} can be further reduced through a good initialization or restartation technique. Such a restartation, for the neighborhood around the optimum, is exploited by the Algorithm \ref{algorithm:IPP-SGM} below.
\end{remark}


\end{document}